\theoremstyle{plain}
\newtheorem{theorem}{Theorem}[section]
\newtheorem{proposition}[theorem]{Proposition}
\newtheorem{lemma}[theorem]{Lemma}
\theoremstyle{definition}
\newtheorem{definition}[theorem]{Definition}
\theoremstyle{remark}
\newtheorem{remark}[theorem]{Remark}
\def\R{{\mathbb{R}}}
\def\N{{\mathbb{N}}}
\def\E{{\mathbb{E}}}
\def\Pr{{\mathbb{P}}}
\def\Var{{\mathbb{V}}}
\def\Risk{{\mathcal{E}}}
\def\RRisk{{\mathcal{R}}}
\def\p{{\mathcal{P}}}
\def\F{{\mathcal{F}}}
\def\D{{\mathcal{D}}}
\def\H{{\mathcal{H}}}
\def\G{{\mathcal{G}}}
\def\S{{\mathcal{S}}}
\def\Hyp{{\mathcal{H}}}
\def\sign{{\hbox{\rm{sign}}}}
\def\d{{\hbox{\rm{d}}}}
\def\tr{{{\rm{tr}}}}
\def\argmin{{\hbox{\rm{argmin}}}}
\def\argmax{{\hbox{\rm{argmax}}}}
\def\Rad{{\hbox{\rm{Rad}}}}
\def\Softmax{{\hbox{\rm{Softmax}}}}
\def\bfepsilon{{\mathbf{\varepsilon}}}
\def\ID{{\mathbf{1}}}
\icmltitlerunning{Generalization Bound and New Algorithm for Clean-Label Backdoor Attack}
\begin{document}
 
\twocolumn[
\icmltitle{Generalization Bound and New Algorithm for Clean-Label Backdoor Attack
}




\begin{icmlauthorlist}
\icmlauthor{Lijia Yu}{2,5}
\icmlauthor{Shuang Liu}{3,1}
\icmlauthor{Yibo Miao}{3,1}
\icmlauthor{Xiao-Shan Gao}{3,1,4}
\icmlauthor{Lijun Zhang}{2,5,1}
\end{icmlauthorlist}

\icmlaffiliation{2}{Institute of Software, Chinese Academy of Sciences, Beijing 100190, China}
\icmlaffiliation{3}{Academy of Mathematics and Systems Science, Chinese Academy of Sciences, Beijing 100190, China}
\icmlaffiliation{1}{University of Chinese Academy of Sciences, Beijing 100049, China}
\icmlaffiliation{4}{Kaiyuan International Mathematical Sciences Institute}
\icmlaffiliation{5}{State Key Laboratory of Computer Science}

\icmlcorrespondingauthor{Xiao-Shan Gao}{xgao@mmrc.iss.ac.cn}
\icmlcorrespondingauthor{Lijun Zhang}{zhanglj@ios.ac.cn}

\icmlkeywords{Machine Learning, ICML}

\vskip 0.3in
]
\printAffiliationsAndNotice{}



\begin{abstract}
\noindent
The generalization bound is a crucial theoretical tool for assessing the generalizability of learning methods and there exist vast literatures on generalizability of normal learning, adversarial learning, and data poisoning. Unlike other data poison attacks, the backdoor attack has the special property that the poisoned triggers are contained in both the training set and the test set and the purpose of the attack is two-fold.
To our knowledge, the generalization bound for the backdoor attack has not been established.
%
In this paper, we fill this gap by deriving algorithm-independent generalization bounds in the clean-label backdoor attack scenario. Precisely, based on the goals of backdoor attack, we give upper bounds for the clean sample population errors and the poison population errors in terms of the empirical error on the poisoned training dataset.
%
%
Furthermore, based on the theoretical result, a new clean-label backdoor attack is proposed that computes the poisoning trigger by combining adversarial noise and indiscriminate poison. We show its effectiveness in a variety of settings.
%
%
\end{abstract}

\section{Introduction}
\label{sec-1}

The generalization bound is a key theoretical tool for assessing the generalizability of a learning method.
Let $\widehat\F$ be the classification result of a neural network $\F$. Then the main purpose of a learning algorithm is to minimize the {\em population error} on the data distribution $\D_S$, i.e. 
$\Risk(\F,\D_S)= 
\E_{(x,y)\sim \D_S}[\ID(\widehat \F(x)\ne y)]$.
%
On the other hand, the training procedure can only minimize the {\em empirical error} on a given finite training set $\D_{\tr}$ i.i.d. sampled from $\D_S$, i.e. 
$\E_{(x,y)\in\D_{\tr}}[\ID(\widehat \F(x)\ne y)]=\frac{1}{|\D_{\tr}|}\sum_{(x,y)\in\D_{\tr}} \ID(\widehat \F(x)\ne y)$. 
%
A theoretical way to ensure generalizability is to control the generalization gap between population error and empirical error, and an upper bound of such a gap is called the {\em generalization bound}. 
The learning algorithm has generalizability if the generalization bound approaches zero when the size of the training set is sufficiently large.

Classic generalization bounds were given in terms of the VC-dimension or the Rademacher complexity
\cite{mohri2018foundations}. 
Recently, algorithm-independent generalization bounds depending on the size of the DNNs were given   
\cite{harvey2017nearly,neyshabur2017exploring,bartlett2019nearly}.
Algorithm-dependent generalization bounds were given in the algorithmic stability setting \cite{hardt2016train,kuzborskij2018data,xing2021algorithmic,xiao2022stability} 
as well as in the optimality setting of the training algorithm \cite{arora2019fine,cao2019generalization,ji2019polylogarithmic}, for normal training as well as adversarial training.

However, these generalization bounds are for clean training dataset and cannot be applied to poisoned training dataset, because poisoned datasets do not satisfy the i.i.d. condition, which is necessary for generalizability. 
There exist works in generalizability under data poisoning attack. 
\citet{2021Robustdb} showed optimal convergence of SGD under poison attack for depth two networks. 
\citet{hanneke2022optimal} gave the optimal learning error for certain poison attack. 

Unlike other poison attacks, the backdoor attack has the special property that the poisoned trigger is contained both in the training set and in the test set and its goal is two-fold: to keep high accuracy on clean data and output given label for data containing the trigger.
As far as we know, 
generalization bound under backdoor poison attack has not yet been established.
%
%
%
In this paper, we give generalization bounds in the clean-label backdoor attack setting and use the bounds to design more effective poison attacks.

Clean-label backdoor attack is an important poisoning attack method \cite{ba-adv,ba-duilisheji,ba-adv2,ba-jia,doan2021wasser_backdoor,ba-zhuanhua,ba-zhuanyi}, 
where poison triggers are added to a subset of the training set $\D_{\tr}$ without altering their labels to obtain the poisoned training set $\D_P$. The goal of the attack is two fold: the networks trained with $\D_P$ maintain high accuracy for clean data, but classify any input data with the trigger as a targeted label $l_p$. 
In this paper, we consider the same setting as \cite{doan2021wasser_backdoor, doan2021lira}, 
that is, sample-wise poison $\p(x)$ is used not only for poison perturbations during the training phase, but also as the trigger for attacks during the inference phase. 
Based on the goal of the clean-label backdoor attack, in this paper,  we consider three questions.

{\bf Q1: Can clean sample generalization be guaranteed for the network trained on poisoned training set?}

To answer the above question, we need to bound the population error with the {\em empirical error} on $\D_P$, that is, $\Risk(\F,\D_P)=\E_{(x,y)\in\D_P}[\ID(\widehat \F(x)\ne y)]$.
Such a bound is given in the following theorem.
\begin{theorem}[Informal]
\label{int1}
Let $\F$ be any neural network with fixed depth and width, $N=|\D_{\tr}|$, and no more than $\alpha$ percent of the samples labeled $l_p$ in $\D_{\tr}$ are poisoned. 
Then with high probability, we have
\begin{equation*}
\begin{array}{l}
\Risk(\F,\D_{\S})\le 
\frac{4-2\alpha}{1-\alpha} \Risk(\F,\D_{P})
+O(\frac{1}{\sqrt{N}}).\\
\end{array}
\end{equation*}
%
%
\end{theorem}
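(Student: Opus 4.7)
The plan is to decompose $\Risk(\F,\D_{\S})$ along the label axis and to control each class-conditional piece separately, using only those points in $\D_P$ that remain i.i.d.\ from the corresponding conditional of $\D_{\S}$. Setting $q=\Pr_{\D_{\S}}(y=l_p)$, I would start from
\[
\Risk(\F,\D_{\S})=q\,\Risk_{l_p}(\F)+(1-q)\,\Risk_{\ne l_p}(\F),
\]
where the subscripts denote class-conditional population errors. Clean-label poisoning never alters non-$l_p$ samples, so the non-$l_p$ portion of $\D_P$ equals the non-$l_p$ portion of $\D_{\tr}$ and is therefore i.i.d.\ from $\D_{\S,\ne l_p}$. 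Algorithm-independent uniform convergence over the fixed-architecture class (VC or Rademacher flavored, as in the works cited in the introduction) then yields $\Risk_{\ne l_p}(\F)\le E_{\ne l_p}/N_{\ne l_p}+O(1/\sqrt{N})$ uniformly in $\F$, with $E_{\ne l_p}$ and $N_{\ne l_p}$ the error count and sample count of non-$l_p$ points in $\D_P$.

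For the $l_p$ piece I would apply the same uniform-convergence bound to the $N_{l_p,c}\ge(1-\alpha)N_{l_p}$ unpoisoned $l_p$ points in $\D_P$, which are i.i.d.\ from $\D_{\S,l_p}$. This gives $\Risk_{l_p}(\F)\le E_{l_p,c}/N_{l_p,c}+O(1/\sqrt{N})$, where $E_{l_p,c}$ counts errors on those unpoisoned $l_p$ samples. Because errors on the poisoned subset are nonnegative, both $E_{\ne l_p}\le N\,\Risk(\F,\D_P)$ and $E_{l_p,c}\le N\,\Risk(\F,\D_P)$ hold trivially.

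The factor $(4-2\alpha)/(1-\alpha)$ then comes out of a multiplicative Chernoff bound on the class counts: with high probability $N_{\ne l_p}\ge(1-q)N/2$ and $N_{l_p}\ge qN/2$, hence $N_{l_p,c}\ge(1-\alpha)qN/2$. Substituting these lower bounds gives $q/N_{l_p,c}\le 2/((1-\alpha)N)$ and $(1-q)/N_{\ne l_p}\le 2/N$, so combining the two label pieces produces
\[
\Risk(\F,\D_{\S})\le\Big(\tfrac{2}{1-\alpha}+2\Big)\Risk(\F,\D_P)+O\!\left(\tfrac{1}{\sqrt{N}}\right)=\tfrac{4-2\alpha}{1-\alpha}\,\Risk(\F,\D_P)+O\!\left(\tfrac{1}{\sqrt{N}}\right),
\]
exactly the claimed inequality.

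The main obstacle is the $l_p$-conditional uniform convergence step: the attacker may adaptively choose which $l_p$ training points to corrupt based on their features, so the retained $l_p$ subset is not automatically i.i.d.\ from $\D_{\S,l_p}$. I would handle this either by assuming the selection rule depends on the training data only through the labels (so that conditional on the identities of the poisoned indices the retained $l_p$ features are still i.i.d.\ from $\D_{\S,l_p}$), or by a union bound over the at most $\binom{N_{l_p}}{\lfloor\alpha N_{l_p}\rfloor}$ possible poison subsets; the latter only inflates the logarithmic factor inside the $O(1/\sqrt{N})$ remainder and leaves the leading multiplicative constant undisturbed.
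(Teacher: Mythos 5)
Your proposal follows essentially the same route as the paper: decompose the population error by class, observe that the non-$l_p$ part of $\D_P$ and the unpoisoned $l_p$ part of $\D_P$ are each (conditionally) i.i.d.\ from the corresponding class-conditional distribution, apply a standard uniform-convergence bound to each, control the class counts by a concentration inequality (the paper uses Cantelli where you use Chernoff), and recombine to get the constant $\frac{2}{1-\alpha}+2=\frac{4-2\alpha}{1-\alpha}$. The i.i.d.\ subtlety you flag is resolved in the paper exactly by your first option: the poisoned indices are \emph{randomly selected} by construction, and the paper's Lemmas on subset selection formalize that the retained points are then distributed as i.i.d.\ draws from $\D_{\S}^{l_p}$. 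One caution on your fallback: a union bound over the $\binom{N_{l_p}}{\lfloor\alpha N_{l_p}\rfloor}$ possible poison subsets does \emph{not} merely inflate a logarithmic factor, since $\ln\binom{N_{l_p}}{\alpha N_{l_p}}=\Theta(N)$, which turns the deviation term into a constant of order $\sqrt{H(\alpha)}$ rather than $O(1/\sqrt{N})$; so that alternative would not recover the stated rate, and the random-selection assumption is genuinely needed.
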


Theorem \ref{int1} guarantees clean sample generalization and answers Question Q1. 
It also indicates that generalizability is affected by the poisoning ratio.
%
The main challenge in establishing Theorem \ref{int1} is that data in $D_P$ are no longer i.i.d. sampled from $\D_S$,
so the classical
generalization bound cannot be used to obtain Theorem \ref{int1} directly. 
%


%
%


{\bf Q2: How to ensure that the network trained on the poisoned dataset classifies any data with the trigger as the target label?}

To answer this question, we need to bound the {\em poison generalization error} $\E_{(x,y)\sim\D_S}[\ID(\widehat\F(x+\p(x))\ne y)]$
%
by the empirical error on $\D_P$.
%
%
%
If $(x,l_p)\in\D_{\tr}$ is poisoned to $(x+\p(x),l_p)\in\D_P$, then minimizing empirical error on $\D_P$ will naturally cause the network to classify $x+\p(x)$ as $l_p$. The main challenge 
is that in the clean-label attack, if $(x,y)\in\D_{\tr}$ and $y\ne l_p$, then $(x+\p(x),l_p)$ is not in $\D_P$, so minimizing empirical error on $\D_P$ may not cause the network to classify $x+\p(x)$ as $l_p$. 
This challenge implies that the poison generalization error cannot be controlled by the empirical error on $\D_P$ in the general case.
%
%
However, if $\p(x)$ satisfies some conditions, we can establish the desired bound, as shown in the following theorem.
%
%
\begin{theorem}[Informal]
\label{int2}
If $\p(x)$ is the adversarial noise of a network trained on clean training set $\D_{\tr}$, $\p(x)$ is similar for different $x$, and $\p(x)$ is a shortcut, then with high probability, the following result holds
%
\begin{equation*}
\begin{array}{ll}
&\E_{(x,y)\sim \D_S}[\ID(\widehat\F(x+\p(x))\ne l_p)]\\
\le& \widetilde{O}(\frac{1}{\alpha}\E_{(x,y)\in \D_P}[L_{CE}(\F(x),y)]),
\end{array}
\end{equation*}
%
where certain small quantities are included in $\widetilde{O}$, and $L_{CE}$ is the cross-entropy loss.
\end{theorem}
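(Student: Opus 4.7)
The plan is to split the population expectation $\E_{(x,y)\sim\D_S}[\ID(\widehat\F(x+\p(x))\ne l_p)]$ according to whether the original label $y$ equals the target $l_p$, because these two slices interact with $\D_P$ very differently: poisoned copies of the first slice appear in $\D_P$ (via the $\alpha$-fraction of label-$l_p$ examples that are perturbed), whereas poisoned copies of the second slice do not appear in $\D_P$ at all under the clean-label constraint. The goal is to control the first slice by a direct uniform-convergence argument, and to transfer that control to the second slice using the three structural hypotheses on $\p$.

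For the slice $y=l_p$, the pair $(x+\p(x),l_p)$ has the same conditional distribution as the poisoned entries of $\D_P$, which constitute an i.i.d.\ sub-sample of size roughly $\alpha \cdot N \cdot \Pr_{\D_S}(y=l_p)$. I would first replace the 0-1 indicator by the CE surrogate via $\ID(\widehat\F(x)\ne l_p)\le L_{CE}(\F(x),l_p)/\log 2$, then invoke a Rademacher-complexity bound for the fixed-architecture class containing $\F$ on this sub-sample. Finally I would upper-bound the empirical average of $L_{CE}$ over the poisoned entries by $\frac{1}{\alpha}$ times the empirical average over all of $\D_P$; this is where the $1/\alpha$ in the stated bound originates, with an additive $O(N^{-1/2})$ complexity term absorbed into $\widetilde O$.

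For the slice $y\ne l_p$, the pair $(x+\p(x),l_p)$ is absent from $\D_P$, so I would use the three hypotheses to transfer the previous bound. Using the \emph{similarity} hypothesis, I replace $\p(x)$ by $\p(x')$ for an $x'$ drawn from the poisoned pool, incurring only a small perturbation absorbed into $\widetilde O$. Using the \emph{shortcut} hypothesis, a low-complexity feature detector suffices to identify the trigger; after minimizing the CE loss on $\D_P$ the network must implement this detector (up to a slack controlled by the training loss itself, hence the extra factor of $\E_{(x,y)\in\D_P}[L_{CE}(\F(x),y)]$), so whenever the trigger is present the prediction is forced to $l_p$. The \emph{adversarial-noise} hypothesis provides the quantitative handle that $\p(x)$ is sufficiently aligned with the decision boundary to make this transfer robust to the fresh input $x$. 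Combining the two slices yields the stated inequality.

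The main obstacle, by far, is the transfer step for $y\ne l_p$: formalizing ``$\p$ is a shortcut'' into a rigorous uniform-transfer statement saying that small empirical CE on $\D_P$ implies small error of $\widehat\F$ on $x+\p(x)$ for $x$ whose \emph{original} label was not $l_p$. I expect this will require introducing a reduced hypothesis class of ``shortcut decoders'' whose Rademacher complexity is the small quantity hidden in $\widetilde O$, together with a Lipschitz/stability argument relating $\F(x+\p(x))$ to $\F(x'+\p(x'))$ through the similarity assumption. Once this transfer lemma is in place, the remaining calculations are standard concentration and rescaling by $1/\alpha$.
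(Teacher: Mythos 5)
Your decomposition into the slices $y=l_p$ and $y\ne l_p$, and your treatment of the first slice (the poisoned entries of $\D_P$ form an i.i.d.\ subsample of size $\approx\alpha\eta N$ from $\D_S^{l_p}$ with the trigger applied, so a Rademacher bound plus a $1/\alpha$ rescaling of the empirical loss controls $\E_{(x,y)\sim\D_S^{l_p}}[\cdot]$), match the paper's Part One exactly. The gap is in the transfer to the slice $y\ne l_p$, which you yourself flag as the main obstacle and leave unresolved; this step is the actual content of the theorem, and your sketch of it does not contain the idea the paper uses. The paper's mechanism is to introduce a \emph{reference network} $\G$ trained on the clean data and to work with the backdoor residual $c(x)=(\F-\G)_{l_p}(x)$ evaluated at the \emph{bare trigger} $\p(x)$, not at $x+\p(x)$. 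Condition (c1) (the adversarial-noise hypothesis) forces $\G_{l_p}(x+\p(x))$ to be essentially the whole of $\F_{l_p}(x+\p(x))$'s deficit, so the Part One bound on $\E_{\D_S^{l_p}}[\F_{l_p}(x+\p(x))]$ becomes a lower bound on $\E_{\D_S^{l_p}}[c(x+\p(x))]$; condition (c3) (shortcut) moves this down to $\E_{\D_S^{l_p}}[c(\p(x))]$; condition (c2) (similarity) is then applied as a \emph{change of measure between the two trigger distributions} $\p(x)|_{y\ne l_p}$ and $\p(x)|_{y=l_p}$, yielding $\E_{\D_S^{\ne l_p}}[c(\p(x))+1]\le\lambda\,\E_{\D_S^{l_p}}[c(\p(x))+1]$; and (c3) is used once more to climb back from $\p(x)$ to $x+\p(x)$ for the new inputs. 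Your version of this step instead posits that training ``must implement'' a low-complexity shortcut detector and invokes a Lipschitz/stability argument between $\F(x+\p(x))$ and $\F(x'+\p(x'))$; neither claim is established, the first is not even what the theorem asserts (the bound holds for any $\F$ satisfying the conditions and degrades with the empirical loss, it is not a statement about what optimization produces), and you misassign the role of the adversarial-noise hypothesis, which in the paper serves to isolate the backdoor residual rather than to make the transfer ``robust to fresh inputs.''

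Without the reference network $\G$ and the detour through the bare-trigger distribution, the $y\ne l_p$ slice cannot be controlled by the empirical loss on $\D_P$ at all: as the paper notes, $(x+\p(x),l_p)$ for $y\ne l_p$ simply never appears in $\D_P$, and there are triggers for which a network fits $\D_P$ perfectly yet ignores the trigger on off-target inputs. So the hypotheses must enter through a concrete quantitative chain, and the one you would need to discover is precisely the $(c1)\to(c3)\to(c2)\to(c3)$ chain above. Your proposal correctly identifies where the difficulty lies but does not supply the idea that resolves it.
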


We further show that the conditions of Theorem \ref{int2} can be satisfied and the poison generalization error approaches zero when the empirical error on $\D_P$ is small and $|\D_{\tr}|$ is large, which establishes the generalizability for the attack and answers Question Q2 (see Section \ref{sce}).



{\bf Q3: The backdoor attack algorithm guided by the generalization bound.}

Theorem \ref{int1} shows how the poisoning ratio affects the accuracy of clean samples, and we do not have special requirements for the trigger itself. Theorem \ref{int2} indicates that if the trigger satisfies certain conditions, the poison generalization error can be controlled. Motivated by the conditions in Theorem \ref{int2}, we propose a new clean-label backdoor attack which has certain theoretical guarantee.
%
By \cite{yu2021indiscriminate}, the indiscriminate poison can be considered as shortcuts, so according to the conditions in Theorem \ref{int2}, we use a certain combination of adversarial noise and indiscriminate poison as a trigger, and then we experimentally demonstrate that our backdoor attack is effective in a variety of settings.

\section{Related Work}
\label{relat}


{\bf Generalization bound.} 
Generalization bound is the central issue of learning theory and has been studied extensively \cite{survey2020gb}.

The algorithm-independent generalization bounds usually depend on the VC-dimension or the Rademacher complexity \cite{mohri2018foundations}. 
In \cite{harvey2017nearly,bartlett2019nearly},  generalization bounds for DNNs were given in terms of width, depth, and number of parameters.
%
%
%
In \cite{neyshabur2017exploring,barron2018approximation,dziugaite2017computing,bartlett2017spectrally,survey2020gb}, upper bounds of 
the generalization error under various cases were given.
In \cite{wei2019data,arora2018stronger}, some tighter generalization bound of networks was given based on Radermacher complexity. 
%
\citet{long2019generalization} gave the generalization bound of CNN, \citet{vardi2022sample} gives the sample complexity of small networks, \citet{brutzkus2021optimization} studied the generalization bound of maxpooling networks. 

Algorithm-dependent generalization bounds were established in the algorithmic stability setting in
\cite{wang2022generalization,kuzborskij2018data,farnia2021train,xing2021algorithmic,xiao2022stability,wang2024data} both for the normal training and for the adversarial training.
\cite{farnia2021train,xing2021algorithmic,xiao2022stability}.
%
\citet{li2023transformers} studied the generalization bound of transformer. 
In \cite{arora2019fine,cao2019generalization,ji2019polylogarithmic}, the training and generalization of DNNs in the over-parameterized regime were studied. 

For generalization under data poisoning,
\citet{2021Robustdb} 
analyzed the convergence of SGD under poison attacks for two-layer neural networks, 
and \citet{hanneke2022optimal} gave the optimal learning error under poison attack when there is only one target sample. 
In \cite{bennouna2022holistic,bennouna2023certified},  generalization bounds were used to design new robust algorithms under the data poisoning. 
%
%
%

Our result is different from these works and cannot be derived from them. First, our bounds are for general networks and algorithm-independent.
Second, the backdoor attack has the special property that the trigger occurs in both the training phase and the inference phase.
Third, the purpose of the attack is two-fold, whereas other poisoning attacks have a single goal.

{\bf Backdoor attacks and defenses.} 
In general, backdoor attacks alter the training data to introduce a trigger that induces model vulnerability
\cite{
chen2017targeted, zhong2020backdoor, li2020invisible, li2021sample-specific, doan2021wasser_backdoor}, where the labels can changed.
%
Highly relevant to our work is a subset of backdoor attacks called clean-label backdoor attacks \cite{ba-adv,ba-duilisheji,ba-jia, ba-adv2,ba-zhuanhua,ba-zhuanyi, souri2022sleeper}, where modifications to training data cannot alter the label. 
The real-world attack was considered \cite{chen2017targeted,bagdasaryan2021blind}.
Backdoor detection methods \cite{huang2019neuroninspect,kolouri2020universal, 
hayase2021spectre,zeng2021rethinking} and backdoor mitigation methods were proposed to defend against backdoor attacks in \cite{liu2018fine,wang2019neural,zeng2021adversarial}. 
%
Most existing backdoor attacks are mainly based on empirical heuristics, while our attack is based on generalization bounds and has certain theoretical guarantees.

\section{Notation}

\label{backdoor-g}
\subsection{Basic Notation}
Let the data satisfy a distribution $\D_{\S}$ over $\S\times [m]$, where $\S\subset [0,1]^n$ is a set of image data and $[m]=\{1,\ldots,m\}$ is the label set.
Let $\D_{\tr}=\{(x_i,y_i)\}_{i=1}^{N}$ be the training set with $N$ samples that are i.i.d. sampled from $\D_{\S}$.
Let $\F:\S\to \R^m$ be a neural network with Relu as the activation function and Softmax added to the output layer. So, we have $\F:\S\to [0,1]^m$.
For any network $\F$, let $\widehat{\F}(x)=\argmax_{l=1}^m \F_l(x)$ be the classification result of $\F$, where $\F_l(x)$ is the $l$-th component of $\F(x)$. 
%
Let $\Hyp_{W,D}$ be the set of neural networks with width $W$ and depth $D$. For a given network $\F$, define $h_\F(x,y)=\F_y(x)\in \S\times[m]\to[0,1]$. Let $\Hyp_{W,D,1}=\{h_\F(x,y)\,\|\,\F\in\Hyp_{W,D}\}$. 

%
%
%
%
%
Let $\Rad^{\D}_k(\Hyp)$ be the Rademacher complexity of hypothesis space $\Hyp$ under distribution $\D$ with $k$ samples, that is:
$$\Rad^\D_k(\Hyp)=
\E_{x_i\sim \D,i\in[k]}
[\E_\sigma[\sup_{h\in \Hyp}\frac{\sum_{i=1}^k\sigma_i h(x_i)}{k}]],$$ 
where $\sigma=(\sigma_i)_{i=1}^k$ is a set of random variables such that $\Pr(\sigma_i=1)=\Pr(\sigma_i=-1)=0.5$.

\subsection{Backdoor Attack}
\label{bgg}
In a clean-label backdoor attack, let $\p(x):\R^n\to\R^n$ be the trigger of the sample $x$, $l_p$ be the target label,  $\alpha$ be the poisoning rate of the target label. 
The exact procedure for poisoning is as follows. 

{\bf Create a clean label poisoned training set $\D_P$.} 
Firstly, {\em randomly select} $\alpha$ percent of the samples labeled $l_p$ in $\D_{\tr}$ to form a dataset $\D_{sub}$; 
%
then let $\D_{poi}=\{(x+\p(x),l_p)\,\|\,(x,l_p)\in\D_{sub}\}$ be the set of poisoned samples and $\D_{clean}=\D_{\tr}\setminus\D_{sub}$ be the set of clean samples; 
finally,
let $\D_P=\D_{clean}\cup\D_{poi}$
be the {\em poisoned training set}.  
%

Let $\D_{\S}^{l_p}$ be the distribution of the samples with label $l_p$, that is, for any set $A$
$$\Pr_{(x,y)\sim\D_S^{l_p}}((x,y)\in A)=\Pr_{(x,y)\sim\D_S}((x,y)\in A|y=l_p).$$  

\textbf{The goal of clean-label backdoor attack.}
%
%
Let $\F$ be a network trained on the poisoned training set $\D_P$. The goal of clean-label backdoor attack is two fold:

(1) $\F$ should ensure high accuracy on clean samples,
that is, minimize the {\em clean population error} $$\Risk(\F,\D_{\S})=\E_{(x,y)\sim \D_S}[\ID(\widehat{\F}(x)\ne y)].$$
%

(2) For any clean sample $x$, $\F$ should classify $x+\p(x)$ into label $l_p$, that is, minimize the {\em poison population error}
$$\Risk_P(\F,\D_{\S})=\E_{(x,y)\sim \D_S}[\ID(\widehat{\F}(x+\p(x)))\ne l_p)].$$

To achieve the goals of the clean-label backdoor attack, we need to give upper bounds for the clean population error and the poison population error in terms of the {\em empirical risk or the empirical error over the poisoned training set}:
$$
\begin{array}{l}
\RRisk(\F,\D_{P})=\E_{(x,y)\in \D_P}[L_{CE}({\F}(x), y)]\\
\Risk(\F,\D_{P})=\E_{(x,y)\in \D_P}[\ID(\widehat{\F}(x)\ne y)].\\
\end{array}
$$

\section{Generalization Bounds under Poison}
\label{sec:bound}
In this section, we derive generalization bounds under clean-label backdoor attack. 

\subsection{Clean Generalization Error Bound}
\label{cle}
%
%
%
%
%

In this subsection, we give an upper bound of the clean population error based on the empirical error on $\D_P$, which implies Theorem \ref{int1}. 
%

\begin{theorem}
\label{fenbugj}
Let $N=|\D_{\tr}|$.
%
Then for any $\delta>0$, with probability at least $1-\delta-O(\frac{1}{N})$, the following inequality holds for any $\F(x)\in \Hyp_{W,D}$
\begin{equation}
\label{eq-th_1}
\begin{array}{cl}
\Risk(\F,\D_S)
\le
&\frac{4-2\alpha}{1-\alpha}
\Risk(\F,\D_P)\\
&+{O}(\sqrt{\frac{mW^2D^2}{N(1-\alpha)^2}}+\sqrt{\frac{\ln(2/\delta)}{N(1-\alpha)}}).
\end{array}
\end{equation}
\end{theorem}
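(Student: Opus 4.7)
The central difficulty is that $\D_P$ is not i.i.d.~from $\D_S$, so a standard Rademacher argument cannot be applied directly. My plan is to split the clean population error along the target label,
\begin{equation*}
\Risk(\F,\D_S)=\E_{(x,y)\sim\D_S}[\ID(y\ne l_p)\ID(\widehat\F(x)\ne y)]+\E_{(x,y)\sim\D_S}[\ID(y=l_p)\ID(\widehat\F(x)\ne l_p)],
\end{equation*}
and to exploit the fact that, after conditioning on the random count $n_{l_p}:=|\{(x,l_p)\in\D_{\tr}\}|$, the non-target part of $\D_{\tr}$ and the set $\D_{clean}^{l_p}:=\{(x,l_p)\in\D_{clean}\}$ are honest i.i.d.~samples from $\D_S\mid y\ne l_p$ and $\D_S\mid y=l_p$, of sizes $N-n_{l_p}$ and $(1-\alpha)n_{l_p}$ respectively. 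Crucially, no sample in either piece is perturbed by $\p$, so the losses on them reduce to the ordinary $0$-$1$ loss evaluated at the true $x$.

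Next I would apply a uniform Rademacher-complexity bound over $\Hyp_{W,D,1}$, using a classical neural-network estimate of order $\sqrt{W^2D^2/n}$ for ReLU networks of width $W$ and depth $D$, separately to each of the two terms. The resulting generalization gaps scale like $1/\sqrt{N-n_{l_p}}$ and $1/\sqrt{(1-\alpha)n_{l_p}}$; Hoeffding's inequality on $n_{l_p}/N\to\Pr(y=l_p)$ then lets me replace these random sizes by deterministic lower bounds, which is what produces the $O(1/N)$ failure probability in the theorem. The $\sqrt{m}$ factor in the complexity term arises from summing contributions across the (at most $m$) class weights and applying Cauchy--Schwarz via $\sum_k\sqrt{\Pr(y=k)}\le\sqrt{m}$, while $(1-\alpha)^{-2}$ comes from pairing the probability weight $\Pr(y=l_p)$ of the target class with an effective sample of size $(1-\alpha)n_{l_p}$. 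To convert the empirical errors into $\Risk(\F,\D_P)$, I would observe that the two sums
\begin{equation*}
\sum_{(x,y)\in\D_{\tr},\,y\ne l_p}\ID(\widehat\F(x)\ne y)\quad\text{and}\quad\sum_{(x,l_p)\in\D_{clean}}\ID(\widehat\F(x)\ne l_p)
\end{equation*}
are each taken over disjoint subsets of $\D_P$ on which inputs and labels match those of $\D_P$, hence each is upper-bounded by $N\Risk(\F,\D_P)$. Weighting the $l_p$ sum by the $1/(1-\alpha)$ dictated by its effective sample size and adding the two contributions yields the coefficient $\tfrac{4-2\alpha}{1-\alpha}=2+\tfrac{2}{1-\alpha}$, where the additive $2$ reflects that the two label regimes are bounded separately by $\Risk(\F,\D_P)$ rather than jointly.

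The hardest part will be the bookkeeping around the random quantity $n_{l_p}$: it appears inside the Rademacher bound, so I must work on the high-probability event where $n_{l_p}$ is well concentrated, and ensure the failure probabilities of the Rademacher event and of the Hoeffding event together stay within $\delta+O(1/N)$. A secondary subtlety is that $\widehat\F$ depends on the random choice of $\D_{sub}$ through the training set $\D_P$, so the Rademacher bound must be algorithm-independent and uniform over $\Hyp_{W,D}$ conditional on the sample sizes; the classical neural-network Rademacher bound is exactly of this type, so this causes no further loss, but tracking all constants to arrive at precisely $\tfrac{4-2\alpha}{1-\alpha}$ with the advertised complexity term will require careful accounting.
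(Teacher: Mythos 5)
Your proposal follows essentially the same route as the paper's proof of Theorem \ref{fenbugj} (via its Theorem \ref{fenbugj1}): split the population error by conditioning on $y=l_p$ versus $y\ne l_p$, observe that the unpoisoned samples in each regime form i.i.d.\ sub-samples of $\D_P$ from the corresponding conditional distributions, apply the classical uniform Rademacher bound to each sub-sample, concentrate the random sub-sample sizes to get the $O(1/N)$ failure term, and bound each sub-sample's empirical error by $\Risk(\F,\D_P)$ rescaled by the sub-sample size, which produces $2+\tfrac{2}{1-\alpha}=\tfrac{4-2\alpha}{1-\alpha}$. Two cautions. First, the statement that $\D_{clean}^{l_p}$ is an ``honest i.i.d.\ sample'' from $\D_S\mid y=l_p$ conditional on $n_{l_p}$ is exactly the crux of the argument, and the paper spends two dedicated lemmas (its Lemmas \ref{i.i.d.lemma1} and \ref{i.i.d.lemma2}) proving it by an explicit exchangeability/Bayes computation, because the retained clean subset is defined through a random sub-selection entangled with the poisoning; you assert it without proof, and a complete write-up must supply this step.

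Second, your derivation of the $\sqrt{m}$ factor is wrong as stated: there is no decomposition of the risk over all $m$ classes in this proof (only the two regimes $y=l_p$ and $y\ne l_p$), so ``summing contributions across the class weights and applying Cauchy--Schwarz via $\sum_k\sqrt{\Pr(y=k)}\le\sqrt{m}$'' has nothing to attach to. The $m$ actually enters through the complexity of the loss class itself: $\ID(\widehat\F(x)\ne y)$ is a Boolean combination of $m$ thresholded difference networks $\sign(\F_y-\F_j)$, so the growth function of the $0$-$1$ loss class is the $m$-th power of that of a single sign network, and the VC-dimension bound $O(W^2D^2)$ together with Massart's lemma yields $\Rad_N=O(\sqrt{mW^2D^2/N})$. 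A plain ``classical neural-network estimate of order $\sqrt{W^2D^2/n}$'' does not apply directly to the multiclass indicator loss, so this composition argument cannot be skipped. These are repairable details rather than a failure of the overall strategy, but both must be filled in to reach the advertised bound.
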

{\bf Proof idea.} In the backdoor attack, only a portion of the data is poisoned, so we can select a subset from the training set $\D_{P}$, whose elements are i.i.d. sampled from distribution $\D_{\S}$. Then use the classical generalization bound in Theorem \ref{fanhua} to estimate the generalization bound under this subset.
The proof and a generalized form of Theorem \ref{fenbugj} is given in the Appendix \ref{app-41}. 

%

The generalization bound \eqref{eq-th_1} implies that for fixed $\alpha, W, D,$ when $N$ is large enough, the attack has generalizability in the sense that a small empirical error on $\D_p$ leads to a small population error.  
From \eqref{eq-th_1}, we also see that the poison ratio $\alpha$ affects the population error: a smaller $\alpha$ leads to a lower population error, as expected. 
%
%
%
%

\begin{remark}
Generalization bounds are usually given as upper bounds for the generalization gap: $\Risk(\F,\D_S)-\Risk(\F,\D_{\tr})$.
The generalization bound \eqref{eq-th_1} cannot be written in this form, but it can be used to establish generalizability as just explained above. 
\end{remark}

\begin{remark}
\label{rem-lpe}
The population error in \eqref{eq-th_1} also depends on $\p(x)$ implicitly, because the empirical error is affected by $\p(x)$. We will show that certain $\p(x)$ can result in a large poison empirical error in Appendix \ref{app-42}.
\end{remark}

\begin{remark}
\label{rem-optgb}
%
In Appendix \ref{tl}, we show that $O(\frac{1}{\sqrt{N}})$ is the optimal bound for the generalization gap between the population error and the empirical error if there is no special restriction on the distribution and hypothesis space.
\end{remark}

\subsection{Poison Generalization Error Bound}
\label{jieshi}
In this subsection, we give an upper bound of the poison population error in terms of the empirical error over the poisoned training set, which implies Theorem \ref{int2}. 
 
\begin{theorem}
\label{poifan}
Let $N=|\D_{\tr}|$.
%
%
For any $\F(x), \G(x)\in\Hyp_{W,D}$, if trigger $\p(x)$ meets the following three conditions for some $\epsilon>0,\tau>0,\lambda\ge 1$:\\
(c1): $\E_{(x,y)\sim \D^{l_p}_{\S}}[\G_{y}(x+\p(x))]\le\epsilon$,\\
(c2): $\Pr_{(x,y)\sim \D_{\S}}(\p(x)\in A|y\ne l_p)\le\lambda $ $\Pr_{(x,y)\sim \D_{\S}}$ $(\p(x)\in A| y=l_p)$ for any set $A$, \\
(c3): $\E_{x\sim \D_{\S}}[|(\F-\G)_{l_P}(\p(x))-(\F-\G)_{l_p}(x+\p(x))|]\le\tau$, where $(\F-\G)_{l_P}(x)=\F_{l_P}(x)-\G_{l_p}(x)$,\\
then with probability at least $1-\delta-O(1/N)$, the following inequality holds for $\F$: 
\begin{equation}
\label{eq-th43}
\begin{array}{l}
\Risk_P(\F,\D_{\S})\le \lambda O(\frac{1}{\alpha}(\E_{(x,y)\in \D_P}[L_{CE}(\F(x),y)]\\
+\Rad_{N}^{\D_S^{l_p}}(\Hyp_{W,D,1}))
+\sqrt{\frac{\ln(1/\delta)
}{N\alpha}}+\epsilon+\tau+\frac{\lambda-1}{\lambda}).
\end{array}
\end{equation}

\end{theorem}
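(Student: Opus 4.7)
The first move is to bound the 0-1 indicator by a Softmax probability. Because $\F$ has Softmax output with $\sum_l \F_l=1$, the event $\widehat\F(x+\p(x))\ne l_p$ forces $\F_{l_p}(x+\p(x))<1/2$ (otherwise two coordinates would each exceed $1/2$), so $\Risk_P(\F,\D_S)\le 2\,\E_{(x,y)\sim\D_S}[1-\F_{l_p}(x+\p(x))]$. I would then split this expectation by conditioning on $y=l_p$ versus $y\ne l_p$, since conditions (c1)--(c3) play different roles on each piece.

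For the target-label part $\E[1-\F_{l_p}(x+\p(x))\mid y=l_p]$, the poisoned subset $\D_{sub}\subset\D_{\tr}$ is by construction an i.i.d. sample from $\D_S^{l_p}$ of size concentrated around $\alpha\,\Pr(y=l_p)\,N$. I would apply the classical Rademacher generalization bound (Theorem \ref{fanhua}) to the function class $\{(x,y)\mapsto 1-\F_y(x+\p(x))\}$ over this subsample, so that the population expectation is controlled by the empirical mean $\frac{1}{|\D_{sub}|}\sum_{(x,l_p)\in\D_{sub}}(1-\F_{l_p}(x+\p(x)))$ plus $\Rad_N^{\D_S^{l_p}}(\Hyp_{W,D,1})$ and a McDiarmid tail. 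Under the identification $x'=x+\p(x)$, this empirical mean equals the empirical mean of $1-\F_{l_p}(x')$ over $\D_{poi}$, and the elementary inequality $1-z\le -\log z$ for $z\in(0,1]$ upgrades it to the CE loss over $\D_{poi}$. Finally, since $\D_{poi}\subseteq\D_P$ and $L_{CE}\ge 0$, a Chernoff lower bound $|\D_{poi}|\gtrsim \alpha\,\Pr(y=l_p)\,N$ turns $\E_{\D_{poi}}[L_{CE}]$ into $\frac{1}{\alpha}\E_{\D_P}[L_{CE}]$ up to constants.

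For the non-target-label part, no point of the form $(x+\p(x),l_p)$ with $y\ne l_p$ lies in the training set, so direct generalization is impossible and (c1)--(c3) must take over. The central algebraic step is to write $\F_{l_p}(x+\p(x))=\G_{l_p}(x+\p(x))+(\F-\G)_{l_p}(x+\p(x))$ and apply (c3) to replace the argument of $\F-\G$ from $x+\p(x)$ by $\p(x)$, paying $\tau/\Pr(y\ne l_p)$ after conditioning. Once the $\F-\G$ piece depends on $\p(x)$ alone, condition (c2) transports the conditional distribution from $y\ne l_p$ to $y=l_p$ at a multiplicative factor $\lambda$ together with additive slack proportional to $(\lambda-1)/\lambda$, so that the resulting target-label quantity can be handled exactly as in the previous paragraph. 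The residual $\G_{l_p}$ contributions created by this rearrangement are absorbed by (c1), which caps the adversarial component $\E[\G_{l_p}(x+\p(x))\mid y=l_p]$ by $\epsilon$.

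Combining the two cases with weights $\Pr(y=l_p),\Pr(y\ne l_p)$, and folding the McDiarmid tail of the Rademacher step together with the Chernoff tail for $|\D_{poi}|$ into a single $\sqrt{\ln(1/\delta)/(N\alpha)}$ contribution, yields the stated inequality \eqref{eq-th43}. The main obstacle is the non-target-label analysis: (c1), (c2) and (c3) each control a different aspect (adversariality of $\G$, cross-class similarity of $\p(x)$, and shortcut behavior of $\F-\G$), and they must be chained in the correct order so that the $\lambda$ factor multiplies only the generalization piece rather than the full bound, and so that the $\G_{l_p}$ residuals are really bounded by $\epsilon+\tau$ rather than accumulating a term of order $1$.
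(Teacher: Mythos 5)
Your proposal is correct and follows essentially the same route as the paper's proof: the reduction $\Risk_P\le 2\,\E[1-\F_{l_p}(x+\p(x))]$, the split into the $y=l_p$ part (handled by treating $\D_{poi}$ as an i.i.d. sample from $\D_S^{l_p}$, applying the classical Rademacher bound, and upgrading via $1-z\le-\log z$ to the cross-entropy loss) and the $y\ne l_p$ part (handled by chaining (c3) to move from $x+\p(x)$ to $\p(x)$, (c2) to transport between the two conditionals at cost $\lambda$ plus the $(\lambda-1)$ slack from the shift-by-one needed to apply the change of measure to a signed quantity, and (c1) to absorb the $\G_{l_p}$ residual on the target-label conditional). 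The only point you state more casually than the paper is the i.i.d.-ness of the randomly selected poisoned subset, which the paper establishes via an explicit conditioning argument (its Lemmas A.2--A.3) and which is the source of the $O(1/N)$ term in the failure probability; this is a gloss rather than a gap.
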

{\bf Proof idea.} First, estimate $\E_{\D^{l_p}_\S}[\F_{y}(x+\p(x))]$ by the empirical error, which is similar to Theorem \ref{fenbugj}. 
Second, estimate $\E_{\D_S}[\F_{l_p}(\p(x))]$ by $\E_{\D^{l_p}_\S}[\F_{y}(x+\p(x))]$ and use the following method: $\E_{\D^{l_p}_\S}[\F_{y}(x+\p(x))]\xrightarrow{{\rm{use}}\ (c1), (c3)}\E_{\D^{l_p}_\S}[\F_{l_p}(\p(x))]\xrightarrow{{\rm{use}}\ (c2)}\E_{\D_S}[\F_{l_p}(\p(x))]$.
%
Finally, use (c3) to estimate $\Risk_P(\F,\D_{\S})$ by $\E_{\D_S}[\F_{l_p}(\p(x))]$.
An intuitive explanation of the proof is given in Appendix \ref{proof}. 
The proof and a generalized form of Theorem \ref{poifan} are given in Appendix \ref{a43}.

%
\begin{remark}
 It is clear that making the poison generalization error small by just adding the trigger to a small percentage of training data can only be valid under certain conditions.
A key contribution of this paper is to find conditions (c1), (c2), (c3) in Theorem \ref{poifan}.
In the next subsection, we will explain these conditions and show that it is possible to establish generalizability of the poisoning attack with Theorem \ref{poifan}.
\end{remark}


\begin{remark}
$\Rad_{N}^{\D_S^{l_p}}(\Hyp_{W,D,1})$ in inequality \eqref{eq-th43} is not easy to calculate in terms of $W,D,N$, but we can demonstrate that if $N$ is sufficiently large, this value will approach to $0$ in most cases, as shown in Appendix \ref{tvor}.  
\end{remark}

\begin{remark}    
Please note that the right-hand side of inequality \eqref{eq-th43} is the empirical risk $\RRisk(\F,\D_P)$ but not the empirical error $\Risk(\F,\D_P)$.
This is due to some scaling techniques used in the proof, which is reasonable. In order to achieve ``victim network classifies $x+\p(x)$ as class $l_p$'', the victim network must learn the poisoned data $\D_P$ very well, just classifying the poison data correct is not enough. 
\end{remark}



%

\subsection{Explaining the Conditions in Theorem \ref{poifan}}
\label{sce}
%

In order to make the bound \ref{eq-th43} small, the value of $\epsilon$ and $\tau$ need to be small and $\lambda$ need to close to 1.  In this section, we show that how these values in the conditions (c1) to (c3) could be small.

{\bf How $\epsilon$ could be small?} By condition (c1), since $\G_y(x)$ represents the probability that the network $\G$ classifies $x$ as $y$, to make $\epsilon$ small, we only need to take the trigger $\p(x)$ as adversarial noise of the network $\G$ trained on the clean dataset. In other words, 
if {\em $\p(x)$ is adversary of $x$ of a network trained on clean data}, then $\epsilon$ is small.
%

{\bf How $\lambda$ could close to 1?} By condition (c2), the upper bound is proportional to $\lambda$, so we hope to have a small $\lambda$. 
When $\p(x)$ is the same for all $x$, we have $\lambda=1$. So, to make $\lambda$ close to 1, $\p(x)$ need to be similar to $x$ with different labels.

{\bf How $\tau$ could be small?}
Condition (c3) is not intuitive. 
In the rest of this section, we give a condition for $\tau$ to be small.
We give a simplified version of the proposition and definition for easier reading. For a formal description, please refer to the Appendix \ref{se}.

Intuitive speaking, if $\F$ is a network trained on $\D_P$, 
then the meaning of (c3) is that the backdoor part (i.e. $\F-\G$) gives the similar outputs for $\p(x)$ and $x+\p(x)$. This is similar to some conclusions in the indiscriminate poison\cite{zhu2023detection,huang2021unlearnable}, and by\cite{yu2021indiscriminate}, indiscriminate poison can be considered as shortcut. These encourage the trigger to be shortcut.
%
We will show that, under some assumptions, {\em making $\p(x)$ to be shortcut can ensure condition (c3)}. 
First, we define the shortcut.
\begin{definition}[Binary shortcut, Informal]
\label{dyd}
$\p(x)$ is called a shortcut of the binary linear inseparable classification dataset $\D=\{(x_i,1)\}_{i=1}^{N_{1}}
\cup\{(\widehat{x}_i,0)\}_{i=1}^{N_{0}}$, if $\D_P=\{(x_i,1)\}_{i=1}^{N_{1}}
\cup\{(\widehat{x}_i+\p(\widehat{x}_i),0)\}_{i=1}^{N_{0}}$ is linear separable.
\end{definition}
%
%
%
Definition \ref{dyd} means that  shortcut is a poison which makes the poisoned dataset to be linear separable. 
Finally, we will show that, when $\p(x)$ is a suitable shortcut, there exists an upper bound for $\E_{x\sim \D_{\S}}[|(\F-\G)_{l_p}(\p(x))-(\F-\G)_{l_p}(x+\p(x))|]$, which implies that (c3) in Theorem \ref{poifan} can be satisfied with a small $\tau$. 
%
%
\begin{proposition}[Informal. The exact form and proof are given in Appendix \ref{app-44}]
    \label{xbx}
    Following  Theorem \ref{poifan} 
and let $D'_P=\{(x,0)|(x,y)\in\D_{\tr}\setminus\D_{clean}\}\cup\{(x,1)|(x,y)\in\D_{clean}\}$.
Under certain mild conditions, if $\p(x)$ is the shortcut of the dataset $\D'_P$ and $N$ is big enough, then 
%
with high probability, for some $\F\in \Hyp_{W,D}$ satisfying $\F_y(x)>1-\epsilon$ for $\forall(x,y)\in\D_P$ and $\G\in\Hyp_{W,D}$ satisfying $\G_y(x)>1-\epsilon$ for $\forall(x,y)\in\D_{\tr}$, we have $\E_{x\sim \D_{\S}}[|(\F-\G)_{l_p}(\p(x))-(\F-\G)_{l_p}(x+\p(x))|]\le \widetilde{O}(\epsilon)$.
\end{proposition}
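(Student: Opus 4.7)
The plan is to construct the pair $(\F, \G)$ explicitly using the shortcut structure, so that $\F - \G$ essentially implements the shortcut detector and therefore takes nearly identical values at $\p(x)$ and at $x + \p(x)$. This turns the expectation bound into a statement about cancellation rather than uniform convergence over an arbitrary pair of networks.

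First I would pick $\G \in \Hyp_{W,D}$ that fits $\D_{\tr}$ with $\G_y(x) > 1-\epsilon$ for every $(x,y)\in\D_{\tr}$; existence follows from standard interpolation/approximation results for ReLU networks under the ``mild conditions'' on $W, D$ relative to $N$. Next, the shortcut hypothesis applied to $\D'_P$ provides a linear function $\ell$ with $\ell(x + \p(x)) > \gamma > 0$ for $(x, l_p) \in \D_{sub}$ and $\ell(x) < -\gamma$ for $(x,y) \in \D_{clean}$. Subtracting the two estimates yields control on $\ell(\p(x))$ along training samples, which I would then extend to the distribution $\D_{\S}$ via a uniform-convergence argument that uses the large-$N$ assumption (this is where the $\widetilde O$ notation absorbs a Rademacher-type term).

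Then I would build $\F$ from $\G$ by adding a ``shortcut head'' to the $l_p$-logit: roughly, $\F$'s pre-softmax score for class $l_p$ is $\G$'s pre-softmax score plus a large multiple of $(\ell(z))_+$, while the other coordinates track $\G$. After softmax, $\F_{l_p}(z) \ge 1 - \epsilon$ whenever $\ell(z) > 0$ and $\F(z) \approx \G(z)$ whenever $\ell(z) \le 0$, so $\F_y(x) > 1-\epsilon$ on all of $\D_P$: the head is off on $\D_{clean}$ and on on $\D_{poi}$. Realizing this within $\Hyp_{W,D}$ amounts to reserving a constant number of neurons per layer to carry $\ell$ forward and merge at the output — this is part of the ``mild conditions''. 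Now for $(x,y)\sim\D_S$, both $\p(x)$ and $x+\p(x)$ lie in $\{\ell>0\}$ with high probability, so $\F_{l_p}$ is within $O(\epsilon)$ of $1$ at both points and cancels in the difference; the residue collapses to $|\G_{l_p}(x+\p(x)) - \G_{l_p}(\p(x))|$, which I would bound in expectation using condition (c1) for the first term and the fact that $\G$ is trained only on clean data (combined with generalization of $\G$) to argue that $\G_{l_p}(\p(x))$ is also small on average.

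The main obstacle I expect is the architectural bookkeeping: realizing the ``softmax-smoothed override'' by $(\ell(z))_+$ while staying inside $\Hyp_{W,D}$, and making the transition between the two regimes sharp enough that the $O(\epsilon)$ approximation survives passage through the softmax without blow-up. A secondary obstacle is pushing the sign guarantees for $\ell$ from the training set $\D_{\tr}$ to the full distribution $\D_\S$, which is the place where the ``$N$ big enough'' hypothesis does real work via a Rademacher-complexity bound on linear functionals, and where the additional error terms hidden in $\widetilde O(\epsilon)$ are generated.
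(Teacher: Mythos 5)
Your route is genuinely different from the paper's, and it has a gap at the decisive step. The paper does not construct $\F$ at all: its formal version (Proposition \ref{xbx1-q}, proved via Proposition \ref{xbx1}) assumes the \emph{difference class} $\{\F_{l_p}-\G_{l_p}\}$ is a ``simple feature recognition space'' (Definition \ref{sim1}, justified for linear-type classes in Proposition \ref{zjz1}). That assumption forces $(\F-\G)_{l_p}$ to be at least $c(1-4\epsilon)-(c+4L)k$ at \emph{both} $\p(x)$ and $x+\p(x)$ on training points; since both values are also bounded above by $1$ (up to $2\epsilon$), they are pinned near the same value and their difference is $O(\epsilon)$ when $c\to 1$, $k\to 0$. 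A Rademacher/uniform-convergence step (the i.i.d.-subset extraction of Theorem \ref{fenbugj1}) then transfers this to $\D_\S$. Crucially this holds for \emph{every} pair fitting $\D_P$ and $\D_{\tr}$ respectively, which is what Theorem \ref{poifan} needs, since the victim network is whatever training produces, not one you get to design. Your construction only exhibits one special $\F$ with a hand-built shortcut head, which does not justify condition (c3) for the trained network.

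The concrete gap is in your final cancellation. After your shortcut head makes $\F_{l_p}(\p(x))$ and $\F_{l_p}(x+\p(x))$ both $\approx 1$, the residue is $|\G_{l_p}(\p(x))-\G_{l_p}(x+\p(x))|$, and neither term is controlled by your cited tools. Condition (c1)/(t1) bounds $\G_y(x+\p(x))$ for the \emph{true} label $y$; this equals $\G_{l_p}(x+\p(x))$ only on the event $y=l_p$, and for $y\ne l_p$ the adversarial property says nothing about where the mass goes, so $\G_{l_p}(x+\p(x))$ can be large. Worse, nothing constrains $\G_{l_p}(\p(x))$: $\p(x)$ is a near-zero, off-manifold input and a clean-trained softmax network may place arbitrary mass on $l_p$ there, so ``$\G$ is trained only on clean data'' does not make this term small. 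The paper avoids this precisely by attributing the shortcut response to the difference function $\F_{l_p}-\G_{l_p}$ rather than to $\F$ and $\G$ separately. Two smaller omissions: your step ``subtracting the two estimates yields control on $\ell(\p(x))$'' requires the linear \emph{inseparability} of $\D'_P$ before poisoning (to find a to-be-poisoned $x_0$ with $\ell(x_0)$ small and $\ell(x_0+\p(x_0))$ large simultaneously) together with the trigger-similarity bound $\|\p(x_1)-\p(x_2)\|\le k$ to propagate from $x_0$ to all $x$ --- both are explicit hypotheses in the paper that your sketch elides.
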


So, let $\p(x)$ be shortcut of $D_p'$, then, by Proposition \ref{xbx}, if $\F\in \Hyp_{W,D}$ fits $\D_P$ well and $\G\in \Hyp_{W,D}$ fits $\D_{\tr}$ well, or empirically, $\F$($\G$) is well trained on dataset $\D_P$($\D_{\tr}$), then condition (c3) holds for a small $\tau$.
\begin{remark}
 For condition (c3) in Theorem \ref{poifan}, we need to clarify that although $\F\approx \G$ implies (c3) for a small $\tau$, but $\tau$ is small does not equivalent to $\F\approx \G$. Moreover, using $\F\approx \G$ instead of (c3) can also make Theorem \ref{poifan} valid because it can derive (c3). But this is not a good idea, because $\F\approx \G$ makes $\F$ satisfying (c1), and then $\F(x+p(x))$ always does not output label $l_p$ when $x\sim D_S^{l_p}$. So for the poisoned data in the poisoning dataset, $\F$ cannot output the correct labels, leading to a larger empirical error on the right-hand side of equation \eqref{eq-th43}, and consequently leads to a big poison generalization error upper bound. Obviously, what we need is a small poison generalization error upper bound but not a big one, so we cannot only consider $\F\approx \G$.
\end{remark}
%

\section{Method}
\label{method}
In this section, we will propose a new clean-label poison attack based on Theorems \ref{fenbugj} and \ref{poifan}.
There exists no requirement for the trigger $\p(x)$ in Theorem \ref{fenbugj}, so we only need to make the trigger approximately satisfy the conditions of Theorem \ref{poifan}.
Our method thus has certain theoretical guarantee.

From Section \ref{sce}, in order to satisfy the three conditions in Theorem \ref{poifan}, $\p(x)$ need to be 
(1) {\em adversarial noises for the clean-trained network, 
(2) shortcut for a specifically designed binary dataset, 
(3) similar for different samples.}  
\begin{remark}
The effectiveness of adversarial and shortcut in creating backdoor has already been demonstrated empirically in previous work. On the other hand, our theory provides a more informed approach to using these methods.
\end{remark}
%
%

Based on the above three requirements just mentioned,  we design the trigger as follows:

(M1): Obtain adversarial disturbance: For any given clean sample $x$, use PGD on the network trained on $\D_{\tr}$ to find adversarial noise $x_{adv}$ of $x$. 

(M2): Obtain shortcut disturbance: For any given clean sample $x$, use min-min method \cite{huang2021unlearnable} under the clean training set to find the shortcut $x_{scut}$ of $x$. 
%
%
In \cite{zhu2023detection} it has been shown that the shortcuts created by the min-min method are indeed similar for different $x$, thus satisfying condition (c2) automatically.

(M3): The trigger of $x$ is designed to be $\p(x)=Ux_{adv}+(1-U)x_{scut}$, where $U\in\{0,1\}^n$ is a mask. We combine such two disturbances in this way because: (1) make the trigger both adversarial and shortcut; (2) make the triggers have a certain degree of similarity for different $x$, using the fact that the part of $(1-U)x_{scut}$ in trigger is similar for different $x$. It is worth mentioning that making $x_{adv}$ similar for different $x$ is difficult, so creating a trigger by $\lambda x_{adv}+(1-\lambda)x_{scut}$ will not be effective to ensuring similarity.

The mask $U$ in (M3) is constructed as follows. The upper left corner is 0 and the other part is 1. 
On the basis of experience, it is necessary to disturb the key parts of the image to create adversarial samples, so we use a large portion to create adversaries. But the cost of the shortcut is relatively low, so we just use a small place to create the shortcut.

Algorithm \ref{alg-ap1} provides detailed steps for creating the trigger, where $\cdot$ is element-wise product.

\begin{algorithm}[t]
\caption{Method of Creating the Trigger:}
\label{alg-ap1}
\begin{algorithmic}
\renewcommand{\algorithmicrequire}{ \textbf{Input:}}
\REQUIRE\,\\
An initialized network $\F_1:\R^n\to\R^m$;\\
An initialized network $\F_2:\R^n\to\R^2$;\\
A clean dataset $T=\{(x_i,y_i)\}_{i=1}^N\subset[0,1]^n\times[m]$;\\
A mask vector $U\in\{0,1\}^n$;\\
The poison budget $\eta$;\\
Victim dataset $\{(x^v_i,y^v_i)\}_{i=1}^V$.

\renewcommand{\algorithmicrequire}{ \textbf{Output:}}
\REQUIRE \,\\
\quad 
Trigger $\{\p(x^v_i)\}_{i=1}^V$ for all victim data $\{(x^v_i,y^v_i)\}$.\\ 
{\bf S1} Use $T$ to train the network $\F_1$.\\
{\bf S2} Let $T_1=\{\}$, for each $(x,y)\in T$:\\
\quad\quad let $x_{adv}=x+U\cdot \argmax_{||\bfepsilon||\le\eta}L(F_1(x+\bfepsilon),y)$\\
\quad\quad add $(x_{adv},0)$ and $(x,1)$ to $T_1$.\\
{\bf S3} Use $T_1$ to train the network $\F_2$ as follows:
$$\min_{\F_2}\sum_{(x,y)\in T_1}L(\F_2(x+\bfepsilon(x,y)),y)$$
where
$\bfepsilon(x,y)=\ID(y=0)\cdot(1-U)\cdot 
\mathop{\argmin}\limits_{||\bfepsilon||\le\eta}L(\F_2(x+(1-U)\cdot\bfepsilon),y)$.\\

{\bf S4} For any victim data $(x_i^v,y_i^v)$, we calculate $\p(x_i^v)$ as following: \\
\quad $x^v_{a}=x_i^v+U\cdot\argmax_{||\bfepsilon||\le\eta}L(\F_1(x_i^v+\bfepsilon),y_i^v)$;\\
\quad 
$\p(x_i^v)=(x^v_a-x_i^v)+$\\
\hskip1.5cm  $(1-U)\cdot \mathop{\argmin}\limits_{||\bfepsilon||\le\eta}L(\F_2(x^v_a+\bfepsilon\cdot(1-U)),0)$.\\
Output: Trigger $\{\p(x^v_i)\}_{i=1}^V$.
\end{algorithmic}
\end{algorithm}

In Algorithm \ref{alg-ap1}, $\F_1$ is used to create adversarial disturbance and $\F_2$ is used to create shortcuts. 
%
When we complete step $S3$ in the algorithm, we will save $\F_1$ and $\F_2$, and for any sample $(x,y)$, we directly generate $\p(x)$ using S4. Some poisons obtained using Algorithm \ref{alg-ap1} are shown in Figure \ref{fig1x}.

%
%
%

\begin{figure}[t]
\centering
\includegraphics[scale=0.53]{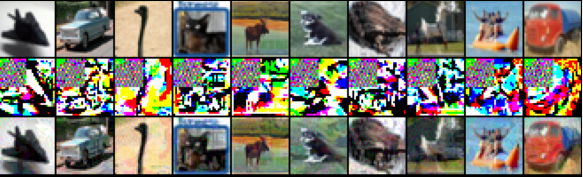}
\caption{From top row to bottom row are respectively the clean images, normalized triggers (original trigger has $L_\infty$ norm bound $16/255$), poison images. 
Due to the selection of $U$, the upper left corners of the poison images are similar, while the other parts are used to generate adversaries.}
\label{fig1x}
\end{figure}

\section{Experiments}
\label{ser}
%
In this section, we empirically validate the proposed backdoor attack on benchmark datasets CIFAR10, CIFAR100 \cite{krizhevsky2009learning}, SVHN and TinyImageNet\cite{le2015tiny}, and against popular defenses. We also conduct ablation experiments to verify our main Theorems \ref{fenbugj} and \ref{poifan}.
All experiments are repeated for $3$ times and report the average values. Furthermore, we make our attacks invisible by limiting the $L_\infty$ norm of trigger. Details about the experiment setting can be found in Appendix \ref{tbc}.
Code is in https://github.com/hong-xian/backdoor-attack.git.
 


\subsection{Baseline Evaluation}
\label{mainr}

In this subsection, we study the effectiveness of our backdoor attack. For backdoor attacks, the goal is to misclassify the samples with trigger into a specified target class $l_p$. 
Unless said otherwise, we set the target label $l_p$ as $0$ in this paper. In addition to evaluating the test accuracy and attack success rate (ASR), we also measure {\em the accuracy of the target class} $l_p$ to analyze the impact of the attack on the target label.

Table \ref{tab-cifar10} shows the result on CIFAR-10 when perturbing $1\%$ of training images, with each perturbation restricted to a radius $l_\infty$-norm $16/255$. We observe that the ASR exceeds $90\%$, while the poison has negligible impact on both the test accuracy and the target class accuracy. 
In Table \ref{tab-budget}, we observe that the proposed attack remains remarkably effective even when the poison budget is very small. More experiments on different poison budgets and norm bounds can be found in Table \ref{tab-baseline-detail}. About transferability of attack and whether the victim network has learned the feature of trigger, please refer to the Appendix \ref{LF}.

\begin{table}[ht]
\centering
\caption {Baseline evaluations on CIFAR-10. Perturbations have $l_\infty$-norm bounded above by $16/255$, and poison budget is $1\%$ of training images. Res means ResNet18, VGG means VGG16, WR means WRN34-10.}
\label{tab-cifar10}
\begin{tabular}[c]{ccccc}
    	\toprule
		{Model} & {Res} & {VGG} & {WR}\\
		\midrule 
        Clean model acc ($\%$) & 93 & 92 & 94\\
		Poisoned model acc($\%$) & 91&  91& 92 \\
        Clean model $l_p$ acc($\%$) & 94 & 92 &  95\\
	    Poisoned model $l_p$ acc($\%$) & 93 & 91 & 93  \\
        Attack Success Rate($\%$) & 93&  91 & 90 \\
		\bottomrule
	\end{tabular}
\end{table}

\begin{table}[ht]
	\centering
\caption {The effect of poison budget on CIFAR-10 with ResNet18. Perturbations have $l_\infty$-norm bounded above by $16/255$.}
    \label{tab-budget}
	\begin{tabular}[c]{ccccc}
    	\toprule
		{Poison Budget} & {$0.6\%$} & {$1\%$} & {$2\%$}\\
		\midrule 
        Clean model acc ($\%$)& 93&  93&  93 \\
		Poisoned model acc($\%$) & 93& 91 & 93 \\
        Clean model $l_p$ acc($\%$) &94& 94&  94\\
	    Poisoned model $l_p$ acc($\%$) & 93& 93  & 92  \\
        Attack Success Rate($\%$) &86 &  93 & 94 \\
		\bottomrule
	\end{tabular}
\end{table}

{\bf Attack performance during the training process:} To further validate the efficacy of our attack, we monitor the evolution of the overall clean model accuracy, the poisoned model accuracy, and the attack success rate throughout the training process, as shown in Figure \ref{fig22}.
In Figure \ref{fig22}, we observe that the overall clean model accuracy and poisoned model accuracy remain very close. The attack success rate reaches a relatively high level from the very beginning and gradually converges to remain stable over time. This shows that our attack method is effective under the premise of maintaining the accuracy of the model.

{\bf Any target label:} Note that the success of backdoor attacks also depends on the choice of target classes, to demonstrate the general efficacy of our attack for any target label $l_p$, we change $l_p$ from $0$ to $9$, the result is shown in Figure \ref{fig-lp}. The results are quite uniform. 

%

\begin{figure*}[!ht]
\centering
\includegraphics[width=10.0cm]{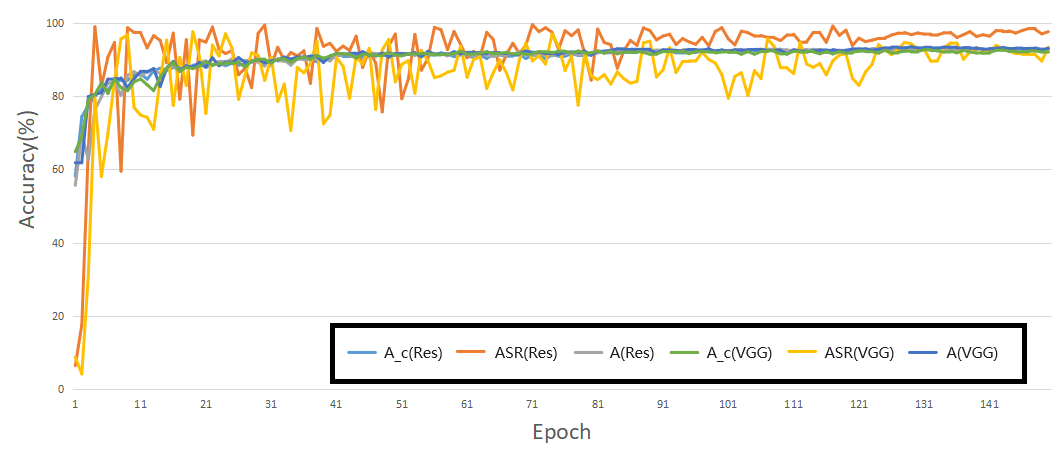}
\caption{Attack performance during the training process on CIFAR10 with ResNet18 and VGG16.
This figure shows the trend of the poison model accuracy ($A$), attack success rate (ASR) and clean model accuracy ($A_c$). }
\label{fig22}
\end{figure*}

\begin{figure}[!ht]
\centering
\includegraphics[width=8.0cm]{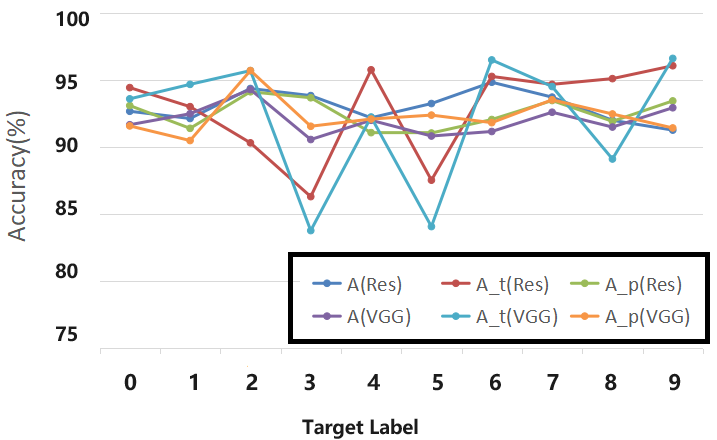}
\caption{Performance of different target label $l_p$. We show the poison model accuracy ($A$), accuracy of target label ($A_t$), attack success rate ($A_p$) on  CIFAR-10, using VGG16 and ResNet18.}
\label{fig-lp}
\end{figure}



\subsection{Evaluation on More Datasets}
We perform experiments on SVHN, CIFAR-100 and TinyImageNet. 
Table \ref{tab-datasets} summarizes the performance of our attack on different datasets, where the attacks are tested on ResNet18 for SVHN and CIFAR-100, and WRN34-10 for TinyImageNet.
Each attacker can only perturb $0.8\%$ of the training images for TinyImageNet and CIFAR-100 and $2\%$ of the training images for SVHN, all perturbations are restricted to an $l_\infty$ norm $16/255$, and the target label is $l_p=0$. 
Additional experiments on different poison budgets and $l_\infty$-norm bounds are presented in Table \ref{tab-datasets-detail} in the appendix.

\begin{table}[ht]
	\centering
\caption {Evaluations on more datasets. 
    Perturbations have $l_\infty$-norm bounded by $16/255$, and poison budget is $0.8\%$  for TinyImageNet and CIFAR-100 and $2\%$  for SVHN.
    }
    \label{tab-datasets}
	\begin{tabular}[c]{ccccc}
    	\toprule
		{Datasets} & {Clean acc} &  {Poison acc} & {ASR} \\
		\midrule 
        SVHN ($\%$)  & 93 & 92& 79 \\
		CIFAR-100 ($\%$) & 76 & 72& 92  \\
        TinyImageNet($\%$) & 62 & 60 &  82\\
		\bottomrule
	\end{tabular}
\end{table}

\subsection{Compare with Other Attacks}
\label{com}

There are several existing clean-label hidden-trigger backdoor attacks that claim success in some settings. 
We consider the following seven attack methods.

{\bf Clean Label:} \citet{ba-adv} pioneered clean label attacks. They first utilized adversarial perturbations or generative models to initially alter target class images and then performed standard invisible attacks.

{\bf Reflection:} \citet{ba-jia} proposed adopting reflection as the trigger for stealthiness.

{\bf Hidden Trigger:} \citet{ba-adv2} proposed to inject the information of a poisoned sample generated by a previous visible attack into an image of the target class by minimizing its distance in the feature space.

{\bf Invisible Poison} \citet{ba-zhuanhua} converted a regular trigger to a noised trigger to achieve stealthiness, but remains effective in the feature space for poison training data.

{\bf Image-specific:} \citet{luo2022enhancing} used an autoencoder to generate image-specific triggers that can promote the implantation and activation phases of the backdoor.

{\bf Narcissus:} \citet{ba-zhuanyi} solved the following optimization to obtain the trigger $\p$: $\argmin_{\p}\sum L(f_{sur}(x+\p,l_p)$, where $f_{sur}$ is the surrogate network. 

{\bf Sleeper Agent:}  \citet{souri2022sleeper} proposed a backdoor attack by approximately solving the bilevel formulation with the Gradient Matching method \cite{geiping2020witches}.

To further validate the efficacy of our attack, we compare our method with other clean label attack methods under the same settings. 
Specifically, we limit the $L_\infty$ norm trigger no more than $16/255$ on both the training set and the test set.
It should be noted that, in some attack methods, the trigger budget in their original settings may exceed $16/255$, so we also compare under each respective settings. 
The results are given in Table \ref{tab-attacks}.
More results and some details are provided in Appendix \ref{bct}.

\begin{table}[ht]
\centering
\caption {Attack success rate on CIFAR-10 with ResNet18. Comparison of our method to popular clean-label attacks, poison budget is $1\%$. The first column is the attack under $L_\infty$ limitation $16/255$, the second column is under each respective settings.}
\label{tab-attacks}
\begin{tabular}[c]{ccccc}
    	\toprule
		{Attacks} & 16/255 &  Each setting  \\
		\midrule 
        Clean-Label  & $23\%$&  $96\%$ \\
		Hidden-Trigger  & $75\%$&$95\%$  \\
        Reflection &  $54\%$&  $90\%$\\
        Invisible Poison &  $73\%$&  $98\%$  \\
        Image-specific &  $70\%$&  70$\%$  \\
        Narcissus &   $50\%$&  $92\%$  \\
        Sleeper-Agent & 61$\%$ & 71$\%$ &  \\
        Ours &  {\bf 93$\%$}& 93$\%$\\
		\bottomrule
	\end{tabular}
\end{table}

From Table \ref{tab-attacks}, we find that our attack significantly outperforms all other methods under the $16/255$ limitation. 
Moreover, our method has two key advantages:
(1) Our method does not require additional steps. Sleeper Agent and Hidden-Trigger need a pre-existing patch and Reflection requires a fitting image; Narcissus needs to magnify the trigger in the reference phase. 
(2) Our method does not require large networks during poison generation, whereas Invisible Poison and Image-specific utilize large generative models to achieve optimal performance, and Narcissus requires a network trained on a different dataset.
%
%
See Tables \ref{tab-attacks-detail} and \ref{tab-attacks-detailx} in the appendix for more results. 

Under each respective setting, all results were good, but many of them no longer meet the $16/255$ limitation. Furthermore, some of them need a patch in the trigger like Clean-label and Sleeper-Agent, and some of them need a larger disturbance like Reflection and Narcissus. 

\subsection{Defenses}
Many backdoor defenses have been proposed to mitigate the effects of backdoor attacks. We test our attack and the attack methods mentioned in section \ref{com} against some major popular defenses. We evaluate six types of defenses:

(1) AT: adversarial training with radius $8/255$ \cite{M2017}; 

(2): Data Augmentation: \cite{borgnia2021strong};

(3) Scale-up: contrastive learning \cite{guo2023scale};

(4) DPSGD: differentially private SGD \cite{h2020}; 

(5) Frequency Filter: remove high-frequency parts of images \cite{zeng2021rethinking};

(6): Fine-Tuning: \cite{zhu2023enhancing}.

\begin{table*}[t]
  \caption {Defenses: The attack success rate($\%$) and poison model accuracy($\%$, in bracket) on CIFAR-10 with ResNet18 of our attack and other attacks against various defense methods. Poison ratio is $1\%$ and perturbation have $l_\infty$-norm bound $16/255$, target label $l_p=0$.}
\label{tab-defense-detail}
\centering
\begin{tabular}{lccccccccc}
\hline
 & AT &Data Augmentaion & Scale-Up & DPSGD&Frequency Filter& Fine-Tuning \\
Clean Label(\%) & 13(83)& 17(89)&9(77)&12(78)&12(86)&11(89)\\
Invisible Poison(\%)& 11(84)& 30(91)&18(80)&18(76)&20(88)&12(90)\\
Hidden Trigger(\%)& 14(83)& 25(90)&31(81)&14(75)&20(87)&10(88)\\
Narcissu(\%)& 13(83)& 23(90)&11(83)&15(76)&16(86)&10(89)\\
Image-specific(\%)& 10(84)& 28(89)&22(79)&14(77)&22(87)&12(87)\\
Reflection(\%)& 13(85)& 20(89)&16(82)&13(76)&37(85)&13(85)\\
Sleeper-Agent& 14(83)& 27(87)&27(79)&15(75)&27(84)&11(89)\\
Ours(\%)& 15(83)& 40(91)&32(80)&12(77)&28(88)&13(89)\\
Ours-stronger(\%)&{\bf 34}(84)&{\bf 66}(90)&--&{\bf 52}(80)&{\bf 62}(88)&--\\
\hline
\end{tabular}
\end{table*}
%

The defense results are given in Table \ref{tab-defense-detail}. We can see that ASR has basically decreased to around 10$\%$, which is the lowest level because 10$\%$ of the samples themselves have the label 0.
This is because we have imposed many restrictions on backdoor attacks, as said in Section \ref{tbc}; and also because our theory and construction of trigger are mainly based on clean training, so our method appears somewhat fragile under defense; in fact, all attack methods mentioned in Section \ref{com} appear fragile under defense methods, as shown in Table \ref{tab-defense-detail}. 

On the other hand, we find that there exists a robustness-accuracy trade-off across many of these defenses. Although these defense methods do degrade the attack success rate, they also cause the accuracy of the model test to decrease. For defense methods Scale-Up, this method is not stable and sometimes detects clean samples as poison samples; for fine-tuning, because a clean training set was used, this defense method has a very outstanding effect.

Furthermore, we point out that if we incorporate these defense methods into our attack generation to produce corresponding enhanced attacks, we can effectively withstand these defenses. For defense methods (1), (2), (4) and (5), we enhance our attack, details are shown in Appendix \ref{xxc}, and get the better result.


\subsection{Verify Theorem \ref{poifan}}
\label{tf}
In this section, we verify Theorem \ref{poifan}. 
In Appendix \ref{ver}, we verify Theorem \ref{fenbugj}.

We verify Theorem \ref{poifan} by showing that {\bf any trigger $\p(x)$ that makes the $\epsilon,\tau,\lambda$ in conditions (c1), (c2), (c3) of Theorem \ref{poifan} small can achieve a high attack success rate}. 

To validate our conclusions, we evaluate the following poisoning function $\p(x)$, refer to Appendix \ref{mdf} for more details:

{\bf RN:} Random noises with $L_\infty$ bound and $L_0$ norm bound;

{\bf UA:} Universal adversarial perturbations \cite{moosavi2017universal};

{\bf Adv:} Adversarial perturbations \cite{s2013};

{\bf SCut:} shortcut noise\cite{huang2021unlearnable}; 

{\bf Ours:} Perturbations generated by Algorithm \ref{alg-ap1}. 

We consider two indicators to evaluate the performance of poison on conditions (c1)  and (c3) in Theorem \ref{poifan}:
\begin{itemize}
    \item Use the validation loss on poisoned data to measure condition (c1):
        $V_{adv}=\E_{(x,y)\sim \D}L(\F(x+\p(x)),y)$, where $\F$ is ResNet18 trained on the clean dataset.
    \item Use the binary classification loss to measure condition (c3) by Proposition \ref{xbx}: 
              $V_{sc} = \min_{\F}\E_{(x,y)\sim \D} [L(\F(x+\p(x),0))+L(\F(x),1)]$,
        where $\F$ is a two-layer network.
\end{itemize}
About condition (c2) in Theorem \ref{poifan}: for RN and UA, the poison perturbation is the same for every sample; for SCut and Ours, the perturbations for different samples are similar (or at least parts of the perturbation are similar). 

Table \ref{tab-verify} shows the results.
We can see that RN, UA, SCut with a large budget can achieve good attack performance because they satisfy conditions (c1), (c2), (c3) in Theorem \ref{poifan} to certain degree, which  validates the effectiveness of conditions in Theorem \ref{poifan}. Moreover, each of $V_{adv}$ and $V_{sc}$ is not satisfied, because adversaries alone yield excessively large $V_{sc}$ since adversarial perturbations do not form shortcuts. Shortcuts alone produce an undesirably small $V_{adv}$ since the shortcuts do not become adversaries. However, by combining Adv and SCut via our algorithm, we achieve a significantly improved outcome.

On the other hand, please note Adv attack under the $32/255$ budget. Although the Adv attack's similarity between different samples is poor, but its $V_{adv}$ and $V_{sc}$ is not bad, and its ASR is about $80\%$, this indicating that even if the similarity is not good enough, but the other two indicators are good, the attack can still be achieved. Therefore, in order to prevent the trigger from being detected due to being too similar, we can achieve attack effectiveness by reducing similarity and improving other two metrics.

\begin{table}[!ht]
\centering
\caption{
Values of $V_{adv}$ and $V_{sc}$;  poison model accuracy (Acc); attack success rate (ASR) on CIFAR-10 test set with ResNet18. Poison budget is $1\%$. If not specified, the norm is $L_\infty$.}
\begin{tabular}{lcccccccc}
\hline
Poison Type&$V_{adv}(\uparrow)$&$V_{sc}(\downarrow)$&ASR$(\uparrow)$& Acc\\ 
\hline
RN ($16/255$) & 2.40 & 0.014& 12$\%$ & 91$\%$\\
RN ($L_0$, $200$) &3.87& 0.004& 59$\%$ & 92$\%$\\
UA ($16/255$) & 2.92  & 0.002& 51$\%$ & 91$\%$\\
Adv ($16/255$) & {\bf 8.92}  & 1.27& 22$\%$ & 92$\%$\\
SCut  ($16/255$) &1.19 & ${\bf 10^{-4}}$& 30$\%$ & 91$\%$\\
Ours  ($16/255$) &6.53 & 0.001& {\bf 93}$\%$ & {\bf 91}$\%$\\
\hline

RN ($32/255$) &6.28 & $10^{-4}$& 99$\%$ & 91$\%$\\
RN ($L_0$, $300$) & 6.33 &0.003& 94$\%$ & 91$\%$\\
UA ($32/255$) &15.45 &$10^{-4}$& 92$\%$ & 92$\%$\\
Adv ($32/255$) &{\bf16.38} &0.35& 80$\%$ & 92$\%$\\
SCut ($32/255$) &4.22 & ${\bf 10^{-5}}$& 93$\%$ & 90$\%$\\
Ours  ($32/255$) &14.65& $10^{-4}$& {\bf99}$\%$ & {\bf 92}$\%$\\
\hline
\end{tabular}
\label{tab-verify}
\end{table}

\section{Conclusion}

In this paper, we give generalization bounds for the clean-label backdoor attack. Precisely, we provide upper bounds for the clean and poison population error based on empirical error on the poisoned training set and some other quantities. 
These bounds give the theoretical foundation for the clean-label backdoor attack in that the goal of the attack can be achieved under certain reasonable conditions. 
The main technical difficulties in establishing these bounds include how to treat the non-i.i.d. poisoned dataset and the fact that the triggers are both in the training and testing phases.

Based on these theoretical results, we propose a novel attack method that uses a combination of adversarial noise and indiscriminate poison as the trigger. Moreover, extensive experiments show that our attack can guarantee that the accuracy of the poisoned model on clean data and the attack success rate are high.

\textbf{Limitations and Future Work.}
The conditions of Theorem \ref{poifan} are quite complicated, and it is desirable to give simpler conditions for the poisoned population error bound in Theorem \ref{poifan}.
%
The current generalization bounds do not involve the training process, and algorithmic-dependent generalization bounds, such as stability analysis \cite{hardt2016train}, should be further analyzed for backdoor attacks.
%

\section*{Impact Statement}
A theoretical basis for backdoor attacks is given.
One potential negative social impact of this work is that malicious opponents may use these methods to generate new types of backdoor poisons.
Therefore, it is necessary to develop more powerful models to resist backdoor attacks, which is left for future work.

\section*{Acknowledgments}
This work is supported by CAS Project for Young Scientists in Basic Research, Grant No.YSBR-040, ISCAS New Cultivation Project ISCAS-PYFX-202201, and ISCAS Basic Research ISCAS-JCZD-202302. 
This work is supported by NSFC grant No.12288201 and NKRDP grant No.2018YFA0306702. The authors thank anonymous referees for their valuable comments.


\newpage
\appendix
\onecolumn
%

\section{Proof of Theorem \ref{fenbugj}}
\label{app-41}

\subsection{Prelinimaries}
\label{proof}
We first give some notation that will be used in all the proofs. 


{\bf Subdistribution $\D_S^{\ne l_p}$.} Let $\D_S^{\ne l_p}$ be the distribution of samples whose label is not $l_p$, that is, 
$$\Pr_{(x,y)\sim\D_S^{\ne l_p}}((x,y)\in A)=\Pr_{(x,y)\sim\D_S}((x,y)\in A|y\ne l_p)$$ 
for any set $A$. 

{\bf Probability of samples to have label $y_p$.} For a fixed $p$, let \begin{equation}
\label{eq-eta}
    \Pr_{(x,y)\sim \D_S}(y=l_p)=\eta \hbox{ and } 0<\eta<1.
\end{equation}

{\bf General Hypothesis Space.} In some theorems, we will consider the more general hypothesis space 
$$H=\{h(x,y):\S\times[m]\to[0,1]\}.$$ 
$H$ contains the commonly used hypothesis space $\{L(\F(x),y):\S\times[m]\to[0,1]\}$, where $\F$ is the network and $L$ is the loss function.


We give a classic generalization bound below, which will be used in the proof.
\begin{theorem}[P.217 of \cite{mohri2018foundations}, Informal]
\label{fanhua}
Let the training set $\D_{\tr}$ be i.i.d. sampled from the data distribution $\D_S$ and $N=|\D_{\tr}|.$
For the hypothesis space 
$\Hyp=\{L(\F(x),y):\R^n\times[m]\to[0,1]\}$ and $\delta\in\R_+$, with probability at least $1-\delta$, for any $L(\F(x),y)\in \Hyp$, we have
%
{
\begin{equation}
\label{th-gb0}
\begin{array}{ll}
&\E_{(x,y)\sim \D_S}[L(\F(x),y)] 
\le \E_{(x,y)\in \D_{\tr}}[L(\F(x),y)] +2\Rad^{\D_S}_N(\Hyp)+\sqrt{\frac{\ln(1/\delta)
}{2N}}\\
\end{array}
\end{equation}}

\end{theorem}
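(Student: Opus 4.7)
The plan is to follow the classical two-step route: (i) establish concentration of the uniform deviation around its mean via McDiarmid's bounded-differences inequality, and (ii) bound that mean by the Rademacher complexity through a symmetrization argument with a ghost sample.

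First I would fix the notation. Let $Z_i=(x_i,y_i)$, $\D_{\tr}=(Z_1,\dots,Z_N)$ i.i.d. from $\D_S$, and define
\begin{equation*}
\Phi(\D_{\tr}) \;=\; \sup_{h\in\Hyp}\Bigl(\E_{(x,y)\sim\D_S}[h(x,y)] - \frac{1}{N}\sum_{i=1}^N h(Z_i)\Bigr).
\end{equation*}
Because every $h\in\Hyp$ takes values in $[0,1]$, replacing a single $Z_i$ by an arbitrary $Z_i'$ changes $\tfrac{1}{N}\sum_i h(Z_i)$ by at most $1/N$, and taking a supremum of $1$-Lipschitz (in each coordinate) functions is still $1/N$-Lipschitz in each coordinate. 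McDiarmid's inequality then gives, with probability at least $1-\delta$,
\begin{equation*}
\Phi(\D_{\tr}) \;\le\; \E[\Phi(\D_{\tr})] + \sqrt{\tfrac{\ln(1/\delta)}{2N}}.
\end{equation*}

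Second, I would bound $\E[\Phi(\D_{\tr})]$ using symmetrization. Introduce an independent ghost sample $\D'_{\tr}=(Z'_1,\dots,Z'_N)$ with the same distribution. Writing $\E_{\D_S}[h]=\E_{\D'_{\tr}}[\tfrac{1}{N}\sum_i h(Z'_i)]$, pulling the supremum inside the expectation via Jensen's inequality, and then introducing i.i.d. Rademacher signs $\sigma_i\in\{\pm1\}$ (valid because $(Z_i,Z'_i)$ and $(Z'_i,Z_i)$ are equidistributed) yields
\begin{equation*}
\E[\Phi(\D_{\tr})] \;\le\; \E_{\D_{\tr},\D'_{\tr},\sigma}\Bigl[\sup_{h\in\Hyp}\tfrac{1}{N}\sum_{i=1}^N \sigma_i\bigl(h(Z'_i)-h(Z_i)\bigr)\Bigr] \;\le\; 2\,\Rad_N^{\D_S}(\Hyp),
\end{equation*}
after splitting the supremum of the difference into two suprema and noting that each term equals the Rademacher complexity of $\Hyp$ on $N$ i.i.d. samples from $\D_S$. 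Combining the two displays and rearranging $\E_{\D_S}[h]\le \hat{\E}_{\D_{\tr}}[h]+\Phi(\D_{\tr})$ gives the stated bound uniformly over $h\in\Hyp$.

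The step I expect to be most delicate is verifying that the bounded-differences condition really holds with constant $1/N$: one must remember that $h$ is bounded in $[0,1]$ (which is given), and that the supremum of $1/N$-bounded-differences functionals inherits the same $1/N$ bound (a short but easy-to-botch coupling argument). Everything else—Jensen, the Rademacher symmetrization trick, and the final rearrangement—is mechanical. Since the theorem is quoted from Mohri et al. in the paper, I would cite the textbook for the precise constants and mainly record the skeleton above to keep the exposition self-contained.
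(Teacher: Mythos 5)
Your proposal is correct and is precisely the standard McDiarmid-plus-symmetrization argument from Mohri et al.\ (the source the paper cites for this theorem); the paper itself does not reprove the statement but imports it verbatim, so your route coincides with the intended one. The constants ($2\Rad^{\D_S}_N(\Hyp)$ and $\sqrt{\ln(1/\delta)/(2N)}$) come out exactly as stated, and your flagged worry about the bounded-differences constant $1/N$ for the supremum is handled correctly.
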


We prove Theorem \ref{fenbugj} in three steps given in Sections \ref{app-41.1}, \ref{app-41.2}, and \ref{app-41.3}, respectively.

\subsection{Proof of Theorem \ref{fenbugj1}}
\label{app-41.1}

We first prove the following theorem, which gives a generalization bound for a more general hypothesis space.
\begin{theorem}
\label{fenbugj1}
Let $\D_{S}$, $\D_P$, $\D_S^{l_p}$, $\alpha$ be defined in Section \ref{backdoor-g} and 
$\D_S^{\ne l_p}$ be defined in Section \ref{proof}. 
%
%
Then for any hypothesis space $H=\{h(x,y):\S\times[m]\to[0,1]\}$, 
with probability at least $1-\delta-\frac{4\eta}{4\eta+(1-\eta) N}-\frac{4(1-\eta)}{\eta N+4(1-\eta)}$, for any $h\in H$, it holds
\begin{equation}
\label{eq-th_1c}
\begin{array}{cc}
&\E_{(x,y)\sim \D_S}[h(x,y)]
\le
\frac{4-2\alpha}{1-\alpha}\E_{(x,y)\in \D_P}[h(x,y)]+4\Rad^{\D_S^{\ne l_p}}_N(H)+\frac{4}{1-\alpha}{\Rad^{\D_S^{l_p}}_N(H)}+2\sqrt{\frac{\ln(2/\delta)}{N(1-\alpha)}}.
\end{array}
\end{equation}
\end{theorem}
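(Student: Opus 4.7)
The plan is to split $\E_{(x,y)\sim\D_S}[h(x,y)]$ into its $y=l_p$ and $y\ne l_p$ pieces and reduce each to the classical bound Theorem~\ref{fanhua} applied to an \emph{i.i.d.}\ subsample of $\D_P$. Writing $\E_{\D_S}[h]=\eta\,\E_{\D_S^{l_p}}[h]+(1-\eta)\,\E_{\D_S^{\ne l_p}}[h]$, the two natural clean subsets of $\D_P$ are $S_1:=\D_{clean}\cap\{y=l_p\}$ and $S_2:=\{(x,y)\in\D_P:y\ne l_p\}$. Because $\D_{sub}$ is a random subset chosen independently of sample content (given the label), $S_1$ is i.i.d.\ from $\D_S^{l_p}$ with $|S_1|=(1-\alpha)N_1$, and $S_2$ is i.i.d.\ from $\D_S^{\ne l_p}$ with $|S_2|=N_2=N-N_1$; crucially both are \emph{subsets} of $\D_P$, which is what allows their empirical averages to be compared to $\E_{\D_P}[h]$.

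Conditioning on the sizes, apply Theorem~\ref{fanhua} at confidence $1-\delta/2$ to each piece to get
$$\E_{\D_S^{l_p}}[h]\le\frac{1}{|S_1|}\sum_{(x,y)\in S_1}h(x,y)+2\,\Rad^{\D_S^{l_p}}_{|S_1|}(H)+\sqrt{\frac{\ln(2/\delta)}{2|S_1|}},$$
and the analogue for $S_2$ against $\D_S^{\ne l_p}$. To remove the randomness of $|S_1|$ and $|S_2|$, apply Cantelli's one-sided inequality to $N_1\sim\mathrm{Binomial}(N,\eta)$ and $N_2\sim\mathrm{Binomial}(N,1-\eta)$: this gives $|S_1|\ge(1-\alpha)\eta N/2$ with probability at least $1-\frac{4(1-\eta)}{\eta N+4(1-\eta)}$ and $|S_2|\ge(1-\eta)N/2$ with probability at least $1-\frac{4\eta}{4\eta+(1-\eta)N}$, which is precisely where the two extra failure probabilities in the statement come from. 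To rewrite $\Rad^{\D}_{|S_i|}(H)$ as $\Rad^{\D}_N(H)$, use the elementary inequality $k\,\Rad^{\D}_k(H)\le N\,\Rad^{\D}_N(H)$ for $k\le N$, which follows by conditioning on the first $k$ samples and noting that Jensen together with $\E[\sigma_i]=0$ lets one pull $\sup_h$ outside the expectation over the remaining $N-k$ Rademacher variables.

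On the intersection of the four good events, $h\ge 0$ and $S_1,S_2\subseteq\D_P$ give $\eta\,\frac{1}{|S_1|}\sum_{S_1}h\le\frac{2}{(1-\alpha)N}\sum_{S_1}h\le\frac{2}{1-\alpha}\,\E_{\D_P}[h]$, and analogously $(1-\eta)\,\frac{1}{|S_2|}\sum_{S_2}h\le 2\,\E_{\D_P}[h]$, producing the composite coefficient $\frac{2}{1-\alpha}+2=\frac{4-2\alpha}{1-\alpha}$. The Rademacher scaling contributes the coefficients $\frac{4}{1-\alpha}$ and $4$ in front of $\Rad^{\D_S^{l_p}}_N(H)$ and $\Rad^{\D_S^{\ne l_p}}_N(H)$, and the two concentration terms collapse (using $\eta,1-\eta\le 1$ and $\frac{1}{1-\alpha}\ge 1$) into $2\sqrt{\ln(2/\delta)/((1-\alpha)N)}$; a union bound over the two generalization failures and the two Cantelli failures then gives the advertised probability. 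The main obstacle is the random sizes of $S_1$ and $S_2$: the classical bound needs a deterministic sample size, so the proof must interleave a Cantelli step with Theorem~\ref{fanhua} and then rescale the Rademacher complexity back to $N$, and this Rademacher rescaling is exactly what inflates the naive factor $2$ from Theorem~\ref{fanhua} to the $4$ appearing in the final statement.
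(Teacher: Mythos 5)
Your proposal is correct and follows essentially the same route as the paper: decompose by label, extract i.i.d.\ subsets of $\D_P$ from each conditional distribution, apply the classical Rademacher bound conditioned on sizes, control the random sizes with Cantelli, rescale via $k\,\Rad_k\le N\,\Rad_N$, and use nonnegativity of $h$ to dominate the subset averages by $\E_{\D_P}[h]$. The only difference is one of rigor, not of method: the paper devotes two lemmas (Lemmas~\ref{i.i.d.lemma1} and~\ref{i.i.d.lemma2}) to formally verifying the i.i.d.\ claim for the selected subsets that you assert in one line.
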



As mentioned previously, the samples in $\D_{P}$ do not satisfy ``i.i.d. sampled from $\D_{\S}$'', and we will find a subset of $\D_{\S}$, whose samples are i.i.d. sampled from $\D_{\S}$. The core of the proof of theorem \ref{fenbugj1} is the following two lemmas, which show how to select such a subset. 
\begin{lemma}
\label{i.i.d.lemma1}
Use notations in Theorem \ref{fenbugj1}. 
Let $X$ be the random variable of the number of samples with label $\ne l_p$ in $\D_{P}$. Let $k$ be a given number in $\{1,2,\dots,N\eta\}$. 
If $X\ge k$, we randomly select $k$ samples without label $l_p$ in $\D_P$ and let $D_l$ be the set of these samples; otherwise, let $D_l$ be the set of all samples without label $l_p$ in $\D_P$. 
%
%
   Let $D_l$ satisfy the distribution $\D_{S_0}$. Then we have $\Pr_{x\sim \D_{\S_0}}(x\in A|X\ge k)=\Pr(X_{k}^{\D_S^{\ne l_p}}\in A)$ for any set $A$, where $X_{k}^{\D_S^{\ne l_p}}$ means i.i.d. sampling $k$ data from distribution $\D_S^{\ne l_p}$.
\end{lemma}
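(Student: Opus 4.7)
The intuition is that the poisoning procedure in Section \ref{bgg} modifies only samples whose label equals $l_p$, so the samples in $\D_P$ with label $\ne l_p$ coincide exactly with the samples in $\D_{\tr}$ with label $\ne l_p$. These samples are i.i.d.\ $\D_S^{\ne l_p}$ once we condition on how many of them appear, and a uniformly random subselection preserves the i.i.d.\ property by exchangeability. The plan therefore has three stages: reduce $D_l$ to a random subselection from i.i.d.\ $\D_{\tr}$ samples with label $\ne l_p$; establish the conditional i.i.d.\ structure given $X=n$; and then integrate over $n \ge k$.

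\textbf{Step 1 (reduction).} Note that $X$ equals the number of samples in $\D_{\tr}$ with label $\ne l_p$, and the restriction of $\D_P$ to samples with label $\ne l_p$ is literally the restriction of $\D_{\tr}$ to samples with label $\ne l_p$. So selecting $k$ samples without label $l_p$ from $\D_P$ is the same as selecting $k$ such samples from $\D_{\tr}$.

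\textbf{Step 2 (conditional i.i.d.\ structure).} Fix $n$ with $k \le n \le N$ and condition on $X=n$. Using the mixture decomposition $\D_S = \eta\, \D_S^{l_p} + (1-\eta)\, \D_S^{\ne l_p}$ (with $\eta$ as in \eqref{eq-eta}), one can condition first on the label indicator pattern $(\I(y_i = l_p))_{i=1}^N$ and observe that, given this pattern, the feature-label pairs at distinct indices are mutually independent with the conditional law $\D_S^{\ne l_p}$ at indices where $y_i \ne l_p$. Since conditioning further on the total count $X=n$ only restricts the set of allowable patterns (all equally likely by symmetry of i.i.d.\ draws), the tuple of $n$ samples with label $\ne l_p$ (listed in order of index) is i.i.d.\ $\D_S^{\ne l_p}$.

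\textbf{Step 3 (random subselection).} Still conditional on $X = n$, the $n$ samples are exchangeable, so a uniformly random $k$-subset (drawn without replacement) has the same joint law as the first $k$ of them, namely a $k$-tuple of i.i.d.\ $\D_S^{\ne l_p}$ draws. Hence, for any measurable $A$,
\[
\Pr(D_l \in A \mid X = n) \;=\; \Pr\!\left(X_k^{\D_S^{\ne l_p}} \in A\right).
\]

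\textbf{Step 4 (averaging).} Summing over $n \ge k$ and using $\sum_{n \ge k} \Pr(X=n \mid X \ge k) = 1$,
\[
\Pr(D_l \in A \mid X \ge k) \;=\; \sum_{n=k}^{N} \Pr(D_l \in A \mid X=n)\, \Pr(X=n \mid X \ge k) \;=\; \Pr\!\left(X_k^{\D_S^{\ne l_p}} \in A\right),
\]
which is the claim. The main obstacle is a clean formulation of Step 2: one must be careful to separate the randomness of the label pattern (which determines which indices contribute to $D_l$) from the conditional law of the features given labels, and to verify that conditioning on the aggregate count $X=n$ does not break the i.i.d.\ structure within the label-$\ne l_p$ block. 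Step 3 is a standard exchangeability argument but should be stated so that the without-replacement selection is explicitly a uniformly random injection into $\{1,\dots,n\}$.
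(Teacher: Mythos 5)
Your proposal is correct and follows essentially the same route as the paper's proof: condition on $X=n$, use the symmetry/exchangeability of the i.i.d.\ draws over which indices carry label $\ne l_p$ and which $k$ are selected, and then average over $n\ge k$ via the Bayes decomposition. The paper writes the exchangeability step as explicit sums over index sets with combination counts, but the argument is the same.
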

\begin{proof}
By the Bayesian formula, we have
\begin{equation}
\label{xby}
\begin{array}{ll}
&\Pr_{x\sim \D_{\S_0}}(x\in A|X\ge k)\\
=&\Pr_{x\sim \D_{\S_0}}(x\in A,X\ge k)/\Pr_{x\sim \D_{\S_0}}(X\ge k)\\
=&\sum_{i=k}^N \Pr_{x\sim \D_{\S_0}}(x\in A,X=i)/\Pr_{x\sim \D_{\S_0}}(X\ge k)\\
=&\sum_{i=k}^N \Pr_{x\sim \D_{\S_0}}(x\in A|X=i)\Pr(X=i)/\Pr_{x\sim \D_{\S_0}}(X\ge k)\\
=&\sum_{i=k}^N \Pr_{x\sim \D_{\S_0}}(x\in A|X=i)\Pr(X=i|X\ge k). \\
\end{array}
\end{equation}
We will show that $\Pr_{x\sim \D_{\S_0}}(x\in A|X=i)=\Pr(X_{k}^{\D_S^{\ne l_p}}\in A)$ for any $i\ge k$, and hence the lemma.

Since the poison does not change labels, $X$ is also the number of samples without label $l_p$ in $\D_{\tr}$. Then for any $i\ge k$, when $X=i$, we will traverse all possible selection methods for $D_l$ to calculate $\Pr_{x\sim \D_{\S_0}}(x\in A|X=i)$.


Note that the $N$ samples in $\D_{\tr}$ are i.i.d samples from $\D_S$. We will consider the order of these samples. Let $(x_j,y_j)\in\D_{\tr}$ be the $j$-th element selected from the distribution $\D_\S$. 
If we add poison to $(x_j,y_j)$, then it becomes $(x_j+\p(x_j),y_j)\in\D_P$; if we do not add poison to $(x_j,y_j)$, then $(x_j,y_j)\in\D_P$.

Let $D_{y\ne l_p}\subset[N]$ be the set of $k$ such $(x_k,y_k)\in\D_P$ satisfying $y_k\ne l_p$. By considering all the possible situations of $D_{y\ne l_p}$, we have
\begin{equation*}
\begin{array}{ll}
&\Pr_{x\sim \D_{\S_0}}(x\in A|X=i)\\
=&\Pr_{x\sim \D_{\S_0}}(x\in A,X=i)/\Pr(X=i)\\
=&\sum_{\D_{y\ne l_p},|\D_{y\ne l_p}|=i}\Pr_{x\sim \D_{\S_0}}(x\in A,\D_{y\ne l_p})/\Pr(X=i)\\
=&\sum_{\D_{y\ne l_p},|\D_{y\ne l_p}|=i}\Pr_{x\sim \D_{\S_0}}(x\in A|D_{y\ne l_p})\frac{\Pr(\D_{y\ne l_p})}{\Pr(X=i)}\\
=&{\sum_{\D_{y\ne l_p},|\D_{y\ne l_p}|=i}\Pr_{x\sim \D_{\S_0}}(x\in A|D_{y\ne l_p})}/{C_{N}^i}.\\
\end{array}
\end{equation*}
Thus, for all $\D_{y\ne l_p}$ satisfying $|\D_{y\ne l_p}|=i$, we traverse all the possibilities of the sample index $\{i_1,i_2,\dots,i_{k}\}$ selected by $\D_l$ and then have 
{
\begin{equation*}
\begin{array}{ll}
&\Pr_{x\sim \D_{\S_0}}(x\in A|D_{y\ne l_p})\\
=&\frac{\sum_{i_1,i_2,\dots,i_{k}\in D_{y\ne l_p}}\Pr((x_{i_1},x_{i_2},\dots,x_{i_{k}})\in A)}{C_{i}^{k}}\\
=&\frac{1}{C_{i}^{k}}\sum_{i_1,i_2,\dots,i_{k}\subset D_{y\ne l_p}}\Pr(X_{k}^{\D_S^{\ne l_p}}\in A)\\
=&\Pr(X_{k}^{\D_S^{\ne l_p}}\in A)
\end{array}
\end{equation*}
}
where $x_i$ are i.i.d. and $C_a^b$ is the combination number of selecting $b$ samples from $a$ samples.
We thus have:
\begin{equation*}
\begin{array}{ll}
&\Pr_{x\sim \D_{\S_0}}(x\in A|X=i)\\
=&\frac{\sum_{\D_{y\ne l_p,|\D_{y\ne l_p}|=i}}\Pr_{x\sim \D_{\S_0}}(x\in A|D_{y\ne l_p})}{C_{N}^i}\\
=&\frac{\sum_{\D_{y\ne l_p},|\D_{y\ne l_p}|=i}\Pr(X_{k}^{\D_S^{\ne l_p}}\in A)}{C_{N}^i}\\
=&\Pr(X_{k}^{\D_S^{\ne l_p}}\in A).
\end{array}
\end{equation*}
Finally, we have
\begin{equation*}
\begin{array}{ll}
&\Pr_{x\sim \D_{\S_0}}(x\in A|X\ge k)\\
=&\sum_{i=k}^N \Pr_{x\sim \D_{\S_0}}(x\in A|X=i)\Pr(X=i|X\ge k)\\
=&\sum_{i=k}^N \Pr(X_{k}^{\D_S^{\ne l_p}}\in A)\Pr(X=i|X\ge k)\\
=&\Pr(X_{k}^{\D_S^{\ne l_p}}\in A)\sum_{i=k}^N\Pr(X=i|X\ge k)\\
=&\Pr(X_{k}^{\D_S^{\ne l_p}}\in A).
\end{array}
\end{equation*}
This proves the lemma.
\end{proof}
For samples with label $l_p$, we have the similar conclusion.
\begin{lemma}
\label{i.i.d.lemma2}
Let $X$ be the random variable of the number of samples with label $l_p$ in the set $\D_{P}$. Let $k$ to be a given number in $\{1,2,\dots,[N\eta]\}$. 
   If $X\ge k$, we randomly select $(1-\alpha)k$ samples without trigger but with label $l_p$ in $\D_P$, and let these samples form the set $D_{l_p}$; 
   otherwise, we select all samples without trigger but with label $l_p$ in $\D_P$, and make these samples the set $D_{l_p}$. 
%
  %
Let $D_{l_p}$ obey the distribution $\D_{\S_1}$. Then we have $\Pr_{x\sim \D_{\S_1}}(x\in A|X\ge k)=\Pr(X_{(1-\alpha)k}^{\D^{l_p}_{\S}}\in A)$ for any set $A$, where $X_{(1-\alpha)k}^{\D^{l_p}_{\S}}$  means i.i.d. sample $(1-\alpha)k$ samples from distribution $\D^{l_p}_{\S}$.
\end{lemma}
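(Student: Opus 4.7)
The plan is to mirror the proof of Lemma \ref{i.i.d.lemma1}, using the same Bayesian decomposition followed by a combinatorial count; the only new wrinkle is that we now must keep track of two layers of uniform randomness (which label-$l_p$ training samples get poisoned, and which of the remaining clean label-$l_p$ samples get selected into $D_{l_p}$), rather than a single layer of subset selection.

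First I would reduce to a conditional statement exactly as in \eqref{xby}: write
\[
\Pr_{x\sim \D_{\S_1}}(x\in A\mid X\ge k) \;=\; \sum_{i=k}^{N} \Pr_{x\sim \D_{\S_1}}(x\in A\mid X=i)\,\Pr(X=i\mid X\ge k),
\]
so it suffices to show that $\Pr_{x\sim \D_{\S_1}}(x\in A\mid X=i) = \Pr\bigl(X^{\D_\S^{l_p}}_{(1-\alpha)k}\in A\bigr)$ for each fixed $i\ge k$, since the outer weights sum to one.

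Next, for fixed $i$, I would enumerate the ways the event $\{X=i\}$ can be realised. Following the proof of Lemma \ref{i.i.d.lemma1}, label the samples of $\D_{\tr}$ by their draw order from $\D_\S$, and let $J\subseteq[N]$ be the set of indices whose draw happens to land in class $l_p$; conditionally on $J$ and on $\{X=i\}$, the samples $\{(x_j,y_j)\}_{j\in J}$ are i.i.d. from $\D_\S^{l_p}$, and by symmetry every $|J|=i$ contributes the same conditional probability, so I can fix $J$. Now, by the definition of the clean-label poisoning procedure in Section \ref{bgg}, the set $P\subseteq J$ of poisoned indices is a uniformly random size-$\alpha i$ subset of $J$, and $D_{l_p}$ is, by construction, a uniformly random size-$(1-\alpha)k$ subset of $J\setminus P$. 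The key combinatorial observation is that this two-stage uniform selection is equivalent to drawing $S\subseteq J$ uniformly at random among all size-$(1-\alpha)k$ subsets of $J$: for any such $S$, summing $\Pr(S\mid P)\Pr(P)$ over all admissible $P\supseteq J\setminus S$ of size $\alpha i$ gives the same value, by the standard hypergeometric identity $\binom{i-(1-\alpha)k}{\alpha i}/\bigl(\binom{i}{\alpha i}\binom{(1-\alpha)i}{(1-\alpha)k}\bigr)$ being independent of $S$. Finally, since the $i$ samples indexed by $J$ are exchangeable i.i.d.~from $\D_\S^{l_p}$, the distribution of $\{(x_j,y_j)\}_{j\in S}$ for a uniformly chosen size-$(1-\alpha)k$ subset $S\subseteq J$ coincides with the distribution of $(1-\alpha)k$ i.i.d.~draws from $\D_\S^{l_p}$, which is exactly $X^{\D_\S^{l_p}}_{(1-\alpha)k}$.

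The main obstacle I foresee is the middle combinatorial step, i.e.~showing rigorously that the composition of the two independent uniform choices really produces a uniform distribution over size-$(1-\alpha)k$ subsets of $J$; a priori one might fear that conditioning on $P$ biases which clean label-$l_p$ samples are available to $D_{l_p}$. The counting above resolves this, but it must be written out explicitly, exactly analogous to the $C_N^i$ and $C_i^k$ factorisation carried out in the proof of Lemma \ref{i.i.d.lemma1}. Once that identity is in place, the rest of the argument is routine application of exchangeability and the Bayesian decomposition above.
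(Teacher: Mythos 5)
Your proposal is correct and follows essentially the same route as the paper's proof: the same Bayesian decomposition over $\{X=i\}$, the same enumeration over which indices carry label $l_p$ and which of those are poisoned versus selected into $D_{l_p}$, and the same appeal to exchangeability of the i.i.d.\ draws to conclude that any fixed index subset of size $(1-\alpha)k$ has the law of $X^{\D_\S^{l_p}}_{(1-\alpha)k}$. (Your extra hypergeometric identity showing the two-stage selection is uniform over size-$(1-\alpha)k$ subsets is harmless but not strictly needed, since the conditional weights over all $(P,S)$ pairs already sum to one; also note the condition should read $P\subseteq J\setminus S$, a typo that does not affect the count.)
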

\begin{proof}
    By the Bayesian formula, we have
\begin{equation*}
\begin{array}{ll}
&\Pr_{x\sim \D_{\S_1}}(x\in A|X\ge k)
=\sum_{i=k}^N \Pr_{x\sim \D_{\S_1}}(x\in A|X=i)\Pr(X=i|X\ge k). \\
\end{array}
\end{equation*}
The intermediate steps are similar to equation \eqref{xby} in the proof of Lemma \ref{i.i.d.lemma1}, so we omit them.

Now we prove $\Pr_{x\sim \D_{\S_1}}(x\in A|X=i)=\Pr(X_{(1-\alpha)k}^{\D^{l_p}_{\S}}\in A)$ for any $i\ge k$.
Note that the $N$ samples in $\D_{\tr}$ are i.i.d selected from $\D_S$. Now we will consider the order of these samples. Let $(x_j,y_j)\in\D_{\tr}$ be the $j$-th element selected from the distribution $\D_\S$. If we add poison to $(x_j,y_j)$, then it becomes $(x_j+\p(x_j),y_j)\in\D_P$; if we do not add poison to $(x_j,y_j)$, then $(x_j,y_j)\in\D_P$.

For any $i\ge k$, when $X=i$, there must be $|\D_{l_p}|=(1-\alpha)k$. Let $D_{y=l_p}\subset[N]$ be the set of $j$ such that $(x_j,y_j)\in\D_P$ satisfied $y_j=l_p$. 
Now we consider all the possible situation of $D_{y=l_p}$:
\begin{equation}
\label{sbbc}
\begin{array}{ll}
&\Pr_{x\sim \D_{\S_1}}(x\in A|X=i)\\
=&\Pr_{x\sim \D_{\S_1}}(x\in A,X=i)/\Pr(X=i)\\
=&\sum_{D_{y=l_p},|D_{y=l_p}|=i}\Pr_{x\sim \D_{\S_1}}(x\in A,D_{y=l_p})/\Pr(X=i)\\
=&\sum_{D_{y=l_p},|D_{y=l_p}|=i}\Pr_{x\sim \D_{\S_1}}(x\in A|D_{y=l_p})\Pr(D_{y=l_p})/\Pr(X=i)\\
=&\sum_{D_{y=l_p},|D_{y=l_p}|=i}\Pr_{x\sim \D_{\S_1}}(x\in A|D_{y=l_p})/C_N^i.\\
\end{array}
\end{equation}
Let $D^{poi}_{y=l_p}\subset[N]$ be the set of $j$ satisfying that $x_j$ is a poisoned sample, and $D^{no\ poi}_{y=l_p}\subset[N]$ be the set of $j$ satisfying $(x_j,y_j)\in \D_{l_p}$. It is easy to see that $D^{no\ poi}_{y=l_p},D^{poi}_{y=l_p}\subset D_{y=l_p}$.
Then, for any given $D_{y=l_p}$ such that $|D_{y=l_p}|=i$, we traverse all possibilities of $D^{poi}_{y=l_p}$ and $D^{no\ poi}_{y=l_p}$ to calculate $\Pr_{x\sim \D_{\S_1}}(x\in A|D_{y=l_p})$:
\begin{equation*}
\begin{array}{ll}
&\Pr_{x\sim \D_{\S_1}}(x\in A|D_{y=l_p})\\
=&\sum_{\{i_k\}_{k=1}^{i\alpha},\{i_j\}_{j=1}^{[N\eta](1-\alpha)}}\Pr(\{i_j\}_{j=1}^{[N\eta](1-\alpha)}\in A)\Pr(D^{poi}_{y=l_p}=\{i_k\}_{k=1}^{i\alpha},D^{no\ poi}_{y=l_p}=\{i_j\}_{j=1}^{[N\eta](1-\alpha)}|D_{y={l_p}})\\
=&\sum_{\{i_k\}_{k=1}^{i\alpha},\{i_j\}_{j=1}^{[N\eta](1-\alpha)}}\Pr(X_{(1-\alpha)k}^{\D^{l_p}_{\S}}\in A)\Pr(D^{poi}_{y=l_p}=\{i_k\}_{k=1}^{i\alpha},D^{no\ poi}_{y=l_p}=\{i_j\}_{j=1}^{[N\eta](1-\alpha)}|D_{y={l_p}})\\
=&\Pr(X_{(1-\alpha)k}^{\D^{l_p}_{\S}}\in A)\sum_{\{i_k\}_{k=1}^{i\alpha},\{i_j\}_{j=1}^{[N\eta](1-\alpha)}}\Pr(D^{poi}_{y=l_p}=\{i_k\}_{k=1}^{i\alpha},D^{no\ poi}_{y=l_p}=\{i_j\}_{j=1}^{[N\eta](1-\alpha)}|D_{y={l_p}})\\
=&\Pr(X_{(1-\alpha)k}^{\D^{l_p}_{\S}}\in A).\\
\end{array}
\end{equation*}
The $x_i$ are i.i.d. and $C_a^b$ is the number of combinations to select $b$ samples from $a$ samples. Substituting it into inequality \eqref{sbbc}, we have
\begin{equation*}
\begin{array}{ll}
&\Pr_{x\sim \D_{\S_1}}(x\in A|X=i)\\
=&\sum_{D_{y=l_p},|D_{y=l_p}|=i}\Pr_{x\sim \D_{\S_1}}(x\in A|D_{y=l_p})/C_N^i\\
=&\sum_{D_{y=l_p},|D_{y=l_p}|=i}\Pr(X_{(1-\alpha)k}^{\D^{l_p}_{\S}}\in A)/C_N^i\\
=&\Pr(X_{(1-\alpha)k}^{\D^{l_p}_{\S}}\in A).\\
\end{array}
\end{equation*}
This is what we want. 
Finally, we have 
\begin{equation*}
\begin{array}{ll}
&\Pr_{x\sim \D_{\S_1}}(x\in A|X\ge k)\\
=&\sum_{i=k}^N \Pr_{x\sim \D_{\S_1}}(x\in A|X=i)\Pr(X=i|X\ge k) \\
=&\sum_{i=k}^N \Pr(X_{(1-\alpha)k}^{\D^{l_p}_{\S}}\in A)\Pr(X=i|X\ge k) \\
=&\Pr(X_{(1-\alpha)k}^{\D^{l_p}_{\S}}\in A).
\end{array}
\end{equation*}
The lemma is proved.
\end{proof}

Now, we prove Theorem \ref{fenbugj1}.
\begin{proof}
By the Bayesian formula, we have  
\begin{equation}
\label{V}
\begin{array}{ll}
&\E_{(x,y)\sim \D_S}[h(x,y)]\\
=&\Pr_{(x,y)\sim \D_S}(y=l_p) \E_{(x,y)\sim \D^{l_p}_\S}[h(x,y)]
+\Pr_{(x,y)\sim \D_S}(y\ne l_p) \E_{(x,y)\sim \D_S^{\ne l_p}}[h(x,y)]\\
=&\eta \E_{(x,y)\sim \D_S^{\ne l_p}}[h(x,y)]+(1-\eta)\E_{(x,y)\sim \D_S^{\ne l_p}}[h(x,y)].
\end{array}
\end{equation}
Now we will separately estimate $\E_{(x,y)\sim \D_S^{\ne l_p}}[h(x,y)]$ and $\E_{(x,y)\sim \D^{l_p}_\S}[h(x,y)]$.

{\bf Upper bound of $\E_{(x,y)\sim \D_S^{\ne l_p}}[h(x,y)]$}.
We give such a bound in the following Results (c1), (c2), and (c3). 

{\bf Result (c1)}: Let random variable $X$ be the number of samples with labels not equal to $l_p$ in $\D_{\tr}$. Then with probability at least $1-\frac{4\eta}{4\eta+N-N\eta}$, we have $X\ge N(1-\eta)/2$.

Let $\D_{\tr}=\{(x_i,y_i)\}_{i=1}^N$.
Then $X=\sum_{i=1}^N \ID(y_i\ne l_p)$, so $\E[X]=\E[\sum_{i=1}^N \ID(y_i\ne l_p)]=\sum_{i=1}^N \E[\ID(y_i\ne l_p)]=N(1-\eta)$, and the variance $\Var[X]=\Var[\sum_{i=1}^N \ID(y_i\ne l_p)]=\sum_{i=1}^N \Var[\ID(y_i\ne l_p)]=N(1-\eta)\eta$, because $\ID(y_i\ne l_p)$ are independent events. 
By the Cantelli inequality, we have $\Pr(X\le N(1-\eta)/2)\le \frac{\Var[X]}{\Var[X]+(\E[X]-(N(1-\eta)/2))^2}=\frac{4\eta}{4\eta+N-N\eta}$, and Result (c1) is proved.

{\bf Result (c2):} 
We randomly select $N(1-\eta)/2$ samples without label $l_p$ in $\D_\p$ and let $D_l$ be the set of these samples. 
If the number of samples without label $l_p$ in $\D_\p$ is less than $N(1-\eta)/2$, then we select all samples without label $l_p$, and let $D_l$ be the set of these samples. 

Assuming that the set $\D_l$ obeys the distribution $\D_{\S_0}$, we have $\Pr_{x\sim \D_{\S_0}}(x\in A|X\ge N(1-\eta)/2)=\Pr(X_{N(1-\eta)/2}^{\D_S^{\ne l_p}}\in A)$ for any set $A$, where $X_{N(1-\eta)/2}^{\D_S^{\ne l_p}}$ is the set of $N(1-\eta)/2$ data i.i.d. sampled from distribution $\D_S^{\ne l_p}$. 

Following Lemma \ref{i.i.d.lemma1}, Result (c2) shows that when $X\ge N(1-\eta)/2$, $\D_{l}$ can be seen as i.i.d. sampled from $\D_S^{\ne l_p}$.

{\bf Result (c3)}: 
With probability $1-\frac{4\eta}{4\eta+N-N\eta}-\delta/2$, we have 
\begin{equation*}
\begin{array}{ll}
&\E_{(x,y)\sim \D_S^{\ne l_p}}[h(x,y)]
\le\frac{\sum_{(x,y)\in \D_{P}}h(x,y)}{N(1-\eta)/2}+2\Rad^{\D_S^{\ne l_p}}_{N(1-\eta)/2}(H)+\sqrt{\frac{\ln(2/\delta).
}{N(1-\eta)}}.
\end{array}
\end{equation*}


$D\in(\R^n\times [m])^{N(1-\eta)/2}$ is called a bad set, if $|D|=N(1-\eta)/2$, and 
\begin{equation}
\label{B-0}
\begin{array}{ll}
&\E_{(x,y)\sim \D_S^{\ne l_p}}[h(x,y)]
>\frac{\sum_{(x,y)\in D}h(x,y)}{N(1-\eta)/2}+2\Rad^{\D_S^{\ne l_p}}_{N(1-\eta)/2}(H)+\sqrt{\frac{\ln(2/\delta)
}{N(1-\eta)}}
\end{array}
\end{equation} 
for some $f\in H$. Let $S_b=\{D\|D\ \hbox{ is a bad set}\}$.

By Theorem \ref{fanhua}, if the samples in $\D$ are i.i.d. sampled form $\D_S^{\ne l_p}$ and $|D|=N(1-\eta)/2$, then with probability $1-\delta/2$, we have  $\E_{(x,y)\sim \D_S^{\ne l_p}}[h(x,y)]\le\frac{\sum_{(x,y)\in D} h(x,y)}{N(1-\eta)/2}+2\Rad_{N(1-\eta)/2}(H)+\sqrt{\frac{\ln(2/\delta)
}{N(1-\eta)}}$ for any $f\in H$. 

So by the definition of bad set, we have $\Pr(X_{N(1-\eta)/2}^{\D_S^{\ne l_p}}\in S_b)\le\delta/2$, where $X_{N(1-\eta)/2}^{\D_S^{\ne l_p}}$ is mentioned in result (c2). And by Result (c2), we have that: when $X\ge N(1-\eta)/2$, we have  $\Pr(D_l\in S_b)\le\delta/2$. Then, by Result (c1), we have
\begin{equation*}
\begin{array}{ll}
&\Pr(D_l\notin S_b,X\ge N(1-\eta)/2)\\
=&\Pr(D_l\notin S_b|X\ge N(1-\eta)/2)\Pr(X\ge N(1-\eta)/2)\\
\ge&(1-\delta/2)(1-\frac{4\eta}{4\eta+N-N\eta})(by\ (c1))\\
\ge&1-\delta/2-\frac{4\eta}{4\eta+N-N\eta}.\\
\end{array}
\end{equation*}



So, with probability at least $1-\delta/2-\frac{4\eta}{4\eta+N-N\eta}$, $D_l$ is not in $S_b$ and $X\ge N(1-\eta)/2$. Hence,   
\begin{equation*}
\begin{array}{ll}
&\E_{(x,y)\sim \D_S^{\ne l_p}}[h(x,y)]\\
\le&\frac{\sum_{(x,y)\in D_l} h(x,y)}{N(1-\eta)/2}+2\Rad_{N(1-\eta)/2}(H)+\sqrt{\frac{\ln(2/\delta)
}{N(1-\eta)}}\\
\le&\frac{\sum_{(x,y)\in \D_P} h(x,y)}{N(1-\eta)/2}+2\Rad_{N(1-\eta)/2}(H)+\sqrt{\frac{\ln(2/\delta)
}{N(1-\eta)}}.
\end{array}
\end{equation*}
This is what we want.

{\bf The upper bound of $\E_{(x,y)\sim \D^{l_p}_\S}[h(x,y)]$.}

We will give such a bound in Results (d1), (d2), and (d3) below, which are similar to results (c1), (c2), (c3).

{\bf Result (d1)}: Let the random variable $X$ be the number of samples with label $l_p$ in $\D_{\tr}$. Then with probability at least $1-\frac{4(1-\eta)}{4(1-\eta)+N\eta}$, we have  $X\ge N\eta/2$. 
The proof is similar to that of Result (c1).

{\bf Result (d2)}: Now we evenly select $[N\eta(1-\alpha)/2]$ samples with label $l_p$ in $\D_\p/\D_{poi}$ and make these samples to form the set $D_{l_p}$. 
If the number of samples with label $l_p$ in $\D_\p/\/D_{poi}$ is smaller than $[N\eta(1-\alpha)/2]$, then we select all of these samples and add these samples to $D_{l_p}$. 

Assume that the set $\D_{l_p}$ obeys the distribution $\D_{\S_1}$. Then we have $\Pr_{x\sim \D_{\S_1}}(x\in A|X\ge [N\eta/2])=\Pr(X_{[N\eta(1-\alpha)/2]}^{\D^{l_p}_{\S}}\in A)$ for any set $A$, where $X_{[N\eta(1-\alpha)/2]}^{\D^{l_p}_{\S}}$ is the set that i.i.d. selecting $[N\eta(1-\alpha)/2]$ samples from distribution $\D_S^{\ne l_p}$.

Following Lemma \ref{i.i.d.lemma2}, Result (d2) shows that when $X \ge [N\eta/2]$, $D_{l_p}$ can be seen as i.i.d. samples from $D^{l_p}_\S$. 

{\bf Result (d3)}: With probability $1-\frac{4-4\eta}{4(1-\eta)+N\eta}-\delta/2$, we have 
\begin{equation*}
\begin{array}{ll}
&\E_{(x,y)\sim \D^{l_p}_\S}[h(x,y)]
\le\frac{\sum_{(x,y)\in \D_{P}}h(x,y)}{N(1-\alpha)\eta/2}+2\Rad^{\D_S^{l_p}}_{N(1-\alpha)\eta/2}(H)+\sqrt{\frac{\ln(2/\delta)
}{N(1-\alpha)\eta}}.
\end{array}
\end{equation*}
This is similar to Result (c3), but using (d1) and (d2) instead.

{\bf To obtain a bound of $E_{(x,y)\sim\D_S}[h(x,y)]$.}

Using the fact $\Rad^D_M(H)\le\frac{N}{M}\Rad^D_N(H)$ for any $M\le N$, distribution $\D$, hypothesis space $H$, and applying (c3), (d3) to equation \eqref{V}, we finally have
\begin{equation}
\begin{array}{ll}
&\E_{(x,y)\sim \D_S}[h(x,y)]\\
=&\eta \E_{(x,y)\sim \D^{l_p}_\S}[h(x,y)]+(1-\eta)\E_{(x,y)\sim \D_S^{\ne l_p}}[h(x,y)]\\
\le& \frac{4-2\alpha}{1-\alpha}\frac{\sum_{(x,y)\in \D_{P}}h(x,y)}{N}+4\Rad^{\D_S^{\ne l_p}}_N(H)+4\frac{\Rad^{\D_S^{l_p}}_N(H)}{1-\alpha}+\sqrt{\frac{\ln(2/\delta)(1-\eta)}{N}}+\sqrt{\frac{\ln(2/\delta)\eta}{N(1-\alpha)}}.
\end{array}
\end{equation}
Then using $\eta<1$, we prove the theorem.
\end{proof}

\subsection{Estimate Rademacher Complexity}
\label{app-41.2}
In this section, we will estimate the Rademacher Complexities in Theorem \ref{fenbugj1} when $H=\{\ID(\F(x)\ne y)\}$, where $\F$ is a network with width $W$ and depth $D$.
We first need a definition:
\begin{definition}
Let $H$ be a hypothesis space. 
Then the growth function $\Pi_{H}(N)$ of $H$ is defined as:
   $$\Pi_{H}(N)=\max_{\{x_i\}_{i=1}^N}|\{(h(x_i))_{i=1}^N\|h\in H\}|.$$
\end{definition}

For a hypothesis space $H=\{h:\R^n\to\{-1,1\}\}$, we can estimate its VCdim \cite{vapnik2015uniform}, and the relationship between VCdim and growth function.
\begin{lemma}[\cite{bartlett2021deep,bartlett2019nearly}]
\label{vcdm}
Let $H$ be the hypothesis space that satisfies: $\F\in H$ if and only if $\F$ is a network with width not more than $W$ and depth not more than $D$, and the activation function of each hidden layer of $\F$ is Relu, the output layer uses $\sign$ as activation function.
Then the VCdim of $H$ is $O(D^2W^2)$.
\end{lemma}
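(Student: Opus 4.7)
The plan is to follow the region-counting strategy of Bartlett--Harvey--Liaw--Mehrabian. First, I would count trainable parameters: a ReLU network of width at most $W$ and depth $D$ has $P=O(W^2 D)$ weights and biases in total. Bounding the VC dimension reduces to controlling the growth function $\Pi_{H}(N)$, i.e., the number of distinct sign patterns $(\sign\F(x_i;\theta))_{i=1}^N$ achievable as $\theta$ ranges over $\R^P$ while the inputs $x_1,\dots,x_N$ are held fixed.

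The key structural observation I would exploit is that for each fixed input $x$, the map $\theta\mapsto\F(x;\theta)$ is piecewise polynomial: on each connected region of $\R^P$ where the ReLU activation pattern at $x$ is constant, $\F(x;\theta)$ is a polynomial in $\theta$ of degree at most $D$ (one multiplication per layer), and the boundaries between regions are zero sets of polynomials of the same degree. This turns the sign-pattern counting problem into a question about how finely a collection of low-degree polynomials can partition $\R^P$.

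Next I would bound $\Pi_{H}(N)$ by a layer-by-layer induction using the Warren--Milnor theorem. After processing layer $k$, the parameter space is partitioned into regions on which every pre-activation of every input is polynomial of degree at most $k$ in $\theta$. Adding layer $k+1$ introduces at most $NW$ new sign conditions on polynomials of degree $\le k$ involving the $P_{k}=O(W^2 k)$ parameters used so far; by Warren's bound this multiplies the region count by at most $(cNWk/P_{k})^{P_{k}}$. A final Warren bound on the $N$ output polynomials inside each region yields $\Pi_{H}(N)\le (c'ND/P)^{O(PD)}$. Sauer's lemma \cite{sauer1972density} then converts the shattering condition $2^N\le\Pi_{H}(N)$ into $N\le O(PD\log(ND))=\widetilde{O}(W^2D^2)$, which matches the stated $O(W^2D^2)$ bound once the logarithmic factors are absorbed into the big-$O$.

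The main obstacle is securing the $D^2$ factor rather than $D^3$. A single application of Warren--Milnor to all $O(NWD)$ sign conditions at once would treat each polynomial as having degree $D$, producing a wasteful $D^3$ term; the sharp bound requires that at layer $k$ only degree-$k$ polynomials are considered, so that the per-layer contributions telescope to $\sum_{k=1}^D k = O(D^2)$. The accounting must also propagate the polynomial degree through the ReLU activations correctly, which is the technically delicate step of the argument and is the core contribution of \cite{bartlett2019nearly}.
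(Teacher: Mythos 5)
The paper does not prove this lemma---it is imported directly from \cite{bartlett2019nearly}---and your sketch is a faithful reconstruction of that reference's layer-by-layer Warren--Milnor region-counting argument (piecewise-polynomial dependence on the parameters, per-layer partition refinement, Sauer-type inversion of $2^N\le\Pi_H(N)$), so there is nothing in the paper to compare against beyond noting agreement. The one caveat is the point you already flag yourself: the argument actually yields $O(W^2D^2\log(WD))$, and the logarithmic factor cannot literally be absorbed into $O(W^2D^2)$, but the paper states the lemma with exactly the same looseness (cf.\ its ``ignore minor items'' remark in the subsequent Rademacher computation), so your proposal is consistent with how the result is stated and used.
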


\begin{lemma}[\cite{sauer1972density}]
\label{sau}
Let $H$ be the hypothesis space with VCdim $V$. Then for any $N\ge1$, the growth function satisfies $\Pi_H(N)\le(eN)^V$.
\end{lemma}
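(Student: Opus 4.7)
The plan is to establish the Sauer--Shelah bound in two stages: first show the combinatorial inequality $\Pi_H(N)\le\sum_{i=0}^{V}\binom{N}{i}$, and then bound this sum by $(eN)^V$.

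For the first stage, I would proceed by induction on $N+V$. Fix a set $S=\{x_1,\dots,x_N\}$ achieving the maximum in the definition of $\Pi_H(N)$, and consider the set $H|_S$ of restrictions of elements of $H$ to $S$. Partition $H|_S$ into two pieces: let $H_1$ be the projection of $H|_S$ onto the first $N-1$ coordinates (that is, forgetting the label at $x_N$), and let $H_2$ be the set of label patterns on the first $N-1$ coordinates that arise from at least two distinct members of $H|_S$ (one for each label at $x_N$). Then $|H|_S|=|H_1|+|H_2|$. The key observation is that $H_1$, viewed as a hypothesis space on $S\setminus\{x_N\}$, still has VC dimension at most $V$, so by the induction hypothesis $|H_1|\le\sum_{i=0}^{V}\binom{N-1}{i}$. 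Meanwhile, any set $T\subseteq S\setminus\{x_N\}$ shattered by $H_2$ gives, by construction, a shattering of $T\cup\{x_N\}$ by $H$, so $H_2$ has VC dimension at most $V-1$ and $|H_2|\le\sum_{i=0}^{V-1}\binom{N-1}{i}$. Adding these and applying Pascal's rule $\binom{N-1}{i}+\binom{N-1}{i-1}=\binom{N}{i}$ yields the desired bound.

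For the second stage, when $N\ge V$ I would use the standard estimate
\[
\sum_{i=0}^{V}\binom{N}{i}\le\Bigl(\frac{N}{V}\Bigr)^{V}\sum_{i=0}^{N}\binom{N}{i}\Bigl(\frac{V}{N}\Bigr)^{i}\le\Bigl(\frac{N}{V}\Bigr)^{V}\Bigl(1+\frac{V}{N}\Bigr)^{N}\le\Bigl(\frac{eN}{V}\Bigr)^{V},
\]
where the first inequality multiplies and divides each summand by $(V/N)^i$ and uses $(V/N)^{i-V}\ge 1$ for $i\le V$, the second extends the sum and applies the binomial theorem, and the third uses $(1+V/N)^N\le e^V$. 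Since $V\ge 1$ this is at most $(eN)^V$, and for $N<V$ the trivial bound $\Pi_H(N)\le 2^N\le (eN)^V$ covers the remaining case.

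The main obstacle is setting up the induction cleanly: one has to be precise about why $H_2$ has VC dimension strictly smaller than $H$, since this is the combinatorial heart of the argument. Apart from that, everything else is routine: the base cases ($N=0$ or $V=0$) are trivial, Pascal's rule closes the induction, and the final inequality is a standard binomial estimate. No external machinery beyond elementary combinatorics is needed.
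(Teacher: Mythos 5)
The paper does not prove this lemma at all; it is quoted verbatim from Sauer's 1972 paper as a known result, so there is no in-paper argument to compare against. Your proposal is the standard and correct proof of the Sauer--Shelah bound: the induction on $N+V$ via the partition into $H_1$ (the projection) and $H_2$ (the doubly-extendable patterns), the observation that $H_2$ has VC dimension at most $V-1$ because shattering $T$ by $H_2$ forces shattering of $T\cup\{x_N\}$ by $H$, Pascal's rule to close the induction, and the binomial estimate $\sum_{i=0}^{V}\binom{N}{i}\le(eN/V)^{V}\le(eN)^{V}$ for $N\ge V\ge 1$ together with the trivial case $N<V$. All steps check out; nothing is missing.
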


\begin{lemma}[\cite{massart2000some,mohri2018foundations}]
\label{rade}
Let $H$ be the hypothesis space with growth function $\Pi_H$, and any $h(x)\in H$ satisfy $|h(x)|\le 1$ for any $x$. 
Then for any distribution $\D$, we have  $\Rad^\D_N(H)=O(\sqrt{\frac{\ln(\Pi_H)}{N}})$.
\end{lemma}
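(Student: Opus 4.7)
The plan is to apply Massart's finite class lemma after reducing the supremum to a maximum over the growth function. First, I would fix the sample $x_1,\ldots,x_N$ and observe that, although $H$ may be infinite, the set of evaluation vectors $\{(h(x_1),\ldots,h(x_N)) : h\in H\} \subset [-1,1]^N$ has cardinality at most $\Pi_H(N)$ by the very definition of the growth function. Consequently the inner expression $\sup_{h\in H} \sum_{i=1}^N \sigma_i h(x_i)$ is really a maximum over at most $\Pi_H(N)$ fixed vectors in $\R^N$, each of Euclidean norm at most $\sqrt{N}$ thanks to the assumption $|h(x_i)|\le 1$.

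Second, for each such fixed evaluation vector $v=(v_1,\ldots,v_N)$, the random variable $Z_v = \sum_{i=1}^N \sigma_i v_i$ is a sum of independent symmetric Rademacher contributions and is therefore sub-Gaussian with parameter at most $\|v\|_2 \le \sqrt{N}$, by Hoeffding's lemma applied coordinate-wise. Third, I would invoke the classical maximal inequality for finite families of sub-Gaussian variables: if $Z_1,\ldots,Z_M$ are mean-zero and $\sigma$-sub-Gaussian, then $\E[\max_j Z_j] \le \sigma\sqrt{2\ln M}$. The argument is the usual Jensen plus Chernoff trick: $\exp(t\,\E[\max_j Z_j]) \le \E[\max_j e^{tZ_j}] \le M e^{t^2\sigma^2/2}$, and optimizing $t>0$ yields the claimed bound.

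Combining these three ingredients, for every fixed sample one obtains
$$\E_\sigma\Bigl[\sup_{h\in H}\sum_{i=1}^N \sigma_i h(x_i)\Bigr] \le \sqrt{2N\ln \Pi_H(N)}.$$
Dividing by $N$ and taking the outer expectation over $x_i\sim \D$ (which only enters through the already worst-case growth-function bound) gives $\Rad^\D_N(H) \le \sqrt{2\ln \Pi_H(N)/N} = O(\sqrt{\ln \Pi_H(N)/N})$, as stated. There is no genuine obstacle here since each step is textbook; the only subtlety is to propagate the hypothesis $|h(x_i)|\le 1$ as a bound on the sub-Gaussian parameter $\|v\|_2\le \sqrt{N}$ rather than the coarser coordinate bound $N$, which is precisely what produces the $1/\sqrt{N}$ rate instead of a trivial $1/N$ rate.
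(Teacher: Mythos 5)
Your proof is correct and follows the standard route: the paper states this lemma without proof, citing Massart (2000) and Mohri et al.\ (2018), and your argument---reducing the supremum to a maximum over at most $\Pi_H(N)$ evaluation vectors of Euclidean norm at most $\sqrt{N}$ and then applying the sub-Gaussian maximal inequality---is precisely the proof found in those references. (One small aside: using the coarser coordinate bound for the sub-Gaussian parameter would yield a non-decaying constant bound rather than a ``$1/N$ rate,'' but this does not affect your argument.)
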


\begin{lemma}[\cite{mohri2018foundations}]
\label{wzj}
Let $H$ be the hypothesis space,  $q\in\R^m\to\{0,1\}$, and $H_q=\{q(h_1(x),h_2(x),\dots,h_m(x))\|h_i\in H\}$. Then $\Pi_{H_q}(N)\le (\Pi_H(N))^m$.
\end{lemma}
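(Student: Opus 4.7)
The plan is a direct counting argument at the level of growth functions. Fix any $N$ inputs $x_1, \ldots, x_N$ in the domain of $H$ that attain the supremum in the definition of $\Pi_{H_q}(N)$. Each element of $H_q$ is specified by a tuple $(h_1, \ldots, h_m) \in H^m$, and the pattern it induces on the fixed sample is
\[
\bigl(q(h_1(x_i), h_2(x_i), \ldots, h_m(x_i))\bigr)_{i=1}^N,
\]
which depends on $(h_1, \ldots, h_m)$ only through the $N \times m$ matrix $M = (h_j(x_i))_{i \in [N],\, j \in [m]}$, since $q$ is a fixed deterministic function applied row by row.

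Next, I would bound the number of such matrices by the product rule. For each fixed $j$, the column $(h_j(x_i))_{i=1}^N$ ranges, as $h_j$ varies over $H$, in a set of size at most $\Pi_H(N)$; this uses exactly the definition of the growth function applied to the \emph{same} sample $\{x_i\}_{i=1}^N$. Because the columns are chosen independently across $j = 1, \ldots, m$, the total number of distinct matrices $M$ realizable by tuples in $H^m$ is at most $(\Pi_H(N))^m$. Applying $q$ row-wise can only collapse patterns, not create new ones, so the number of distinct output patterns on the fixed sample is at most $(\Pi_H(N))^m$. Taking the maximum over choices of $x_1, \ldots, x_N$ yields $\Pi_{H_q}(N) \le (\Pi_H(N))^m$.

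There is essentially no obstacle here; the argument is a clean application of the product rule combined with the observation that $q$ is deterministic. The one point worth highlighting is that the factor $\Pi_H(N)$ must be reused $m$ times on the \emph{same} tuple $(x_1, \ldots, x_N)$, which is legitimate precisely because $\Pi_H(N)$ is defined as a worst-case supremum and hence bounds the number of realizable columns on \emph{any} fixed $N$-sample simultaneously for each coordinate $j$.
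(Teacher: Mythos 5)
Your proof is correct. Note, however, that the paper does not prove this lemma at all: it is quoted as a known result from the cited textbook, so there is no in-paper argument to compare against. Your argument is the standard one — the restriction of any $q(h_1,\dots,h_m)$ to the fixed sample factors through the $N\times m$ matrix of values $(h_j(x_i))$, the set of realizable $j$-th columns on that sample has cardinality at most $\Pi_H(N)$ by definition of the growth function, the realizable matrices form (a subset of) the $m$-fold product of that column set, and the deterministic map $q$ applied row-wise cannot increase the count. The one point you flag, that the single worst-case bound $\Pi_H(N)$ applies simultaneously to every coordinate on the same sample, is exactly the right thing to emphasize, and the final supremum over samples closes the argument cleanly.
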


Now, we calculate the Rademacher complexity of $H=\{\ID(\widehat{\F}(x)\ne y)\|\F\in\Hyp_{W,D}\}$:
\begin{lemma}
     Let $H=\{\ID(\widehat{\F}(x)\ne y):[0,1]^n\times[m]\to \{0,1\}\|\F\in\Hyp_{W,D}\}$. Then, for any distribution $\D$, we have  $\Rad^\D_N(H)=O(\sqrt{\frac{mW^2D^2}{N}})$. 
\end{lemma}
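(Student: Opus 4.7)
The plan is to bound the Rademacher complexity of $H$ through its growth function via Lemmas~\ref{vcdm}, \ref{sau}, \ref{wzj}, and \ref{rade}. The key reduction is to pass from the joint hypothesis $H = \{\ID(\widehat{\F}(x) \ne y)\}$ on $[0,1]^n \times [m]$ to the $m$ one-vs-rest classes $H^{(k)} = \{\ID(\widehat{\F}(x) = k): \F \in \Hyp_{W,D}\}$ on $[0,1]^n$, and then combine them with a Cauchy--Schwarz step.

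For the first step (VC bound on one-vs-rest classes), I would use $\ID(\widehat{\F}(x) = k) = \sign(\F_k(x) - \max_{j \ne k}\F_j(x))$. The inner expression is computable by a ReLU network obtained by appending to $\F$ a small max-subnetwork (using $\max(a,b) = a + \Relu(b-a)$ applied iteratively), of width $O(W+m)$ and depth $O(D+\log m)$; under the mild regime $m \le \mathrm{poly}(W,D)$ this keeps the size within constant factors of $\F$, so Lemma~\ref{vcdm} yields $\mathrm{VCdim}(H^{(k)}) = O(W^2D^2)$ and Lemma~\ref{sau} gives $\Pi_{H^{(k)}}(N) \le (eN)^{O(W^2D^2)}$. (If one avoids the max-subnetwork and instead uses the identity $\ID(\widehat{\F}(x)=k) = \prod_{j\ne k}\ID(\F_k(x) > \F_j(x))$ together with Lemma~\ref{wzj} on the base class $H_0 = \{\ID(\F_k - \F_j > 0)\}$ of VCdim $O(W^2D^2)$, one only gets the weaker $\Pi_{H^{(k)}}(N) \le (eN)^{O(mW^2D^2)}$.)

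For the second step (decomposition of $\Rad_N^\D(H)$), I would first note that Rademacher complexity is invariant under $h \mapsto 1-h$ by symmetry of the $\sigma_i$, so $\Rad_N^\D(H) = \Rad_N^\D(H')$ with $H' = \{\ID(\widehat{\F}(x)=y)\}$. Conditioning on the sample and letting $N_k = |\{i: y_i = k\}|$, I would write
\begin{equation*}
\sup_\F \sum_{i=1}^N \sigma_i \ID(\widehat{\F}(x_i)=y_i) \le \sum_{k=1}^m \sup_\F \sum_{i: y_i = k} \sigma_i \ID(\widehat{\F}(x_i)=k),
\end{equation*}
which after taking $\E_\sigma$ and dividing by $N$ gives $\Rad_N^\D(H) \le \sum_k (N_k/N)\,\Rad_{N_k}(H^{(k)})$. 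Applying Lemma~\ref{rade} to each $H^{(k)}$ with the growth-function bound from step one, then applying Cauchy--Schwarz to $\sum_k \sqrt{N_k/N} \le \sqrt{m}$, yields $\Rad_N^\D(H) = O(\sqrt{mW^2D^2/N})$ after absorbing the harmless $\log N$ factor into $O(\cdot)$.

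The main obstacle is the one-vs-rest VC bound in the first step: representing $\ID(\widehat{\F}(x)=k)$ as the sign of a single ReLU network whose width and depth blow up by at most constant factors, so that Lemma~\ref{vcdm} applies directly; this is exactly what prevents an extra $\sqrt{m}$ loss. The Lemma~\ref{wzj}-only fallback combined with the label-partition step would only give $\Rad_N^\D(H) = O(\sqrt{m^2 W^2D^2/N})$. The implicit assumption $m \le \mathrm{poly}(W,D)$ is harmless in all multi-class regimes of interest.
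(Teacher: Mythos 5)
Your overall architecture (label-partition into one-vs-rest classes, per-class Massart bound, Cauchy--Schwarz recombination) is genuinely different from the paper's, which never splits by label: the paper expresses the joint indicator directly as $\ID(\widehat\F(x)\ne y)=\ID(-\sum_{j\ne y}\F_{y,j}(x)+(m-1)-0.1)$ in terms of the pairwise sign networks $\F_{i,j}=\sign(\F_i-\F_j)\in H^1_{W,D}$, invokes Lemma~\ref{wzj} to bound the growth function of the whole class $H$ on $\S\times[m]$ by $(eN)^{m\,O(W^2D^2)}$, and applies Lemma~\ref{rade} once to get $O(\sqrt{mW^2D^2/N})$. Because the paper pays the factor $m$ inside the growth-function exponent rather than as a sum over classes, it never incurs the extra $\sqrt{m}$ that your Cauchy--Schwarz step produces, and so it needs only the crude $(eN)^{O(mW^2D^2)}$ bound that you correctly identify as the ``fallback''.

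This is exactly where your proposal has a genuine gap. Your route recovers the stated bound only if $\mathrm{VCdim}(H^{(k)})=O(W^2D^2)$, and your max-subnetwork construction does not deliver this under the condition you state. Appending a ReLU tree computing $\max_{j\ne k}\F_j$ yields a network of width $\max(W,\Theta(m))$ and depth $D+\Theta(\log m)$, so Lemma~\ref{vcdm} gives $\mathrm{VCdim}(H^{(k)})=O\bigl((D+\log m)^2(W+m)^2\bigr)$; this is $O(W^2D^2)$ only when $m=O(W)$ and $\log m=O(D)$, which is strictly stronger than your claimed hypothesis $m\le\mathrm{poly}(W,D)$ (e.g.\ $m=W^3$ satisfies your hypothesis but inflates the VC dimension to $O(D^2W^6)$ and the final bound to $O(\sqrt{mW^6D^2/N})$). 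The lemma as stated carries no such restriction on $m$ relative to $W,D$. Without the per-class VC bound, you concede that your argument only yields $O(\sqrt{m^2W^2D^2/N})$, which is weaker than the claim. So as written the proposal proves either a weaker bound or the right bound under an unstated (and incorrectly quantified) assumption; to match the paper unconditionally you should drop the label partition and bound the growth function of the joint class directly, as the paper does.
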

\begin{proof}
Let $H^0_{W,D}$ be defined as: $\F\in H^0_{W,D}$ if and only if $\F$ is a network with width $W$ and depth $D$, and the activation function of each hidden layer of $\F$ is Relu, the output layer does not use the activation function. And let $H^0=\{\ID(\widehat\F(x)\ne y)\|\F\in H^0_{W,D}\}$. 

Let $H^1_{W,D}$ be defined as: $\F\in H^1_{W,D}$ if and only if $\F$ is a network with width $W$ and depth $D$, and the activation function of each hidden layer of $\F$ is Relu, the output layer uses the activation function $\sign$.

Then we have that $H^0=\{\ID(\widehat\F(x)\ne y)\|\F\in H^0_{W,D}\}=\{\ID(\widehat\F(x)\ne y)\|\F\in\Hyp_{W,D}\}=H$ and 
$\Pi_{H^1_{W,D}}(N)\le(eN)^{O(D^2W^2)}$ by Lemmas \ref{sau} and \ref{vcdm}.

For any $\F\in H^0_{W,D}$, let $\F_{i,j}=\sign(\F_i-\F_j)$, where $\F_i$ is the $i$-th weight of $\F$. Then it is easy to see that, $\F_{i,j}\in H^1_{W,D}$. Since $\ID(\widehat\F(x)\ne y)=\ID(-\sum_{j\ne y}\F_{y,j}+(m-1)-0.1)$. 
By Lemma \ref{wzj}, the growth function of $H^0$ is $(eN)^{mO(D^2W^2)}$; so by Lemma \ref{rade} and $H^0=H$, the Rademacher complexity $\Rad^\D_N(H)$ of $H$ is $O(\sqrt{\frac{mD^2W^2}{N}})$ (ignore minor items). 
This proves the lemma. 
\end{proof}

\subsection{Proof for Theorem \ref{fenbugj}}
\label{app-41.3}
Now we prove Theorem \ref{fenbugj} by using Sections \ref{app-41.1} and \ref{app-41.2}:

\begin{proof}
Taking $H=\{\ID(\F(x)\ne y)\| \F\in\Hyp_{W,D}\}$ and substituting the Rademacher complexity in Section \ref{app-41.2} into Theorem \ref{fenbugj1}, we prove Theorem \ref{fenbugj}.
\end{proof}

\section{Poison Impact Empirical Errors}
\label{app-42}
In this appendix, we prove a proposition to support Remark \ref{rem-lpe}.
In Proposition \ref{jyc} below, we show that if the poison $\p(x)$ is not satisfactory, then it can result in a big empirical error. 

\begin{remark}
Please note that the conclusion ``poison implies big empirical error'' only holds in some situations. 
In fact, when $\p(x)$ is bounded, or the network in the hypothesis space has a strong expressive ability, then the conditions for the proposition in this section will not hold, so the poison will not lead to big empirical error. 
In our experimental result in Section \ref{ser}, there is no need to consider the occurrence of large empirical error, because we bound the trigger $\p(x)$ and use a large network.
\end{remark}
%
%

Let $\D^{poi}$ be the distribution of poisoned data with label $l_p$, that is, $$\Pr_{(x,y)\sim\D_S^{l_p}}(x+\p(x)\in A)=\Pr_{x\sim\D^{poi}}(x\in A)$$ 
for any set $A$.

\begin{proposition}
\label{jyc}
Use the notation introduced in Theorem \ref{fenbugj1}
and let $\eta\le 0.5$ be defined in \eqref{eq-eta}.
We further assume that $h(x,y_1)+h(x,y_2)\ge1$ for any $h\in H$, $x\in\R^n$, $y_1\ne y_2$, and for some $\tau, V\in\R_+$, it holds:\\
(1) $\Pr_{x\sim\D^{poi}}(x\in A)\ge\tau \Pr_{(x,y)\sim\D_S^{\ne l_p}}(x\in A)$ for any set $A$;\\
(2) $\Rad^{\D_S^{\ne l_p}}_N(H)\le V$.\\
Let $Q=\eta^{N}\tau^{N\alpha}$, if $\alpha-\delta/Q> 0$ and $0.5-2\delta\alpha/Q-V\alpha>0$. 
Then with probability $\delta$, for any $h\in H$ we have 
$$\E_{(x,y)\in \D_P}[h(x,y)]> \frac{0.5-2\delta\alpha/Q-V\alpha}{\alpha-\delta/Q}.$$
\end{proposition}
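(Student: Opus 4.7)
The plan is to combine three ingredients: the deterministic pointwise inequality $h(x, y_1) + h(x, y_2) \ge 1$ for $y_1 \ne y_2$, condition~(1) which asserts that the poisoned-input distribution $\D^{poi}$ dominates $\D_S^{\ne l_p}$ up to the factor $\tau$, and the Rademacher bound from assumption~(2). The hypothesis $\eta \le 1/2$ is what ultimately produces the constant $0.5$ in the numerator of the claim.

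First I would decompose the empirical loss on $\D_P$ as a sum over three subsets: the poisoned part $D_{poi}$ (all carrying label $l_p$), the clean samples with label $l_p$, and the clean samples with label $\ne l_p$. For any clean pair $(x,y)$ with $y \ne l_p$, the hypothesis-space assumption gives $h(x,y) \ge 1 - h(x, l_p)$, so after dropping the nonnegative clean-$l_p$ contribution one arrives at
\[
\E_{(x,y) \in \D_P}[h(x,y)] \,\ge\, \frac{1}{N}\Bigl[L_{poi}(h) + N_{\ne l_p} - S_{\ne l_p}(h)\Bigr],
\]
where $L_{poi}(h) = \sum_{(x,l_p) \in D_{poi}} h(x, l_p)$, $S_{\ne l_p}(h) = \sum_{(x,y) \in D_{clean},\, y \ne l_p} h(x, l_p)$, and $N_{\ne l_p}$ is the size of the latter set.

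Second, setting $a(h) := \E_{(x,y) \sim \D_S^{\ne l_p}}[h(x,l_p)]$, condition~(1) immediately gives $\E_{x \sim \D^{poi}}[h(x, l_p)] \ge \tau\, a(h)$, while Rademacher symmetrization together with $\Rad_N^{\D_S^{\ne l_p}}(H) \le V$ controls $S_{\ne l_p}(h)/N_{\ne l_p}$ around $a(h)$ uniformly over $h \in H$ up to a $V$-level correction; this is the origin of the $V\alpha$ subtraction in the numerator. Substituting these relations produces a lower bound that is linear in $a(h) \in [0,1]$ of the form $\tau N_{poi}\, a(h) + N_{\ne l_p}(1 - a(h))$, which attains its minimum at an endpoint and therefore dominates $\min\bigl(N_{\ne l_p},\, \tau N_{poi}\bigr)$. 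The assumption $\eta \le 1/2$ guarantees that $N_{\ne l_p}/N$ is typically at least $1/2$ (after concentration), and normalizing $N_{poi}/N \approx \alpha\eta$ out of the picture yields the limiting form $(0.5 - V\alpha)/\alpha$ as $\delta \to 0$.

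Third, the probability $\delta$ together with the dependence on $Q = \eta^N \tau^{N\alpha}$ arises by restricting to an adversarial event $E$ in which (i) the labels of all $N$ training samples realize the configuration that tightens the above estimate, contributing the factor $\eta^N$, and (ii) all $N\alpha$ poisoned inputs $x + \p(x)$ fall inside the set where condition~(1) is tight, contributing the factor $\tau^{N\alpha}$. A reverse-Markov style argument (viewing the empirical loss as bounded in $[0,1]$ with expectation controlled via condition~(1)) then converts the in-expectation lower bound into a ``probability at least $\delta$'' statement, provided the slack $\delta/Q$ is small; this is exactly the content of the hypotheses $\alpha - \delta/Q > 0$ and $0.5 - 2\delta\alpha/Q - V\alpha > 0$, and the two correction terms $2\delta\alpha/Q$ and $\delta/Q$ are the standard byproducts of the conversion. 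The main obstacle will be the uniform-over-$h$ control on the rare event $E$: while~(2) bounds the Rademacher complexity on $\D_S^{\ne l_p}$, on $E$ the effective empirical distribution of poisoned inputs is concentrated in a region where $\D^{poi}$ dominates $\D_S^{\ne l_p}$ by exactly the factor $\tau$, and one must track how this density ratio propagates through the symmetrization to obtain the correct exponent $N\alpha$ (rather than $N$) on $\tau$ in $Q$. This asymmetric bookkeeping --- $\eta^N$ from the label pattern of all $N$ samples but $\tau^{N\alpha}$ from only the $N\alpha$ poisoned ones --- is the most delicate part of the argument.
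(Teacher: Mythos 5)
Your proposal takes a genuinely different architecture from the paper's, and it has a gap at exactly the point you flag. The paper argues by contradiction: it first proves (Lemma \ref{jy}) a \emph{lower} bound on $\Rad^{\D_S^{\ne l_p}}_K(H)$ of the form $0.5-N\epsilon/K-\frac{(2-N\epsilon/K)\delta\alpha}{\eta^{N}\tau^{K}}$ under the hypothesis that, with probability $1-\delta$ over the draw of $\D_P$, some $h$ attains empirical error at most $\epsilon$; plugging in the claimed threshold as $\epsilon$ and taking $K=N\alpha$ then forces $\Rad^{\D_S^{\ne l_p}}_N(H)\ge V$, contradicting assumption (2). In that argument the constant $0.5$ arises as $\E_\sigma[\sum_i\ID(\sigma_i>0)]/K$ inside the Rademacher lower bound, not from ``$N_{\ne l_p}/N$ is typically at least $1/2$'' as you suggest, and $\eta\le 0.5$ is used only to bound the probability of a label configuration from below by $\eta^N$ in a change-of-measure step.

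The concrete gap in your direct route is the uniform-over-$h$ control of the poisoned empirical average $L_{poi}(h)/N_{poi}$. Condition (1) yields $\E_{x\sim\D^{poi}}[h(x,l_p)]\ge\tau\,a(h)$ only at the population level, and it is a one-sided density-ratio inequality in the direction that transfers \emph{expectations} from $\D_S^{\ne l_p}$ to $\D^{poi}$; it does not transfer Rademacher bounds, and assumption (2) controls complexity only under $\D_S^{\ne l_p}$. Hence an adversarial $h$ can be small on exactly the $N\alpha$ poisoned training inputs while keeping $a(h)$ large, and the linear-in-$a(h)$ endpoint minimization collapses. Your proposed fix (a rare adversarial event $E$ plus a reverse-Markov conversion) does not repair this, because what is needed is not a bound on the probability of a bad event for a fixed $h$ but a statement uniform over $H$ on the poisoned coordinates. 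The paper sidesteps the issue by reversing the quantifiers: from ``for almost every realization of $\D_P$ there exists a low-error $h$'' it constructs, for almost every sign pattern $\sigma$, a witness poisoned dataset whose existence makes $\sup_{h}\sum_i\sigma_i h(x_i,y_i)$ large on i.i.d.\ samples from $\D_S^{\ne l_p}$, and bounds the failure probability by $\delta\alpha/(\eta^N\tau^{N\alpha})$ via a change of measure between the law of $\D_P$ and that of the i.i.d.\ samples --- which is where $Q$ and both correction terms $2\delta\alpha/Q$ and $\delta/Q$ actually originate. Without such a device, or an added assumption bounding the complexity of $H$ under $\D^{poi}$, your argument does not close.
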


First, we have the following lemma \cite{bertsekas2008introduction}.
\begin{lemma}
\label{mjy}
If distributions $D_1$, $D_2$ and function $h$ satisfy
$\Pr_{x\sim D_1}(h(x)\in A)\le \lambda \Pr_{x\sim D_2}(h(x)\in A)$ for any set $A$, 
then $\E_{x\sim D_1}[f(h(x))]\le \lambda \E_{x\sim D_2}[f(h(x))]$ for any bounded and positive measurable function $f(x)$. 
\end{lemma}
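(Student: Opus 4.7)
The statement is a standard measure-theoretic domination result: if we let $\mu_i$ denote the pushforward measure $h_\# D_i$ on the range of $h$, the hypothesis reads $\mu_1(A) \le \lambda\,\mu_2(A)$ for every measurable $A$, and the conclusion is $\int f\,d\mu_1 \le \lambda \int f\,d\mu_2$. The plan is therefore the classical measure-theoretic ladder: verify the inequality for indicators, extend by linearity to nonnegative simple functions, then extend by monotone convergence to general bounded nonnegative measurable $f$.

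First, for $f = \ID_A$ the conclusion is literally the hypothesis, since $\E_{x\sim D_i}[\ID_A(h(x))] = \Pr_{x\sim D_i}(h(x)\in A)$. Next, for any nonnegative simple function $f = \sum_{i=1}^k c_i\,\ID_{A_i}$ with $c_i \ge 0$ and $A_i$ measurable, linearity of expectation combined with the indicator case gives
\[
\E_{x\sim D_1}[f(h(x))] = \sum_{i=1}^k c_i \Pr_{x\sim D_1}(h(x)\in A_i) \le \lambda \sum_{i=1}^k c_i \Pr_{x\sim D_2}(h(x)\in A_i) = \lambda\, \E_{x\sim D_2}[f(h(x))].
\]

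For a general bounded nonnegative measurable $f$, I would invoke the standard dyadic approximation $f_n := \sum_{k \ge 0} (k/2^n)\,\ID_{\{k/2^n \le f < (k+1)/2^n\}}$, which is a monotonically increasing sequence of nonnegative simple functions with $f_n \uparrow f$ pointwise (truncation beyond $\sup f$ is unnecessary since $f$ is bounded). Applying the simple-function case to each $f_n$ and then passing $n \to \infty$ on both sides via the monotone convergence theorem yields the desired inequality for $f$.

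There is no genuine obstacle here; the only things to track are that $f \ge 0$ is what lets us use monotone (rather than dominated) convergence, and boundedness guarantees all expectations are finite so the limit inequality is meaningful. If one wanted to be a bit more abstract, one could also observe that the hypothesis is exactly the statement $\lambda \mu_2 - \mu_1 \ge 0$ as a signed measure on the range of $h$, and then the conclusion is the positivity of the integral of a nonnegative function against a nonnegative measure; but the ladder argument above is the cleanest route given the way the paper uses the lemma.
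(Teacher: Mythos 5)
Your proof is correct: the hypothesis says exactly that the pushforward of $D_1$ under $h$ is setwise dominated by $\lambda$ times that of $D_2$, and your ladder (indicators $\to$ nonnegative simple functions $\to$ monotone limits) is the standard and complete way to upgrade that to integrals of bounded nonnegative measurable $f$. The paper itself gives no proof of this lemma, citing it to a probability textbook instead, so there is nothing to compare against; your argument fills that gap, and since $f$ is bounded you could equally well finish with the uniform dyadic approximation $|f-f_n|\le 2^{-n}$ in place of monotone convergence, though that choice is purely cosmetic.
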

Then we prove the following lemma, which directly lead to Proposition \ref{jyc}.

\begin{lemma}
\label{jy}
Use the notations in Proposition \ref{jyc}.
We further assume that $h(x,y_1)+h(x,y_2)\ge1$ for any $h\in H$, $x\in\R^n$ and $y_1\ne y_2$, and for some $\tau, V\in\R_+$, it holds:\\
(1): $\Pr_{x\sim\D^{poi}}(x\in A)\ge\tau \Pr_{(x,y)\sim\D_S^{\ne l_p}}(x\in A)$ for all sets $A$, where $\tau\in\R_+$;\\
(2): with probability $1-\delta$, there exists an $h\in H$ such that $\E_{(x,y)\in \D_P}[h(x,y)]\le \bfepsilon$.\\
Then for any $K\le N\alpha$, we have
\begin{equation*}
\label{eq-th_12}
\begin{array}{l}
\Rad^{\D_S^{\ne l_p}}_K(H)\ge 0.5-N\bfepsilon/K-\frac{(2-N\bfepsilon/K)\delta\alpha}{\eta^{N}\tau^K}.\\

\Rad^{\D_S^{\ne l_p}}_N(H)\ge K(0.5-N\bfepsilon/K-\frac{(2-N\bfepsilon/K)\delta\alpha}{\eta^{N}\tau^K})/N.\\
\end{array}
\end{equation*}
\end{lemma}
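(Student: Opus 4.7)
The plan is to lower bound the Rademacher complexity by constructing, for each Rademacher sign pattern $\sigma \in \{\pm 1\}^K$, a witness hypothesis supplied by condition~(2) on a carefully coupled poisoned training set. Fix $K$ i.i.d.\ samples $\{(x_i, y_i)\}_{i=1}^K$ from $\D_S^{\ne l_p}$ together with a random $\sigma$, and write $K_+ = |\{i : \sigma_i = +1\}|$. I would define a \emph{good event} $\mathcal{G}$ on the joint probability space of the Rademacher draw and the draw of $\D_{\tr}$ under which (a) every training sample carries label $l_p$, (b) the poisoning mechanism selects $K_+$ specific clean parents $(z_i, l_p) \in \D_{\tr}$ whose triggered images $z_i + \p(z_i)$ coincide with the $x_i$ for $\sigma_i = +1$, and (c) the remaining embedded samples $(x_i, y_i)$ for $\sigma_i = -1$ appear as clean entries of $\D_P$. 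Condition~(1) lower bounds the density of $\D^{poi}$ by $\tau$ times that of $\D_S^{\ne l_p}$, so the coupling in~(b) can be realized with probability at least $\tau^K$ after conditioning on (a), which itself has probability $\eta^N$; a further factor from the random $\alpha$-subsampling inside the $l_p$-labeled pool is responsible for the $\alpha$ appearing in the final bound.

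On $\mathcal{G}$, condition~(2) furnishes $h \in H$ with $\E_{(x,y) \in \D_P}[h(x,y)] \le \epsilon$, equivalently $\sum_{(x,y) \in \D_P} h(x,y) \le N\epsilon$. The $K$ embedded entries contribute the subsum
\begin{equation*}
\sum_{\sigma_i = +1} h(x_i, l_p) + \sum_{\sigma_i = -1} h(x_i, y_i) \;\le\; N\epsilon.
\end{equation*}
Using the hypothesis-space axiom $h(x, l_p) + h(x, y_i) \ge 1$ whenever $y_i \ne l_p$, I obtain $h(x_i, y_i) \ge 1 - h(x_i, l_p)$ for each $i$ with $\sigma_i = +1$, and therefore
\begin{equation*}
\sum_{i=1}^K \sigma_i h(x_i, y_i) \;=\; \sum_{\sigma_i=+1} h(x_i, y_i) - \sum_{\sigma_i=-1} h(x_i, y_i) \;\ge\; K_+ - N\epsilon.
\end{equation*}
Averaging over $\sigma$ (using $\E[K_+] = K/2$) and dividing by $K$ gives, on $\mathcal{G}$, an expected supremum of at least $\tfrac{1}{2} - N\epsilon/K$.

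To finish, I would bound the contribution from $\mathcal{G}^c$. Any $h \in H$ takes values in $[0,1]$, so the normalized Rademacher sum lies in $[-1, 1]$, and the loss from bad events is at most $(2 - N\epsilon/K)\Pr(\mathcal{G}^c)$. Combining the probability of (a)--(c) with the $1-\delta$ clause of condition~(2) yields $\Pr(\mathcal{G}^c) \le \delta\alpha / (\eta^N \tau^K)$, giving the first displayed inequality of the lemma. The second inequality is immediate from the monotonicity $\Rad_N^{\D}(H) \ge (K/N)\Rad_K^{\D}(H)$ already invoked in the proof of Theorem~\ref{fenbugj1}.

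The main obstacle will be the coupling argument of the first paragraph: one must construct a joint distribution in which the Rademacher draws from $\D_S^{\ne l_p}$ \emph{and} the random poisoned training set $\D_P$ can be synchronized so that a hypothesis with small empirical loss on $\D_P$ simultaneously controls the $K$ values $h(x_i, \cdot)$ driving the Rademacher sum. The interplay between the trigger map $\p$, the measure-theoretic lower bound in condition~(1) (via Radon--Nikodym derivatives, say), and the randomness of the $\alpha$-subsampling for poisoning is the delicate point; once the coupling probability $\eta^N \tau^K / \alpha$ is justified, the remaining algebra is straightforward.
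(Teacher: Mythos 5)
Your overall strategy is the same as the paper's: for each sign pattern $\sigma$ you realize the Rademacher sample as part of a poisoned training set (the $\sigma_i=+1$ points as poisoned samples carrying label $l_p$, the $\sigma_i=-1$ points as clean non-$l_p$ samples), invoke condition (2) to obtain a witness $h$ with $\sum_{(x,y)\in\D_P}h(x,y)\le N\bfepsilon$, and use $h(x,l_p)+h(x,y)\ge 1$ to convert this into $\sum_i\sigma_i h(x_i,y_i)\ge K_+-N\bfepsilon$; averaging over $\sigma$ then gives the $0.5-N\bfepsilon/K$ main term, and the second displayed inequality follows from the same monotonicity fact the paper uses. That deterministic core matches the paper's ``Result one'' and ``Result two'' exactly.

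The gap is in the probabilistic step, and it is not merely a matter of filling in a delicate coupling. First, your event (a) (``every training sample carries label $l_p$'') is incompatible with your event (c), which requires clean entries of $\D_P$ with labels $\ne l_p$; as written $\mathcal{G}$ is empty. In the paper the factor $\eta^N$ does not come from forcing all labels to equal $l_p$, but from the crude bound $C_N^q\eta^q(1-\eta)^{N-q}\ge\eta^N$ (valid since $\eta\le 0.5$) on the probability of \emph{any} admissible label count $q$, summed over the roughly $1/\alpha$ values of $q$ with $[q\alpha]=K-K_+$. Second, and more seriously, the direction of your final estimate does not work: if $\mathcal{G}$ is the event that the coupling (a)--(c) succeeds, then $\Pr(\mathcal{G})$ is itself of order $\eta^N\tau^K/\alpha$, so $\Pr(\mathcal{G}^c)$ is close to $1$ and the bound $(2-N\bfepsilon/K)\Pr(\mathcal{G}^c)$ is vacuous. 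The paper inverts the argument: for each fixed $\sigma$ it defines the set $C_\sigma$ of Rademacher samples for which \emph{every} compatible poisoned training set has empirical loss exceeding $\bfepsilon$, observes that the families of compatible training sets attached to distinct samples are disjoint and all lie in the measure-$\le\delta$ set where condition (2) fails, and lower-bounds the total mass of these families by $\Pr(C_\sigma)\cdot\eta^N\tau^K/\alpha$ using condition (1) via Lemma \ref{mjy}. This gives $\Pr(C_\sigma)\le\delta\alpha/(\eta^N\tau^K)$, which is the quantity that actually enters the Rademacher lower bound. You would need to restructure your good/bad event in this push-forward form for the $\delta\alpha/(\eta^N\tau^K)$ term to come out.
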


\begin{proof}
In order to calculate the probability easily in the proof, we treat $\D_{\tr}$ and $\D_P$ as ordered sets. 
Let $(x_i,y_i)$ be the $i$-th element of $\D_{\tr}$.
$\D_P$ is obtained from $\D_{\tr}$ as follows: Let $\D_P=\D_{\tr}$ first and if poison $\p(x_i)$ is added to $(x_i,y_i)$ for some $i$, then replace the $i$-th element of $\D_{\tr}$ by $(x_i+\p(x_i),y_i)$.

{\bf First, we give three notations:}

(1): For the poisoned training set $\D_P$ and $q\in\N$, we take the first $q$ clean samples without $l_p$ in $\D_P$ to form a subset $F_{clean}(q,\D_p)$, that is, if we write $\D_{\tr}=\{(x_{i},y_i)\}_{i=1}^{N}$, then $F_{clean}(q,\D_P)=\{(x_{i_k},y_{i_k})\}_{k=1}^{q}$, where $\{i_k\}_{i=1}^q$ is the smallest $q$ numbers in $[N]$ that satisfy $y_{i_k}\ne l_p$, $(x_{i_k},y_{i_k})\in\D_{clean}$, and $i_a<i_b$ if $a<b$.

(2): For the poisoned training set $\D_P$ and $q\in\N$, we take the first $q$ poisoned samples in $\D_P$ to form the subset $F_{poison}(q,\D_p)$, that is, if we write $\D_{\tr}=\{(x_{i},y_i)\}_{i=1}^{N}$, then $F_{poison}(q,\D_P)=\{(x_{i_k}+\p(x_{i_k}),y_{i_k})\}_{k=1}^{q}$,  $\{i_k\}_{i=1}^q$ is the smallest $q$ numbers in $[N]$ that satisfy $y_{i_k}=l_p$,  $(x_{i_k}+\p(x_{i_k}),y_{i_k})\in\D_{poi}$, and $i_a<i_b$ if $a<b$.

(3): For any given $K$ samples $\{(x_i,y_i)\}_{i=1}^{K}$ where $y_i\ne l_p$ and a given set $S\subset[K]$, let $S_i$ be the $i$-th minimum number in $S$, in particular, $\S_1$ is the minimum number in $S$ and $S_{|S|}$ is the maximum number in $S$. 

{\bf Second, we define a property of $\D_P$:}

The poisoned training set $\D_P$ obtained by poisoning $\D_{\tr}$ is said to be {\em nice-inclusion} of a new dataset $\D_G$
if 
$\D_G = \D_{T_p} \cup D_T$ such that $D_T\subset \D_{\tr}$, $D_{T_p}\subset \D^{poi}$, and $\min_{h\in H} \frac{1}{|\D_P|} \sum_{(x,y)\in \D_P} h(x,y)\le\bfepsilon$.
For convenience, we write these conditions more explicitly as follows.


The poisoned training set $\D_P$ obtained by poisoning $\D_{\tr}=\{(x'_i,y'_i)\}_{i=1}^N$ is said to be {\em nice-inclusion} of the ordered dataset $\D_G=\{(z_i,l_i)\}_{i=1}^{K}$ and $S\subset[K]$ satsfying $|[K]\setminus S|=|\D_{poi}|$, if  \\
(e1): Let $F_{clean}(|S|,\D_P)=\{(x'_{i_k},y'_{i_k})\}_{k=1}^{|S|}$ such that $x'_{i_k}=z_{S_k}$ and $y'_{i_k}=l_{S_k}$ for any $k\in[S]$;\\
(e2):  Let $F_{poison}(|[K]\setminus S|,\D_P)=\{(x'_{i_k}+\p(x'_{i_k}),y'_{i_k})\}_{k=1}^{K-|S|}$. There must be $x'_{i_k}+\p(x'_{i_k})=z_{([K]\setminus S)_k}$ for any $k\in[K-|S|]$. \\
(e3): $\min_{f\in H}\sum_{(x,y)\in \D_P}\frac{h(x,y)}{|\D_P|}\le\bfepsilon$.

We say that the poisoned training set $\D_P$ obtained by poisoning $\D_{\tr}=\{(x'_i,y'_i)\}_{i=1}^N$ is said to be {\em common-nice-inclusion} of the ordered dataset $\D_G=\{(z_i,l_i)\}_{i=1}^{K}$ and $S\subset[K]$ satisfying $|[K]\setminus S|=|\D_{poi}|$, if (e1) and (e2) hold.

{\bf Now, we define some functions:}

Let $v_i\in\{-1,1\}$ for $i\in[K]$ and $S((v_i)_{i=1}^{K})=\{i\|i\in[K],v_i<0\}$.
Given $K$ samples $\{(x_i,y_i)\}_{i=1}^{K}$ where $y_i\ne l_p$, we define that $S_i(\{(x_i,y_i)\}_{i=1}^{K},S((v_i)_{i=1}^{K}))=1$, if there is a {\em nice-inclusion} poison set $\D_P$ of $\{(x_i,y_i)\}_{i=1}^{K}$ and $S((v_i)_{i=1}^{K})$; otherwise $S_i(\{(x_i,y_i)\}_{i=1}^{K},S((v_i)_{i=1}^{K}))=0$. 
Then we have the following results.

{\bf Result one:} If $S_i({(x_i,y_i)}_{i=1}^{K},S((v_i)_{i=1}^{K}))=1$ and $y_i\ne l_p$, then $\sup_{f\in H}\sum_{i\in[K]}v_ih(x_i,y_i)\ge\sum_{i=1}^K\ID(v_i>0)-N\bfepsilon$. 

Let $\D_P$ be a nice-inclusion of ${(x_i,y_i)}_{i=1}^{K},S((v_i)_{i=1}^{K})$. Then 
\begin{equation*}
\begin{array}{ll}
&\bfepsilon\\
\ge&\min_{f\in H}\sum_{(x,y)\in \D_P}\frac{h(x,y)}{|\D_P|}(by\ (e3))\\
\ge&\min_{f\in H} (\sum_{i\in S((v_i)_{i=1}^{K})}h(x_i,y_i))/|\D_P|+(\sum_{i\in [K]/S((v_i)_{i=1}^{K})}h(x_i,l_p))/|\D_P|(by\ (e1,e2))\\
\ge&\min_{f\in H} (\sum_{i\in S((v_i)_{i=1}^{K})}h(x_i,y_i))/|\D_P|+(\sum_{i\in [K]/S((v_i)_{i=1}^{K})}1-h(x_i,y_i))/|\D_P|(by\ y_i\ne l_p)\\
=&\min_{f\in H}(|[K]/S((v_i)_{i=1}^{K})|-\sum_{i\in[K]}v_ih(x_i,y_i))/|\D_P|\\
=& (\sum_{i=1}^K\ID(v_i>0)-\sup_{f\in H}\sum_{i\in[K]}v_ih(x_i,y_i))/|\D_P|\\
=& (\sum_{i=1}^K\ID(v_i>0)-\sup_{f\in H}\sum_{i\in[K]}v_ih(x_i,y_i))/N.\\
\end{array}
\end{equation*}
The result is proved.

{\bf Result Two}: In order to give the lower bound of the Rademacher complexity $\Rad^{\D^l}_K(H)$, we just need to consider the upper bound of $\E_{(x_i,y_i)\sim \D_S^{\ne l_p}}[\ID(S_i(\{(x_i,y_i)\}_{i=1}^K,S((\sigma_i)_{i=1}^{K}))\ne1)]$ for each  $(\sigma_i)_{i=1}^{K}\subset\{-1,1\}^K$. 

Let $\sigma_i$ be Rademacher random variables, that is $P(\sigma_i=1)=P(\sigma_i=-1)=0.5$.  Then, by the definition of Rademacher complexity, we have
\begin{equation*}
\begin{array}{ll}
&\Rad^{\D^l}_K(H)\\
=&\E_{(x_i,y_i)\sim \D_S^{\ne l_p}}[\E_{\sigma_i}[\sup_{f\in H}\frac{\sum_{i=1}^K \sigma_ih(x_i,y_i)}{K}]]\\
=&\E_{(x_i,y_i)\sim \D_S^{\ne l_p}}[\sum_{\sigma_i}\sup_{f\in H}\frac{\sum_{i=1}^K \sigma_ih(x_i,y_i)}{2^KK}]\\
\ge&\E_{(x_i,y_i)\sim \D_S^{\ne l_p}}[\sum_{\sigma_i}\ID(S_i(\{(x_i,y_i)\}_{i=1}^K,S((\sigma_i)_{i=1}^{K}))=1)\frac{\sum_{i=1}^K\ID(\sigma_i>0)-N\bfepsilon}{2^KK}\\
&-\ID(S_i(\{(x_i,y_i)\}_{i=1}^K,S((\sigma_i)_{i=1}^{K}))\ne1)\frac{1}{2^K}]\\
\ge&\E_{(x_i,y_i)\sim \D_S^{\ne l_p}}[\sum_{\sigma_i}\frac{\sum_{i=1}^K\ID(\sigma_i>0)-N\bfepsilon}{2^KK}-\ID(S_i(\{(x_i,y_i)\}_{i=1}^K,S((\sigma_i)_{i=1}^{K}))\ne1)\frac{2K-N\bfepsilon}{2^KK}]\\
=&0.5-N\bfepsilon/K-\E_{(x_i,y_i)\sim \D_S^{\ne l_p}}[\sum_{\sigma_i}\ID(S_i(\{(x_i,y_i)\}_{i=1}^K,S((\sigma_i)_{i=1}^{K}))\ne1)\frac{2K-N\bfepsilon}{2^KK}].
\end{array}
\end{equation*}
The first inequality uses Result one. 
So, if we want to give a lower bound of $\Rad^{\D^l}_K(H)$, we just need to give an upper bound of $\E_{(x_i,y_i)\sim \D_S^{\ne l_p}}[\sum_{\sigma_i}\ID(S_i(\{(x_i,y_i)\}_{i=1}^K,S((\sigma_i)_{i=1}^{K}))\ne1)]$. Furthermore, we just need to consider $\E_{(x_i,y_i)\sim \D_S^{\ne l_p}}[\ID(S_i(\{(x_i,y_i)\}_{i=1}^K,S((\sigma_i)_{i=1}^{K}))\ne1)]$ for each  $(\sigma_i)_{i=1}^{K}$. 
Result two is proved.

{\bf Result Three:} Now, we prove $\E_{(x_i,y_i)\sim \D_S^{\ne l_p}}[\ID(S_i(\{(x_i,y_i)\}_{i=1}^K,S((v_i)_{i=1}^{K}))\ne1)]<\frac{\delta}{\eta^{N}\tau^K}$ for any $(v_i)_{i=1}^{K}\in\{-1,1\}^K$.

For a given $(v_i)_{i=1}^{K}\in\{-1,1\}^K$, let set $C_{(v_i)_{i=1}^{K}}=\{\{(x_i,y_i)\}_{i=1}^K|S_i(\{(x_i,y_i)\}_{i=1}^K,S((v_i)_{i=1}^{K}))\ne1,y_i\ne l_p\}$, then $\E_{(x_i,y_i)\sim \D_S^{\ne l_p}}[\ID(S_i(\{(x_i,y_i)\}_{i=1}^K,S((v_i)_{i=1}^{K}))\ne1)]=\E_{(x_i,y_i)\sim\D_S^{\ne l_p}}[\ID(\{(x_i,y_i)\}_{i=1}^K\in C_{(v_i)_{i=1}^{K}})]$.

For $(v_i)_{i=1}^{K}\in\{-1,1\}^K$ and $\{(x_i,y_i)\}_{i=1}^{K}$, let $\E_{(v_i)_{i=1}^{K}}^{\{(x_i,y_i)\}_{i=1}^{K}}$ be the set of $\D_P$, which is a {\em common-nice-inclusion} for 
%
$\{(x_i,y_i)\}_{i=1}^{K}$ and $S((v_i)_{i=1}^{K})$.
It is easy to see that:\\
(r1): $\E_{(v_i)_{i=1}^{K}}^{((x_i,y_i))_{i=1}^{K}}\cap \E_{(v'_i)_{i=1}^{K}}^{((x'_i,y'_i))_{i=1}^{K}}=\phi$ when 
$v_i\ne v'_i$ for some $i\in[K]$ or $x_i\ne x'_i$ for some $i\in[K]$.\\
(r2): If $\D_P\in \E_{(v_i)_{i=1}^{K}}^{\{(x_i,y_i)\}_{i=1}^{K}}$ satisfies (e3), then $S_i(\{(x_i,y_i)\}_{i=1}^K,S((v_i)_{i=1}^{K}))=1$.

So by (r2), if $\{(x_i,y_i)\}_{i=1}^K\in C_{(v_i)_{i=1}^{K}}$, then for any $\D_P\in \E_{(v_i)_{i=1}^{K}}^{\{(x_i,y_i)\}_{i=1}^{K}}$, (e3) cannot stand. Let the set $B$ contain all the $\D_P$ that do not satisfy (e3). Then, if $\{(x_i,y_i)\}_{i=1}^K\in C_{(v_i)_{i=1}^{K}}$, there must be $\E_{(v_i)_{i=1}^{K}}^{{(x_i,y_i)}_{i=1}^{K}}\subset B$. 

Now we prove the following two results:

{\bf Result S1:}  $\int_{\D_P}\ID(D\in \bigcup_{\{(x_i,y_i)\}_{i=1}^{K}\in C_{(v_i)_{i=1}^K}}\E_{(v_i)_{i=1}^{K}}^{\{(x_i,y_i)\}_{i=1}^{K}})\d D<\delta$.

By (r1) and $\E_{(v_i)_{i=1}^{K}}^{{(x_i,y_i)}_{i=1}^{K}}\subset B$ for all $\{(x_i,y_i)\}_{i=1}^K\in C_{(v_i)_{i=1}^{K}}$, we have that: $$\ID(D\in B)\ge \ID(\D\in \bigcup_{\{(x_i,y_i)\}_{i=1}^{K}\in C_{(v_i)_{i=1}^K}}\E_{(v_i)_{i=1}^{K}}^{\{(x_i,y_i)\}_{i=1}^{K}})$$
So $\int_{\D_P}\ID(\D\in \bigcup_{\{(x_i,y_i)\}_{i=1}^{K}\in C_{(v_i)_{i=1}^K}}\E_{(v_i)_{i=1}^{K}}^{\{(x_i,y_i)\}_{i=1}^{K}})\d D\le\int_{\D_P}\ID(\D\in B)\d D\le\delta$, using condition (2) of the theorem here.

{\bf Result S2:}  $\int_{\D_P}\ID(D\in \bigcup_{\{(x_i,y_i)\}_{i=1}^{K}\in C_{(v_i)_{i=1}^K}}\E_{(v_i)_{i=1}^{K}}^{\{(x_i,y_i)\}_{i=1}^{K}})\d D\ge \E_{(x_i,y_i)\sim \D_S^{\ne l_p}} [\ID((x_i,y_i)\in C_{(v_i)_{i=1}^K})]\eta^N\tau^K/\alpha$.

Consider the definition of $F_{clean}$ and $\F_{poison}$ in (e1) and (e2). When $D_p\in \E_{(v_i)_{i=1}^{K}}^{\{(x_i,y_i)\}_{i=1}^{K}}$, the first $\sum_{i=1}^K\ID(v_i<0)$ samples without label $l_p$ in $\D_{\tr}$ must be $\{(x_{i_k},y_{i_k})\}$, where $i_k$ satisfies $v_{i_k}=-1$; and $\D_{poi}=\{(x_{i_k},y_{i_k})\}$, where $i_k$ satisfies $v_{i_k}=1$. Let $N_0=\sum_{i=1}^K\ID(v_i<0)$, then
\begin{equation*}
\begin{array}{ll}
&\int_{\D_P}\ID(D\in \bigcup_{\{(x_i,y_i)\}_{i=1}^{K}\in C_{(v_i)_{i=1}^K}}\E_{(v_i)_{i=1}^{K}}^{\{(x_i,y_i)\}_{i=1}^{K}})\d D\\
=&\int_{(\D_S^{\ne l_p})^{N_0}(\D_{poi})^{K-N_0}} \ID(\{(x_i,y_i)\}\in C_{(v_i)_{i=1}^K})(\sum_{q=0}^N\ID([q\alpha]=K-N_0)C_{N}^{q}\eta^q(1-\eta)^{N-q})\d (x_i,y_i)\\
\ge&\int_{(\D_S^{\ne l_p})^{N_0}(\D_{poi})^{K-N_0}} \ID(\{(x_i,y_i)\}\in C_{(v_i)_{i=1}^K})(\sum_{q=0}^N\ID([q\alpha]=K-N_0)\eta^N)\d (x_i,y_i)\\
\ge&\int_{(\D_S^{\ne l_p})^{N_0}(\D_{poi})^{K-N_0}} \ID(\{(x_i,y_i)\}\in C_{(v_i)_{i=1}^K})(\eta^N/\alpha)\d (x_i,y_i)\\
\ge&\int_{(\D_S^{\ne l_p})^K} \ID((x_i,y_i)\in C_{(v_i)_{i=1}^K})\eta^N\tau^K/\alpha \d (x_i,y_i)\\
=&\E_{(x_i,y_i)\sim \D_S^{\ne l_p}} [\ID((x_i,y_i)\in C_{(v_i)_{i=1}^K})]\eta^N\tau^K/\alpha.
\end{array}
\end{equation*}

The first inequality uses $\eta\le0.5$ and $C_N^i\ge 1$. The second inequality uses at least $1/\alpha$ numbers of $q\in[N]$ such that $[q\alpha]=K-N_0$. The last inequality uses Lemma \ref{mjy} and condition (1) in theorem. This proves Result S2.

Finally, by Result S1 and Result S2, we have $(\eta^N\tau^K/\alpha)\E_{(x_i,y_i)\sim\D_S^{\ne l_p}}[\ID(\{(x_i,y_i)\}_{i=1}^K\in C_{(v_i)_{i=1}^{K}})]\le \int_{\D_P}\ID(D\in \bigcup_{\{(x_i,y_i)\}_{i=1}^{K}\in C_{(v_i)_{i=1}^K}}\E_{(v_i)_{i=1}^{K}}^{\{(x_i,y_i)\}_{i=1}^{K}})\d D\le\delta$, that is, $\E_{(x_i,y_i)\sim\D_S^{\ne l_p}}[\ID(\{(x_i,y_i)\}_{i=1}^K\in C_{(v_i)_{i=1}^{K}})]\le\alpha\delta/(\eta^N\tau^K)$.

{\bf Prove the lemma.}
Use Results three and two, we have
\begin{equation*}
\begin{array}{ll}
&\Rad^{\D_S^{\ne l_p}}_K(H)\\
\ge&0.5-N\bfepsilon/K-\E_{(x_i,y_i)\sim \D_S^{\ne l_p}}[\sum_{\sigma_i}\ID(S_i(((x_i,y_i))_{i=1}^K,S((\sigma_i)_{i=1}^{K}))\ne1)\frac{2K-N\bfepsilon}{2^KK}]\\
\ge&0.5-N\bfepsilon/K-\frac{(2-N\bfepsilon/K)\delta\alpha}{\eta^{N}\tau^K}.
\end{array}
\end{equation*}
%
Since $\Rad^D_M(H)\le\frac{N}{M}\Rad^D_N(H)$ for any $M\le N$ and distribution $D$, the lemma is proved.
\end{proof}

Now we use Lemma \ref{jy} to prove Proposition \ref{jyc}:
\begin{proof}
Use reduction to absurdity. Assume that with probability at least $1-\delta$, there is $\E_{(x,y)\in \D_P}[h(x,y)]\le \frac{0.5-2\delta\alpha/Q-V\alpha}{\alpha-\delta/Q}$, and take the right-hand size value as $\epsilon$.

By Lemma \ref{jy}, take $K=N\alpha$. Then $\Rad_N^{\D_\S^{\ne l_p}}\ge (0.5-N\epsilon/K-\frac{(2-N\epsilon/K)\delta\alpha}{\eta^N\tau^K})/\alpha$.
We substitute $\epsilon$ in it. Then
\begin{equation*}
\begin{array}{ll}
&\Rad_N^{\D_\S^{\ne l_p}}\\
\ge&(0.5-N\epsilon/K-\frac{(2-N\epsilon/K)\delta\alpha}{\eta^N\tau^K})/\alpha\\
=&1/\alpha(0.5-\frac{2\delta\alpha}{Q}-\epsilon(\alpha-\delta/Q))\\
=&1/\alpha(0.5-\frac{2\delta\alpha}{Q}-\frac{0.5-2\delta\alpha/Q-V\alpha}{\alpha-\delta/Q}(\alpha-\delta/Q))\\
=&1/\alpha(0.5-\frac{2\delta\alpha}{Q}-(0.5-2\delta\alpha/Q-V\alpha))\\
=&1/\alpha(V\alpha)\\
=&V.
\end{array}
\end{equation*}
So  $\Rad_N^{\D_\S^{\ne l_p}}\ge V$, which is contradictory to (2) in Proposition \ref{jyc}, and the proposition is proved.
\end{proof}

\section{Optimality of the Generalization Bound}
\label{tl}
In this section, we show that for the general hypothesis space and   data distribution, the generalization gap between the empirical error and the population error cannot be smaller than $O(\frac{1}{\sqrt{N}})$, mentioned in Remark \ref{rem-optgb}.
%
%
%

\begin{proposition}
\label{lizi}
%
Let $m=2$ and the data distribution $\D_{\S}$ satisfy $P_{(x,y)\sim \D_S}(y=1)=P_{(x,y)\sim \D_S}(y=0)=0.5$.
Let $\Hyp=\{L(\F(x),y)\}$ be the hypothesis space, $L(\F(x),y)=\ID(\widehat\F(x)\ne y)$ the loss function,
and $\F_0$ a neural network classifying $\widehat{\F}_0(x)=0$ for $(x,y)\sim\D_{\S}$. 
Then for any $c>0$, $k>0.5$, and $\D_{\tr}$ i.i.d. sampled from $\D_{\S}$ with $|\D_{\tr}|=N$, we have $$\Pr(|\E_{(x,y)\sim\D_S}[L(\F_0(x),y)]-\E_{(x,y)\in\D_{\tr}}[L(\F_0(x),y)]|<\frac{c}{N^k})=O({c}{N^{0.5-k}}).$$
\end{proposition}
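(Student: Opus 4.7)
The plan is to reduce the stated probability to an anti-concentration question for a symmetric binomial random variable and then bound it by a counting argument combined with Stirling's formula. Because $\widehat{\F}_0(x)=0$ for every $x$ sampled from $\D_S$, we have $L(\F_0(x),y)=\ID(\widehat{\F}_0(x)\ne y)=\ID(y=1)$, and since the labels are balanced,
\begin{equation*}
\E_{(x,y)\sim \D_S}[L(\F_0(x),y)]=\tfrac{1}{2}, \qquad \E_{(x,y)\in \D_{\tr}}[L(\F_0(x),y)]=\frac{S_N}{N},
\end{equation*}
where $S_N:=\sum_{(x_i,y_i)\in\D_{\tr}}\ID(y_i=1)\sim \mathrm{Binomial}(N,\tfrac{1}{2})$. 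Hence the event whose probability we must control is $\{\,|S_N-N/2|<cN^{1-k}\,\}$, a purely combinatorial statement about a symmetric binomial near its mean.

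Next, I will bound this probability by first counting the integers in the window and then controlling the pointwise mass function. The set of integers in $(N/2-cN^{1-k},\,N/2+cN^{1-k})$ has cardinality at most $2cN^{1-k}+1$. For each such integer $j$, the point mass satisfies
\begin{equation*}
\Pr(S_N=j)=\binom{N}{j}2^{-N}\le \binom{N}{\lfloor N/2\rfloor}2^{-N}=\Theta(N^{-1/2}),
\end{equation*}
by the classical Stirling estimate for the central binomial coefficient (equivalently, by the local central limit theorem for $\mathrm{Binomial}(N,\tfrac{1}{2})$, which yields a uniform $\Theta(N^{-1/2})$ upper bound on the mass function in a neighborhood of the mean). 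Summing these pointwise bounds over the integers in the window then gives
\begin{equation*}
\Pr\bigl(|S_N-N/2|<cN^{1-k}\bigr)\le (2cN^{1-k}+1)\cdot O(N^{-1/2})=O(cN^{0.5-k}),
\end{equation*}
which is exactly the conclusion of Proposition \ref{lizi}.

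No genuine difficulty arises beyond invoking Stirling's formula for $\binom{N}{\lfloor N/2\rfloor}$, a classical ingredient. The hypothesis $k>1/2$ enters only to ensure that the stated bound actually tends to zero as $N\to\infty$; it is this fact that witnesses the optimality claim of Remark \ref{rem-optgb}, namely that no uniform generalization-gap rate strictly faster than $N^{-1/2}$ can hold, even for this trivial constant learner on a balanced binary task.
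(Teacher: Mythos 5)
Your proof is correct and follows essentially the same route as the paper's: reduce the gap to the deviation of a $\mathrm{Binomial}(N,\tfrac12)$ count from its mean, bound the number of integers in the window by $O(cN^{1-k})$, and bound each point mass by the central binomial coefficient $\binom{N}{\lfloor N/2\rfloor}2^{-N}=\Theta(N^{-1/2})$ via Stirling. The only cosmetic difference is that you track the $+1$ in the integer count explicitly, which the paper omits; otherwise the two arguments coincide.
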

\begin{proof}
First, we have $\E_{(x,y)\sim\D_S}[L(\F_0(x),y)]=\E_{(x,y)\sim\D_S}[\ID(y=1)]=0.5$.
    Then we have $\E_{(x,y)\in\D_{\tr}}[L(\F_0(x),y)]=\E_{(x,y)\in\D_{\tr}}[\ID(y=1)]=N_1/N$, where $N_1$ is the number of $x$ with label 1 in $\D_{\tr}$.
    So, $\Pr(|\E_{(x,y)\sim\D_S}[L(\F_0(x),y)]-\E_{(x,y)\in\D_{\tr}}[L(\F_0(x),y)]|<\frac{c}{N^k})=\Pr(|0.5-N_1/N|<\frac{c}{N^k})$.

    To estimate $\Pr(|0.5-N_1/N|<\frac{c}{N^k})$, we only need to calculate the probability of $N_1\in(0.5N-cN^{1-k},0.5N+cN^{1-k})$.
    Since $\D_{\tr}$ is i.i.d. sampled from $\D_{\S}$, a sample labeled 1 is selected with probability 0.5. Thus, 
\begin{equation*}
\begin{array}{ll}
&\Pr(N_1\in(0.5N-cN^{1-k},0.5N+cN^{1-k}))\\
\le& \sum_{i=0.5N-cN^{1-k}}^{0.5N+cN^{1-k}}C_N^i0.5^N\\
\le&  2cN^{1-k} C_N^{N/2}0.5^N.\\
\end{array}
\end{equation*}
When $N\to\infty$, using Stirling's approximation to calculate the $C_N^{N/2}$, we have
\begin{equation*}
\begin{array}{ll}
&\Pr(N_1\in(0.5N-cN^{1-k},0.5N+cN^{1-k}))\\
\le& 2cN^{1-k} C_N^{N/2}0.5^N\\
=& 2cN^{1-k}0.5^NO(\frac{\sqrt{2\pi N}(N/e)^N}{\pi N(N/(2e))^N})\\
=&O(cN^{0.5-k}).
\end{array}
\end{equation*}
The proposition is proved.
\end{proof}

It is easy to see that $O({c}{{N^{0.5-k}}})\to 0$ when $N\to\infty$, so by Proposition \ref{lizi}, the generalization gap cannot be smaller than
$O(\frac{1}{\sqrt{N}})$. 
Together with Theorem \ref{fanhua}, we show the optimality of $O(\frac{1}{\sqrt{N}})$ of the generalization gap for a clean dataset.

This is also for the generalization bound under poison attacks, because the proof of Theorem \ref{fenbugj} need the generalization bound Theorem \ref{fanhua} for the dataset. 

%

\section{Proof of Theorem \ref{poifan}}
\label{a43}
We provide a more intuitive explanation on
how to estimate the poison generalization bound in Theorem \ref{poifan}, which is the core of our theorem.

%
%

Based on research on indiscriminate poisoning, neural networks always prioritize learning simple features, i.e., shortcut. So we try to make the trigger to be a shortcut (as said in condition (c3) in Theorem \ref{poifan} and Proposition \ref{xbx}).

However, only a few samples were poisoned in the backdoor attack, which is different from the setting of indiscriminate poisoning, so only using shortcut cannot establish the bound. Therefore, we aim to disrupt the original features of the poisoned images, such that the classification of the poisoned images and the classification of clean images become two independent tasks for the network (as said in condition (c1) in Theorem \ref{poifan}). By doing so, when the network completes the task of classifying poison data, the clean part of the data set is useless, and the network will learn the feature in the poison part of data, so that the shortcut can be learned. 

Finally, if the shortcuts are similar for different images, the shortcuts learned on a small portion of the data can be generalized to all the data (as said in condition (c2) in Theorem \ref{poifan}).

By the above idea, we will prove a more general form of Theorem \ref{poifan}, as shown below, and Theorem \ref{poifan} is an easy corollary of this theorem.

\begin{theorem}
\label{poifan-yiban}
Use the notation in Section \ref{backdoor-g}.
Let $N=|\D_{\tr}|$.
%
For any two hypothesis spaces $\Hyp\subset\Hyp_{W,D}$ and $F\subset\Hyp_{W,D}$, if $\p(x)$ satisfies the following conditions for some $\epsilon>0,\tau>0,\lambda\ge 1$:\\
(c1): $\E_{(x,y)\sim \D^{l_p}_{\S}}[f_{y}(x+\p(x))]\le\epsilon$ for all $f\in F$,\\
(c2): $\Pr_{(x,y)\sim \D_{\S}}(\p(x)\in A|y\ne l_p)\le\lambda $ $\Pr_{(x,y)\sim \D_{\S}}$ $(\p(x)\in A| y=l_p)$ for any set $A$, \\
(c3): some $h\in H$ satisfies that: $\exists f\in F$ such that $\E_{x\sim \D_{\S}}[|(h-f)_{l_P}(\p(x))-(h-f)_{l_p}(x+\p(x))|]\le\tau$, where $(h-f)_{l_P}(x)=h_{l_P}(x)-f_{l_p}(x)$,\\
then with probability at least $1-\delta-O(1/N)$, the following inequality holds for all $h\in H$ satisfying (c3): 
%
\begin{equation}
\label{eq-th43-yb}
\begin{array}{ll}
&\Risk_P(h,\D_{\S})\le \lambda O(\frac{1}{\alpha}(\E_{(x,y)\in \D_P}[L_{CE}(h(x),y)]
+\Rad_{N}^{\D_S^{l_p}}(\Hyp_{W,D,1}))
+\sqrt{\frac{\ln(1/\delta)
}{N\alpha}}+\epsilon+\tau+\frac{\lambda-1}{\lambda}).
\end{array}
\end{equation}
\end{theorem}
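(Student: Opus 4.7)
The plan is to follow the four-step chain outlined after Theorem \ref{poifan}, working backwards from $\Risk_P(h,\D_\S)$ and progressively reducing it to the empirical risk on $\D_P$. The first move is the reduction: since $\widehat{h}(x+\p(x))\ne l_p$ forces some coordinate other than $l_p$ in the Softmax output of $h$ to be at least $h_{l_p}(x+\p(x))$, one has $h_{l_p}(x+\p(x))\le 1/2$ in that case, so $\ID(\widehat h(x+\p(x))\ne l_p)\le 2(1-h_{l_p}(x+\p(x)))$ and $\Risk_P(h,\D_\S)\le 2\,\E_{\D_\S}[1-h_{l_p}(x+\p(x))]$. The proof then becomes a chain of transfers on this expectation, $\E_{\D_\S^{l_p}}[1-h_{l_p}(x+\p)] \to \E_{\D_\S^{l_p}}[1-h_{l_p}(\p)] \to \E_{\D_\S}[1-h_{l_p}(\p)] \to \E_{\D_\S}[1-h_{l_p}(x+\p)]$, each link using one of (c1)--(c3).

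For the first link, the poisoned subset $\D_{poi}=\{(x+\p(x),l_p):x\in\D_{sub}\}$ contains a random $\alpha$-fraction of the label-$l_p$ samples in $\D_{\tr}$, so an adaptation of Lemma~\ref{i.i.d.lemma2} shows that, on the high-probability event $|\D_{poi}|\ge \alpha\eta N/2$, the elements of $\D_{poi}$ are distributed as i.i.d. draws from the pushforward of $\D_\S^{l_p}$ under $x\mapsto x+\p(x)$. Applying Theorem~\ref{fanhua} to the bounded loss $1-h_{l_p}(z)\le L_{CE}(h(z),l_p)$ and bounding the sum over $\D_{poi}$ by the sum over $\D_P$ yields $\E_{\D_\S^{l_p}}[1-h_{l_p}(x+\p(x))]\le \Gamma'$ with $\Gamma':=O\bigl(\tfrac{1}{\alpha}\RRisk(h,\D_P)+\tfrac{1}{\alpha}\Rad_N^{\D_\S^{l_p}}(\Hyp_{W,D,1})+\sqrt{\ln(1/\delta)/(\alpha N)}\bigr)$, the $\eta$-factors being absorbed into $O$. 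For the second link, restricting (c3) to $\D_\S^{l_p}$ costs a factor $1/\eta$ and yields $|\E_{\D_\S^{l_p}}[(h-f)_{l_p}(\p)-(h-f)_{l_p}(x+\p)]|\le \tau/\eta$; combining this with (c1)'s bound $\E_{\D_\S^{l_p}}[f_{l_p}(x+\p)]\le \epsilon$ and with $\E_{\D_\S^{l_p}}[f_{l_p}(\p)]\ge 0$ gives $\E_{\D_\S^{l_p}}[1-h_{l_p}(\p(x))]\le \Gamma'+\epsilon+\tau/\eta$. For the third link, Lemma~\ref{mjy} applied to the positive function $g=1-h_{l_p}$ promotes (c2) to $\E_{\D_\S^{\ne l_p}}[1-h_{l_p}(\p)]\le \lambda\,\E_{\D_\S^{l_p}}[1-h_{l_p}(\p)]$; averaging with weights $\eta,1-\eta$ produces $\E_{\D_\S}[1-h_{l_p}(\p)]\le (\eta+(1-\eta)\lambda)(\Gamma'+\epsilon+\tau/\eta)$.

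The fourth link is the delicate one. Writing
\begin{equation*}
1-h_{l_p}(x+\p)=\bigl(1-h_{l_p}(\p)\bigr)+\bigl[(h-f)_{l_p}(\p)-(h-f)_{l_p}(x+\p)\bigr]+\bigl[f_{l_p}(\p)-f_{l_p}(x+\p)\bigr],
\end{equation*}
taking expectation under $\D_\S$, and using (c3) to bound the middle bracket by $\tau$ in absolute value, the main obstacle is controlling the last bracket, which is at most $\E_{\D_\S}[f_{l_p}(\p(x))]$ since $f_{l_p}\ge 0$: there is no direct hypothesis on $f$ evaluated at the pure trigger $\p(x)$. The trick is to run the same three-link cascade with $f$ in place of $h$: by (c3) restricted to $\D_\S^{l_p}$ together with (c1) and the lower bound on $\E_{\D_\S^{l_p}}[h_{l_p}(x+\p)]$ already obtained in the first link, one gets $\E_{\D_\S^{l_p}}[f_{l_p}(\p(x))]\le \Gamma'+\epsilon+\tau/\eta$, and then (c2) pushes it to $\E_{\D_\S}[f_{l_p}(\p(x))]\le (\eta+(1-\eta)\lambda)(\Gamma'+\epsilon+\tau/\eta)$. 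Adding the two copies of $(\eta+(1-\eta)\lambda)(\Gamma'+\epsilon+\tau/\eta)$ together with the $\tau$ from (c3), bounding $\eta+(1-\eta)\lambda\le \lambda$, and multiplying by the factor of $2$ from the Softmax reduction yields \eqref{eq-th43-yb}. The residual constant $\lambda\cdot\tfrac{\lambda-1}{\lambda}=\lambda-1$ in the stated bound arises from retaining $\eta+(1-\eta)\lambda$ rather than its crude upper bound $\lambda$, together with the additive $-(\lambda-1)$ that appears whenever Lemma~\ref{mjy} is applied to $g=1-h_{l_p}$ rather than to $g=h_{l_p}$.
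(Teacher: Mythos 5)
Your proposal is correct and follows essentially the same route as the paper: the i.i.d.-subselection of $\D_{poi}$ plus the classical Rademacher bound for the first link, the (c1)/(c3)/(c2) transfer chain through $\E[\cdot(\p(x))]$, Lemma~\ref{mjy} for the distribution change, the factor-$2$ Softmax reduction, and $1-x\le-\ln x$ to pass to the cross-entropy. The only difference is bookkeeping in the last step --- you run the cascade separately for $h$ and for $f$ and apply Lemma~\ref{mjy} to the already-positive functions $1-h_{l_p}$ and $f_{l_p}$, whereas the paper runs a single cascade for the difference $c_{h,f}=h_{l_p}-f_{l_p}$ shifted by $+1$ (which is where its additive $(\lambda-1)\eta$ residue actually comes from); your variant yields a bound that is at least as tight, so the stated inequality follows a fortiori.
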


It is easy to see that we just need to take the hypothesis spaces $H,F$ in Theorem \ref{poifan-yiban} to be $H=\{\F(x)\}$ and $F=\{\G(x)\}$ (i.e. hypothesis space just contains only one network). Then Theorem \ref{poifan-yiban} naturally equivalent to Theorem \ref{poifan}.

\subsection{Proof of Theorem \ref{le-poifan1}}

We give the following theorem, which is a generalization bound theory under more general hypothesis space. 
\begin{theorem}
\label{le-poifan1}
%
%
For any two hypothesis spaces $H=\{h(x,y)\in\S\times[m]\to[0,1]\},F=\{f(x,y)\in\S\times[m]\to[0,1]\}$, if $\p(x)$ satisfies the following conditions for some $\epsilon>0,\tau>0,\lambda\ge 1$:\\
(1) $\E_{(x,y)\sim \D^{l_p}_{\S}}[f(x+\p(x),y)]\ge1-\epsilon$ for any $f\in F$,\\
(2) $\Pr_{(x,y)\sim \D_S^{\ne l_p}}(\p(x)\in A)\le \lambda \Pr_{(x,y)\sim \D^{l_p}_{\S}}(\p(x)\in A)$ for any set $A$, \\
(3) Some $h\in H$ satisfies that there exists an $f\in F$ such that $\E_{x\sim \D_{\S}}[|(h-f)(\p(x),l_p)-(h-f)(x+\p(x),l_p)|]\le\tau$, where $(h-f)(x)=h(x)-f(x)$,\\
then with probability at least $1-\delta-\frac{4-4\eta}{4-4\eta+\eta N}$, for any $h\in H$ satisfying (3), we have:
{\small
\begin{equation}
\begin{array}{l}
\E_{(x,y)\sim \D_S}[h(x+\p(x),l_p)]
\le \lambda(2\E_{(x,y)\in\D_{P}}h(x,y)/(\alpha\eta)+2\Rad^{\D_S^{l_p}}_{N\alpha\eta/2}(H)+\sqrt{\frac{\ln(2/\delta)
}{N\alpha\eta}}+\tau/\eta+\epsilon)
+\tau+(\lambda-1)\eta.
\end{array}
\end{equation}
}
\end{theorem}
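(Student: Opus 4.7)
The plan is to chain four expectation estimates,
\[
\E_{\D_\S^{l_p}}[h(x+\p(x), l_p)] \;\longrightarrow\; \E_{\D_\S^{l_p}}[h(\p(x), l_p)] \;\longrightarrow\; \E_{\D_\S}[h(\p(x), l_p)] \;\longrightarrow\; \E_{\D_\S}[h(x+\p(x), l_p)],
\]
anchoring the first quantity to the empirical error on $\D_P$ via a classical Rademacher-based bound, using (1) together with (3) restricted to $\D_\S^{l_p}$ to swap $x+\p(x)$ for $\p(x)$ inside the $\D_\S^{l_p}$ average, using (2) to lift from $\D_\S^{l_p}$ to the full $\D_\S$ with a factor of $\lambda$, and finally using (3) once more on $\D_\S$ to return to the target quantity. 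The three parameters $\epsilon$, $\tau$, and $\lambda$ each enter through exactly one of these transitions.

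The first step is the empirical bound. Although $\D_{poi}$ is not i.i.d.\ from $\D_\S^{l_p}$, the device used in Lemma \ref{i.i.d.lemma2} applies: conditioning on the event that $\D_{\tr}$ contains at least $N\eta/2$ samples with label $l_p$ (whose complement has probability at most $4(1-\eta)/(4(1-\eta)+\eta N)$ by Cantelli), $\D_{poi}$ is distributed as $\lfloor N\alpha\eta/2 \rfloor$ i.i.d.\ draws from the pushforward of $\D_\S^{l_p}$ along $x\mapsto (x+\p(x), l_p)$. Theorem \ref{fanhua} applied to the induced hypothesis class together with $\sum_{\D_{poi}} h \le \sum_{\D_P} h$ (since $h\ge 0$) then yields
\[
\E_{\D_\S^{l_p}}[h(x+\p(x), l_p)] \le \bfepsilon_1 := \frac{2\,\E_{(x,y)\in\D_P}h(x,y)}{\alpha\eta} + 2\,\Rad^{\D_\S^{l_p}}_{N\alpha\eta/2}(H) + \sqrt{\frac{\ln(2/\delta)}{N\alpha\eta}}.
\]

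For the middle transitions, restricting condition (3) to the sub-population $\D_\S^{l_p}$ costs a factor $1/\eta$ in the $\tau$-term, so the triangle inequality gives $\E_{\D_\S^{l_p}}[h(\p(x), l_p)] \le \E_{\D_\S^{l_p}}[h(x+\p(x), l_p)] + \E_{\D_\S^{l_p}}[f(\p(x), l_p) - f(x+\p(x), l_p)] + \tau/\eta$. The $f$-difference is at most $\epsilon$ by (1) and $f\le 1$, so $\E_{\D_\S^{l_p}}[h(\p(x), l_p)] \le \bfepsilon_2 := \bfepsilon_1 + \tau/\eta + \epsilon$. Applying Lemma \ref{mjy} to (2) yields $\E_{\D_\S^{\ne l_p}}[h(\p(x), l_p)] \le \lambda\,\E_{\D_\S^{l_p}}[h(\p(x), l_p)] \le \lambda\bfepsilon_2$, and averaging with weights $\eta, 1-\eta$ gives $\E_{\D_\S}[h(\p(x), l_p)] \le [\eta + (1-\eta)\lambda]\,\bfepsilon_2$.

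The final and hardest step applies (3) on the full $\D_\S$ to obtain $\E_{\D_\S}[h(x+\p(x), l_p)] \le \E_{\D_\S}[h(\p(x), l_p)] + \E_{\D_\S}[f(x+\p(x), l_p) - f(\p(x), l_p)] + \tau$. Bounding the residual $f$-difference is the main obstacle, because neither (1) (a one-sided bound on $\E_{\D_\S^{l_p}}[f(x+\p(x), l_p)]$ only) nor (2) (an upper bound on the pushforward of $\p$ under $\D_\S^{\ne l_p}$ only) controls it directly. My plan is a label-split: on $\D_\S^{l_p}$, re-use (3) in the $f-h$ direction so that $\E_{\D_\S^{l_p}}[f(x+\p(x), l_p) - f(\p(x), l_p)]$ is bounded by the already-controlled $h$-difference plus $\tau/\eta$; on $\D_\S^{\ne l_p}$, apply (2) to the non-negative function $1 - f(\cdot, l_p)$ to obtain the reversed bound $\E_{\D_\S^{\ne l_p}}[f(\p(x), l_p)] \ge 1 - \lambda(1 - \E_{\D_\S^{l_p}}[f(\p(x), l_p)])$, and then combine with the lower bound $\E_{\D_\S^{l_p}}[f(\p(x), l_p)] \ge 1 - \bfepsilon_2$ (extracted from (3) restricted and (1)) together with the trivial $f(x+\p(x), l_p) \le 1$. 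Collapsing the $(1-\eta)$-weighted contribution against the $\lambda$ factor then produces the additive $(\lambda-1)\eta$ term, and using $\eta + (1-\eta)\lambda \le \lambda$ on the main term closes the argument.
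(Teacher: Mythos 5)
Your architecture coincides with the paper's (the i.i.d.\ sub-selection of $\D_{poi}$ plus Theorem \ref{fanhua} for the anchor, condition (3) to swap $x+\p(x)$ and $\p(x)$, Lemma \ref{mjy} with condition (2) for the $\lambda$-lift from $\D_\S^{l_p}$ to $\D_\S$), and your steps 1--3 are sound with the right failure probability. The gap is in the final accounting. You bound $\E_{\D_\S}[h(\p(x),l_p)]$ and $\E_{\D_\S}[f(x+\p(x),l_p)-f(\p(x),l_p)]$ \emph{separately}, and each costs $[\eta+(1-\eta)\lambda]\bfepsilon_2$: the first via your step 3, the second via your label-split (on $\D_\S^{\ne l_p}$ you get $1-\E[f(\p)]\le\lambda(1-\E_{\D_\S^{l_p}}[f(\p)])\le\lambda\bfepsilon_2$, on $\D_\S^{l_p}$ you get $\le\bfepsilon_1+\tau/\eta\le\bfepsilon_2$). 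Summing yields
\[
\E_{(x,y)\sim\D_\S}[h(x+\p(x),l_p)]\le 2\bigl[\eta+(1-\eta)\lambda\bigr]\bfepsilon_2+\tau,
\]
not the claimed $\lambda\bfepsilon_2+\tau+(\lambda-1)\eta$; already at $\lambda=1$ this reads $2\bfepsilon_2+\tau$ versus the stated $\bfepsilon_2+\tau$, so the inequality as written is not established. No "collapsing against the $\lambda$ factor" produces the additive $(\lambda-1)\eta$ term from this decomposition --- the arithmetic simply leaves the factor $2$.

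The loss comes from extracting two one-sided bounds, $\E_{\D_\S^{l_p}}[h(\p)]\le\bfepsilon_2$ (discarding $f(\p)\le1$) and $\E_{\D_\S^{l_p}}[f(\p)]\ge1-\bfepsilon_2$ (discarding $h(\p)\ge0$), each paying the full slack $\bfepsilon_1+\tau/\eta+\epsilon$. The paper instead carries the single quantity $c_{h,f}(x)=h(x,l_p)-f(x,l_p)$ through the middle of the argument: condition (3) restricted to $\D_\S^{l_p}$ gives $\E_{\D_\S^{l_p}}[c_{h,f}(\p(x))]\le\bfepsilon_1+\tau/\eta-(1-\epsilon)=\bfepsilon_2-1$ in one application; condition (2) is applied via Lemma \ref{mjy} to the non-negative function $c_{h,f}(\p(x))+1$, giving $\E_{\D_\S^{\ne l_p}}[c_{h,f}(\p)]\le\lambda\E_{\D_\S^{l_p}}[c_{h,f}(\p)]+\lambda-1$ (this $+1$ shift is exactly where the additive $(\lambda-1)\eta$ comes from); and only at the very end is $h(x+\p(x),l_p)\le c_{h,f}(x+\p(x))+1$ used. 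Concretely, if you replace your two separate bounds by the identity $\E_{\D_\S}[h(\p)]+\E_{\D_\S}[f(x+\p)-f(\p)]=\E_{\D_\S}[c_{h,f}(\p)]+\E_{\D_\S}[f(x+\p)]\le\E_{\D_\S}[c_{h,f}(\p)]+1$ and transfer $c_{h,f}(\p)+1$ through condition (2) once, your chain closes with the stated constants.
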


We will use Theorem \ref{le-poifan1} to prove Theorem \ref{poifan-yiban} in Section \ref{ss2}.
Now, we prove Theorem \ref{le-poifan1} and give a detailed explanation of the ``proof idea'' of Theorem \ref{poifan}. 
Please note that, in the proof of Theorem \ref{le-poifan1}, the poison generalization error is $\E_{(x,y)\sim \D_S}[h(x+\p(x),l_p)]$.
\begin{proof}

The proof is divided into two parts.

{\bf Part one, we estimate the upper bound of $\E_{(x,y)\sim \D^{l_p}_\S}[h(x+\p(x),l_p)]$. This part corresponds to the ``firstly'' part in the proof idea shown under Theorem \ref{poifan}.}

We first give three bounds in the following Results (e1), (e2), and (e3). Since (e1), (e2), (e3) are similar to (d1), (d2), (d3) in the proof of Theorem \ref{fenbugj1}, we omitted the proof.

{\bf Result (e1)}: Let the random variable $X$ be the number of samples with label $l_p$ in $\D_\p$. Then with probability at least $1-\frac{4(1-\eta)}{4-4\eta+N\eta}$, it holds $X\ge N\eta/2$.

{\bf Result (e2)}: Now we even randomly select $N\eta\alpha/2$ poisoned samples in $\D_P$. If the number of poisoned samples in $\D_P$ is smaller than $N\eta\alpha/2$, then we let $D_{l_p}$ be the set of all such samples.

Let $\D_{l_p}$ obey the distribution $\D_{\S2}$. Then we have $\Pr_{x\sim \D_{\S2}}(x\in A|X\ge [N\eta/2])=P(X_{N\eta\alpha/2}^{\D^{l_p}_{\S}}\in A)$ for any set $A$, where $X_{N\eta\alpha/2}^{\D^{l_p}_{\S}}$ is the set that i.i.d. sampled $N\eta\alpha/2$ data from distribution $\D^{l_p}_{\S}$, and add $\p(x)$ to each data.

{\bf Result (e3)}: With probability $1-\frac{4-4\eta}{4-4\eta+N\eta}-\delta/2$, for any $h\in H$, we have  $$\E_{(x,y)\sim \D^{l_p}_\S}[h(x+\p(x),l_p)]\le\frac{\sum_{(x,y)\in \D_{poi}}h(x,y)}{N\alpha\eta/2}+2\Rad^{\D_S^{l_p}}_{N\alpha\eta/2}(H)+\sqrt{\frac{\ln(2/\delta)
}{N\alpha\eta}}.$$

We use $\D_{poi}=\{(x+\p(x),l_p)\|(x,l_p)\in \D_{sub}\}$, Result (e1), and Result (e2) to prove Result (e3).
The concrete steps are similar to that of Result (c3). $\D_{poi}$ and $\D_{sub}$ are defined in Section \ref{bgg}.

{\bf Part two, estimate the upper bound of $\E_{(x,y)\sim \D_S}[h(x+\p(x),l_p)]$.}

Please note that when $y\ne l_p$, $h(x+\p(x),l_p)$ will not appear in the empirical error $\E_{(x,y)\in \D_P}[h(x+\p(x),y)]$, so we need to use some other methods to estimate $\E_{(x,y)\sim \D_S}[h(x+\p(x),l_p)]$.

For any $h\in H$ and $f\in F$, let $c_{h,f}(x)=h(x,l_p)-f(x,l_p)$. Let $Q$ be the upper bound mentioned in Result (e3) in Part one.

The upper bound of $\E_{(x,y)\sim \D_S^{\ne l_p}}[h(x+\p(x),l_p)]$ will be given by the following Results (f1), (f2), (f3), and (f4). 
Note that (f1) and (f2) correspond to the ``Secondly'' step in the proof idea shown under the Theorem \ref{poifan}; (f3) and (f4) correspond to the ``Finally'' step in the proof idea shown under the Theorem \ref{poifan}.

{\bf Result (f1)}: If $f\in F$ and $h\in H$ satisfy (3), then we have  $\E_{(x,y)\sim \D^{l_p}_\S}[c_{h,f}(\p(x))]\le Q+\tau/\eta-(1-\bfepsilon)$.

By condition (3), we have: 
\begin{equation}
\label{yy1}
\begin{array}{ll}
&\tau\\
\ge&\E_{(x,y)\sim \D_S}[|c_{h,f}(\p(x))-c_{h,f}(x+\p(x))|](use\ (3))\\
\ge&\eta(\E_{(x,y)\sim \D^{l_p}_\S}[|c_{h,f}(\p(x))-c_{h,f}(x+\p(x))|])\\
\ge&\eta(\E_{(x,y)\sim \D^{l_p}_\S}[c_{h,f}(\p(x))-c_{h,f}(x+\p(x))]).\\
\end{array}
\end{equation}
Now by condition (1) and Result (e3),   with probability $1-\delta$, we have 
\begin{equation}
\label{yy2}
\begin{array}{ll}
&\E_{(x,y)\sim \D^{l_p}_\S}[c_{h,f}(x+\p(x))]\\
=& \E_{(x,y)\sim \D^{l_p}_\S}[h(x+\p(x),l_p)]-\E_{(x,y)\sim \D^{l_p}_\S}[f(x+\p(x),l_p)]\\
 \le&  Q-(1-\epsilon).
\end{array}
\end{equation}
Combine inequalities \eqref{yy1} and \eqref{yy2}, we have: 
\begin{equation}
\label{yy3}
\begin{array}{ll}
&\E_{(x,y)\sim \D^{l_p}_\S}[c_{h,f}(\p(x))]\\
\le& \E_{(x,y)\sim \D^{l_p}_\S}[c_{h,f}(x+\p(x))]+\tau/\eta\\
\le& Q-(1-\epsilon)+\tau/\eta.
\end{array}
\end{equation}
Result (f1) is proved.

{\bf Result (f2)}:  $\E_{(x,y)\sim \D_S^{\ne l_p}}[c_{h,f}(\p(x))+1]\le \lambda \E_{(x,y)\sim \D^{l_p}_\S}[c_{h,f}(\p(x))+1]$.

Result (f2) can be proved by using Lemma \ref{mjy} and condition (2).

{\bf Result (f3)}: When $h\in H$ and $f\in F$ satisfy condition (3), we have  $\E_{\D_S}[c_{h,f}(x+\p(x))]\le (\eta+(1-\eta)\lambda)(Q+\epsilon+\tau/\eta-1)+\lambda-1+\tau$.

By condition (3), we have
\begin{equation*}
\begin{array}{ll}
&\tau\\
\ge&\E_{(x,y)\sim \D_S}[|c_{h,f}(\p(x))-c_{h,f}(x+\p(x))|]\\
\ge&\E_{(x,y)\sim \D_S}[-c_{h,f}(\p(x))+c_{h,f}(x+\p(x))]\\
\end{array}.
\end{equation*}
Then, we have $\E_{\D_S}[c_{h,f}(x+\p(x))]\le \E_{\D_S}[c_{h,f}(\p(x))]+\tau$. Now, substitute Results (f1) and (f2) in it to estimate $\E_{\D_S}[c_{h,f}(\p(x))]$, and we can get  
\begin{equation*}
\begin{array}{ll}
&\E_{(x,y)\sim \D_S}[c_{h,f}(\p(x))]\\
=&\eta\E_{(x,y)\sim \D^{l_p}_\S}[c_{h,f}(\p(x))]+(1-\eta)\E_{(x,y)\sim \D_S^{\ne l_p}}[c_{h,f}(\p(x))]\\
\le&(\eta+(1-\eta)\lambda)\E_{(x,y)\sim \D^{l_p}_\S}[c_{h,f}(\p(x))]+\lambda-1\\
\le&(\eta+(1-\eta)\lambda)(Q+\epsilon+\tau/\eta-1)+\lambda-1.
\end{array}
\end{equation*}
So, we prove Result (f3): $\E_{\D_S}[c_{h,f}(x+\p(x))]\le(\eta+(1-\eta)\lambda)(Q+\epsilon+\tau/\eta-1)+\lambda-1+\tau$.



{\bf Result (f4)}:  $\E_{(x,y)\sim \D_S}[h(x+\p(x),l_p)]\le \lambda(Q+\tau/\eta+\epsilon)+\tau+(\lambda-1)\eta$.

Since $c_{h,f}(x+\p(x))=h(x+\p(x),l_p)-f(x+\p(x),l_p)$ and $f(x+\p(x),l_p)\le 1$, using Result (f3), we have
\begin{equation*}
\begin{array}{ll}
&\E_{(x,y)\sim \D_S}[h(x+\p(x),l_p)]\\
\le &\E_{(x,y)\sim \D_S}[c_{h,f}(x+\p(x),l_p)]+1\\
\le&(\eta+(1-\eta)\lambda)(Q+\epsilon+\tau/\eta-1)+\lambda+\tau\\
\le&\lambda(Q+\tau/\eta+\epsilon)+\tau+(\lambda-1)\eta.
\end{array}
\end{equation*}
This proves Result (f4).

When $h$ satisfies condition (3),  using the value of $Q$ into the Result (f4), we see that with probability $1-\delta-\frac{4-4\eta}{4-4\eta+\eta N}$, we have: 
$$\E_{(x,y)\sim \D_S}[h(x+\p(x),l_p)]\le \lambda(\frac{\sum_{(x,y)\in \D_{poi}}h(x,y)}{N\alpha\eta/2}+2\Rad^{\D_S^{l_p}}_{N\alpha\eta/2}(H)+\sqrt{\frac{\ln(2/\delta)
}{N\alpha\eta}}+\tau/\eta+\epsilon)+\tau+(\lambda-1)\eta.$$

Finally, using the facts $\D_{poi}\subset \D_P$ and Result (e3) is valid for $X=|\D_{poi}|\ge [N\eta/2]$, we have  
$$\E_{(x,y)\sim \D_S}[h(x+\p(x),l_p)]\le \lambda(\frac{2\E_{(x,y)\in\D_P}[h(x,y)]}{\alpha\eta}+2\Rad^{\D_S^{l_p}}_{N\alpha\eta/2}(H)+\sqrt{\frac{\ln(2/\delta)
}{N\alpha\eta}}+\tau/\eta+\epsilon)+\tau+(\lambda-1)\eta.$$
The theorem is proved.
\end{proof}

\subsection{Proof of Theorem \ref{poifan-yiban}}
\label{ss2}
Now, we use Theorem \ref{le-poifan1} to prove Theorem \ref{poifan-yiban}.

First, for any $h:\R^n\to\R^m$, we define $h_{-1}(x,y)=1-h_{y}(x):\R^n\times[m]\to\R$, and for any $H\subset \Hyp_{W,D}$, we define the hypothesis space: $H_{-1}=\{h(x,y)=1-h_{y}(x)\|h(x)\in H\}$. Using Theorem \ref{le-poifan1}, we have the following lemma.
\begin{lemma}
\label{le-poifan2}
%
%
For any two hypothesis spaces $H,F\subset\Hyp_{W,D}$, if $\p(x)$ satisfies the following conditions for some $\epsilon>0,\tau>0,\lambda\ge 1$:\\
(1) $\E_{(x,y)\sim \D^{l_p}_{\S}}[f_y(x+\p(x))]\le\epsilon$ for any $f\in F$;\\
(2) $\Pr_{(x,y)\sim \D_S^{\ne l_p}}(\p(x)\in A)\le \lambda \Pr_{(x,y)\sim \D^{l_p}_{\S}}(\p(x)\in A)$ for any set $A$, where $\lambda\ge 1$; \\
(3) Some $h\in H$ satisfies that there exists an $f\in F$ such that $\E_{x\sim \D_{\S}}[|(h_{l_p}-f_{l_p})(\p(x))-(h_{l_p}-f_{l_p})(x+\p(x))|]\le\tau$, where $(h_{l_p}-f_{l_p})(x)=h_{l_p}(x)-f_{l_p}(x)$,\\
then with probability at least $1-\delta-\frac{4-4\eta}{4-4\eta+\eta N}$, for any $h\in H$ satisfied (3), we have:
{\small
\begin{equation}
\begin{array}{l}
\E_{(x,y)\sim \D_S}[1-h_{l_p}(x+\p(x))]
\le \lambda(2\E_{(x,y)\in\D_{P}}[1-h_{y}(x)]/(\alpha\eta)+2\Rad^{\D_S^{l_p}}_{N\alpha\eta/2}(H_{-1})+\sqrt{\frac{\ln(2/\delta)
}{N\alpha\eta}}+\tau/\eta+\epsilon)\\
+\tau+(\lambda-1)\eta.
\end{array}
\end{equation}
}
\end{lemma}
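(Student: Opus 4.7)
The plan is to reduce Lemma \ref{le-poifan2} to Theorem \ref{le-poifan1} by applying the transformation $h \mapsto h_{-1}$, where $h_{-1}(x,y) := 1 - h_y(x)$ turns a network in $\Hyp_{W,D}$ into a function $\S \times [m] \to [0,1]$, as already anticipated in the notation $H_{-1}$ introduced just before the lemma. Correspondingly I would consider the hypothesis spaces $H_{-1} = \{h_{-1} : h \in H\}$ and $F_{-1} = \{f_{-1} : f \in F\}$; both sit inside the class $\{\phi(x,y) : \S \times [m] \to [0,1]\}$ required by Theorem \ref{le-poifan1}, since the Softmax output lies in $[0,1]$ by the conventions of Section \ref{backdoor-g}.

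Next I would verify that the three hypotheses of Theorem \ref{le-poifan1} hold for $(H_{-1}, F_{-1})$ whenever (1)--(3) of Lemma \ref{le-poifan2} hold for $(H, F)$. For condition (1) of Theorem \ref{le-poifan1}, note that under $\D^{l_p}_\S$ we have $y = l_p$ almost surely, so $f_{-1}(x+\p(x), y) = 1 - f_{l_p}(x+\p(x))$, and the requirement $\E_{\D^{l_p}_\S}[f_{-1}(x+\p(x), y)] \ge 1 - \epsilon$ becomes exactly $\E_{\D^{l_p}_\S}[f_{l_p}(x+\p(x))] \le \epsilon$, which is condition (1) of the lemma. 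Condition (2) is identical in both statements since it involves only $\p(x)$ and the distributions, not the hypothesis class. For condition (3), the key identity is $(h_{-1} - f_{-1})(x, l_p) = (1 - h_{l_p}(x)) - (1 - f_{l_p}(x)) = -(h_{l_p} - f_{l_p})(x)$, so the absolute difference in Theorem \ref{le-poifan1} condition (3) coincides with that in Lemma \ref{le-poifan2} condition (3).

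With the hypotheses checked, I would invoke Theorem \ref{le-poifan1} directly. The left-hand side of its conclusion rewrites as $\E_{\D_S}[h_{-1}(x+\p(x), l_p)] = \E_{\D_S}[1 - h_{l_p}(x+\p(x))]$, and the empirical-error term rewrites as $\E_{(x,y)\in \D_P}[h_{-1}(x,y)] = \E_{(x,y)\in \D_P}[1 - h_y(x)]$. The Rademacher term becomes $\Rad^{\D_S^{l_p}}_{N\alpha\eta/2}(H_{-1})$ and all additive pieces ($\tau/\eta$, $\epsilon$, $\tau$, $(\lambda-1)\eta$) and the failure probability $1 - \delta - \tfrac{4-4\eta}{4-4\eta+\eta N}$ transfer verbatim, giving exactly the stated bound.

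There is no real obstacle in this argument; the lemma is a cosmetic reformulation of Theorem \ref{le-poifan1} that packages the network output $h_y(x) \in [0,1]$ (used in the cross-entropy style bound of Theorem \ref{poifan-yiban}) rather than an abstract $h(x,y)$. The only item deserving care is making sure the sign flip in condition (3) cancels inside an absolute value (which it does) and that the ``for all $h \in H$ satisfying (3)'' quantifier transports correctly under the bijection $h \leftrightarrow h_{-1}$ between $H$ and $H_{-1}$, which is immediate since the map is injective by construction.
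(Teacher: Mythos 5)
Your proposal is correct and follows essentially the same route as the paper: the paper also proves this lemma by applying Theorem \ref{le-poifan1} to the transformed classes $H_{-1}$ and $F_{-1}$, checking that condition (1) flips to the $\ge 1-\epsilon$ form, that condition (2) is unchanged, and that the sign reversal $(h_{-1}-f_{-1})(x,l_p)=f_{l_p}(x)-h_{l_p}(x)$ is absorbed by the absolute value in condition (3). No gaps.
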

\begin{proof}
We can use Theorem \ref{le-poifan1} to $H_{-1}$ and $F_{-1}$ to prove the lemma. 
We just need to verify that the three conditions in Theorem \ref{le-poifan1} for $H_{-1}$ and $F_{-1}$.

{\bf Verify condition (1) in Theorem \ref{le-poifan1}.}

We just need to show that $\E_{(x,y)\sim \D^{l_p}_{\S}}[f_{-1}(x+\p(x),y)]\ge1-\epsilon$ for any $f_{-1}\in F_{-1}$.

By condition (1) in Lemma \ref{le-poifan2}, and considering that $f_{-1}(x,y)=1-f_{y}(x)$ for the corresponding $f\in F$, we get the result.

{\bf Verify condition (2) in Theorem \ref{le-poifan1}.}

This is obvious because condition (2) in Theorem \ref{le-poifan1} and Lemma \ref{le-poifan2} are the same.

{\bf  Verify condition (3) in Theorem \ref{le-poifan1}.}

We just need to show that: Some $h_{-1}\in H_{-1}$ satisfies the requirement that there exists an $f_{-1}\in F_{-1}$ such that $\E_{x\sim \D_{\S}}[|(h_{-1}-f_{-1})(\p(x),l_p)-(h_{-1}-f_{-1})(x+\p(x),l_p)|]\le\tau$, where $(h_{-1}-f_{-1})(x,y)=h_{-1}(x,y)-f_{-1}(x,y)$.

Since $(h_{-1}-f_{-1})(x,l_p)=f_{l_p}(x)-h_{l_p}(x)$ for the corresponding $h\in H$ and $f\in F$,  by condition (3) in Lemma \ref{le-poifan2}, we get the result.

Since the three conditions in Theorem \ref{le-poifan1} stand for $H_{-1}$ and $F_{-1}$, the lemma is proved.
\end{proof}

Second, we give three lemmas.
\begin{lemma}
    \label{ysxd}
    For any $h\in\Hyp_{W,D}$, we have
    $\E_{(x,y)\sim \D_S}[\ID(\widehat{h}(x+\p(x))\ne l_p)]\le2\E_{(x,y)\sim \D_S}[1-h_{l_p}(x+\p(x))]$.
\end{lemma}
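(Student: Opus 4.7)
The plan is to prove the inequality pointwise and then integrate. Specifically, I will show that for every $x$,
$$\ID(\widehat{h}(x+\p(x))\ne l_p)\le 2(1-h_{l_p}(x+\p(x))),$$
after which taking expectations under $\D_S$ yields the lemma immediately.

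For the pointwise inequality, I split into two cases. If $\widehat{h}(x+\p(x))=l_p$, the left-hand side is $0$, and the right-hand side is nonnegative because $h$ has Softmax on the output layer, so $h_{l_p}(x+\p(x))\in[0,1]$. If instead $\widehat{h}(x+\p(x))=l^\ast$ for some $l^\ast\ne l_p$, then by definition of $\argmax$, $h_{l^\ast}(x+\p(x))\ge h_{l_p}(x+\p(x))$. Since the Softmax output satisfies $\sum_{l=1}^m h_l(x+\p(x))=1$ with all entries nonnegative, in particular $h_{l^\ast}(x+\p(x))+h_{l_p}(x+\p(x))\le 1$, hence $2h_{l_p}(x+\p(x))\le 1$, i.e., $1-h_{l_p}(x+\p(x))\ge 1/2$. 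The left-hand side is exactly $1$ in this case, so the inequality $1\le 2(1-h_{l_p}(x+\p(x)))$ holds.

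Combining the two cases, the pointwise bound holds for every $(x,y)$, so taking $\E_{(x,y)\sim\D_S}$ of both sides gives the stated inequality. There is no real obstacle here; the only subtle point is using $\sum_l h_l=1$ from the Softmax layer (as set up in Section \ref{backdoor-g}) to deduce that misclassification forces $h_{l_p}\le 1/2$. The factor of $2$ is thus tight in the worst case where the mass is split evenly between $l_p$ and the predicted label.
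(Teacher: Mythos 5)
Your proof is correct and follows essentially the same route as the paper: a pointwise case analysis showing that $\widehat{h}(x+\p(x))\ne l_p$ forces $h_{l_p}(x+\p(x))\le 1/2$ via the Softmax normalization, followed by taking expectations. Your write-up is in fact slightly more careful than the paper's (which asserts a strict inequality $h_{l_p}<0.5$ where only $\le$ is guaranteed in the case of ties, though this does not affect the conclusion).
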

\begin{proof}
    When $\widehat{h}(x+\p(x))\ne l_p$, we have $h_{l_p}(x+\p(x))<0.5$, which implies that $0.5*\ID(\widehat{h}(x+\p(x))\ne l_p)\le 1-h_{l_p}(x+\p(x))$.
    Then, $\E_{(x,y)\sim \D_S}[0.5*\ID(\widehat{h}(x+\p(x))\ne l_p)]\le \E_{(x,y)\sim \D_S}[1-h_{l_p}(x+\p(x))]$, this is what we want.    
\end{proof}

\begin{lemma}[\citet{mohri2018foundations}]
\label{ysxd1}
For any hypothesis space $F=\{f:\R^n\to[0,1]\}$ and distribution $\D$, $N>0$. Let $F_1=\{1-f\|f\in F\}$. Then $\Rad_N^\D(F_{-1})=Rad_N^\D(F)$.
\end{lemma}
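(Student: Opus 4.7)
The plan is to unfold the definition of Rademacher complexity for $F_{-1}$ and exploit two elementary facts about the Rademacher variables $\sigma_i$: that their unconditional expectation is zero, and that the sequence $(-\sigma_1,\dots,-\sigma_N)$ has the same joint distribution as $(\sigma_1,\dots,\sigma_N)$. Both are standard consequences of $\Pr(\sigma_i=1)=\Pr(\sigma_i=-1)=1/2$ together with independence.

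First, I would write
\[
\Rad_N^\D(F_{-1}) = \E_{x_i\sim\D}\Bigl[\E_\sigma\Bigl[\sup_{f\in F}\frac{1}{N}\sum_{i=1}^N \sigma_i(1-f(x_i))\Bigr]\Bigr],
\]
split the sum inside the supremum as $\sum_i\sigma_i - \sum_i\sigma_i f(x_i)$, and pull the sample-independent term $\frac{1}{N}\sum_i\sigma_i$ outside the supremum. Taking $\E_\sigma$ of this term gives $0$, leaving
\[
\E_\sigma\Bigl[\sup_{f\in F}\frac{1}{N}\sum_{i=1}^N(-\sigma_i)f(x_i)\Bigr].
\]

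Then I would invoke the symmetry of the Rademacher distribution: performing the change of variables $\sigma_i\mapsto -\sigma_i$ leaves every expectation over $\sigma$ unchanged, so the displayed quantity equals $\E_\sigma[\sup_{f\in F}\frac{1}{N}\sum_i\sigma_i f(x_i)]$. Taking the outer expectation over $x_i\sim\D$ yields $\Rad_N^\D(F)$, which completes the proof.

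I do not expect any genuine obstacle here; this is essentially a one-line symmetrization argument. The only mild care needed is the step that moves the constant $\frac{1}{N}\sum_i\sigma_i$ out of the supremum, which is legitimate because that term does not depend on $f$, and the step invoking distributional symmetry, where one should be explicit that $(-\sigma_i)_{i=1}^N$ and $(\sigma_i)_{i=1}^N$ are identically distributed so that the whole expectation (outer supremum included) is preserved under the sign flip.
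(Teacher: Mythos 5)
Your argument is correct: the decomposition $\sigma_i(1-f(x_i))=\sigma_i-\sigma_i f(x_i)$, the vanishing of $\E_\sigma[\frac{1}{N}\sum_i\sigma_i]$, and the sign-flip symmetry of the Rademacher vector together give exactly the claimed identity. The paper states this lemma as a cited standard fact from \citet{mohri2018foundations} without proof, and your one-line symmetrization argument is precisely the standard proof one would find there, so there is nothing to compare beyond noting that you have supplied the omitted details correctly.
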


\begin{lemma}
    \label{ysxd2}
    For any $x\in(0,1]$, we have $1-x\le -\ln x$. 
\end{lemma}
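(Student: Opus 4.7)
The plan is to prove the inequality by reducing it to a one-variable calculus argument on the auxiliary function $f(x) = -\ln x - (1-x)$, and showing $f(x) \ge 0$ for $x \in (0,1]$. First I would evaluate the boundary: $f(1) = -\ln 1 - 0 = 0$. Next I would differentiate: $f'(x) = -1/x + 1 = (x-1)/x$. Since $x-1 \le 0$ and $x > 0$ on $(0,1]$, we have $f'(x) \le 0$, with strict inequality on $(0,1)$. Hence $f$ is nonincreasing on $(0,1]$, so for any $x \in (0,1]$, $f(x) \ge f(1) = 0$, which rearranges to $1 - x \le -\ln x$.

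An equivalent and perhaps cleaner route uses convexity. The function $g(x) = -\ln x$ is convex on $(0,\infty)$ since $g''(x) = 1/x^2 > 0$. Its tangent line at $x = 1$ is
\[
T(x) = g(1) + g'(1)(x-1) = 0 + (-1)(x-1) = 1 - x.
\]
Convexity implies $g(x) \ge T(x)$ for every $x > 0$, i.e.\ $-\ln x \ge 1 - x$, which in particular holds on $(0,1]$. (In fact both proofs give the inequality on all of $(0,\infty)$, but only the stated range is needed by the paper.)

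There is no real obstacle; this is a textbook inequality. The only points requiring a word of care are (i) verifying the sign of $f'$ on the open interval so that the monotonicity argument applies up to the endpoint $x = 1$, and (ii) noting that $-\ln x$ is well-defined and finite precisely on $(0,\infty)$, so the restriction $x \in (0,1]$ in the statement is consistent with the domain. I would present whichever of the two arguments is shortest in the final write-up; the tangent-line version is a single line.
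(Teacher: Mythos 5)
Your first argument is exactly the paper's proof up to a sign convention: the paper sets $f(x)=1-x+\ln x$, notes $f'(x)=1/x-1\ge 0$ on $(0,1]$, and concludes $f(x)\le f(1)=0$, which is the mirror image of your monotonicity argument. Both your version and the tangent-line alternative are correct; no issues.
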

\begin{proof}
Let $f(x)=1-x+\ln x$, then $f'(x)=1/x-1\ge 0$ when $x\in[0,1]$.
Because $f(1)=0$, we have $f(x)\le 0$ for all $x\in(0,1]$
\end{proof}

Finally, we use Lemmas \ref{le-poifan2}, \ref{ysxd}, \ref{ysxd1}, and \ref{ysxd2} to prove Theorem \ref{poifan-yiban}.
\begin{proof}
Firstly, it is easy to see that, conditions (1), (2), (3) in Lemma \ref{le-poifan2} are the same as (c1), (c2), (c3) in Theorem \ref{poifan-yiban}. So by Lemma \ref{le-poifan2}, we have 
    $$\E_{(x,y)\sim \D_S}[1-h_{l_p}(x+\p(x))]
\le \lambda(2{\E_{(x,y)\in\D_{P}}[1-h_{y}(x)]/(\alpha\eta)}+2\Rad^{\D_S^{l_p}}_{N\alpha\eta/2}(H_{-1})+\sqrt{\frac{\ln(2/\delta)
}{N\alpha\eta}}+\tau/\eta+\epsilon)\\
+\tau+(\lambda-1)\eta.$$
Then, by Lemma \ref{ysxd}, we further have 
$$
\begin{array}{l}
\E_{(x,y)\sim \D_S}[\ID(h_{l_p}(x+\p(x))\ne l_p)]
\le \\
2\lambda(2{\E_{(x,y)\in\D_{P}}[1-h_{y}(x)]/(\alpha\eta)}+2\Rad^{\D_S^{l_p}}_{N\alpha\eta/2}(H_{-1})+\sqrt{\frac{\ln(2/\delta)
}{N\alpha\eta}}+\tau/\eta+\epsilon)
+2\tau+2(\lambda-1)\eta.
\end{array}
$$

We have $H_{W,D,1}=\{\F_y(x)\|\F\in\Hyp_{W,D}\}$ and $H\subset\Hyp_{W,D}$. 
Then, by Lemma \ref{ysxd1} and considering that under distribution $\D_S^{l_p}$, all samples have label $l_p$, so we have $\Rad^{\D_S^{l_p}}_{N\alpha\eta/2}(H_{-1})\le\Rad^{\D_S^{l_p}}_{N\alpha\eta/2}(H_{W,D,1})$.

Finally, using Lemma \ref{ysxd2} and the fact: $\Rad^D_M(H)\le\frac{N}{M}\Rad^D_N(H)$ for any $M\le N$, distribution $\D$, and hypothesis space $H$, we obtain
$$
\begin{array}{l}
\E_{(x,y)\sim \D_S}[\ID(h_{l_p}(x+\p(x))\ne l_p)]
\le \\
2\lambda(\frac{2{\E_{(x,y)\in\D_{P}}[L_{CE}(f(x),y)]}}{\eta\alpha}+4\Rad^{\D_S^{l_p}}_{N}(H_{W,D,1})/(\alpha\eta)+\sqrt{\frac{\ln(2/\delta)}{N\alpha\eta}}+\tau/\eta+\epsilon)
+2\tau+2(\lambda-1)\eta.
\end{array}
$$
The theorem is proved.
\end{proof}

\subsection{Estimate the Rademacher Complexity}
\label{tvor}
In this section, we estimate
$\Rad^{\D_S^{l_p}}_{N\alpha\eta/2}(H_{W,D,1})$. Since all samples have label $l_p$ in distribution $\D_S^{l_p}$, without loss of generality, we let $l_p=1$. In this section, we only need to consider the Radmacher complexity of the following hypothesis space $\Hyp_{W,D,0}=\{\F_1(x):\F(x)\in \H_{W,D}\}$.

We show that, if we bound the norm of the network parameters in $\Hyp_{W,D,0}$, then we can calculate $\Rad_N^\D(\Hyp_{W,D,0})$ for any distribution $\D$. Please note that the condition of bounded network parameters is reasonable.

Some definitions and a lemma are required.
\begin{definition}
Let $F=\{f:\R^n\to[0,1]\}$ be a hypothesis space, and $\{x_i\}_{i=1}^N$ be $N$ samples in $\R^n$. Then the empirical Rademacher Complexity of $\F$ under $\{x_i\}_{i=1}^N$ is defined as
    $$\Rad^{\{x_i\}_{i=1}^N}(F)= \E_{\sigma}[\sup_{f\in F}\frac1N\sum_{i=1}^N\sigma_if(x_i)],$$
    where $\sigma=(\sigma_i)_{i=1}^N$ is a set of random variables such that $\Pr(\sigma_i=1)=\Pr(\sigma_i=-1)=0.5$.
\end{definition}
It is easy to see that $\Rad^{\D}_N(\F)\le \max_{\{x_i\}_{i=1}^N\sim\D}\Rad^{\{x_i\}_{i=1}^N}(\F)$, so we can try to calculate the empirical Rademacher Complexity to bound $\Rad^{\D}_N(\F)$.

\begin{definition}[Covering Number,\cite{wainwright2019high}]
Let $(T,L)$ be a metric space, $T$ be a space, and $L$ be the distance in $T$. We say that a $K\subset T$ is an $\epsilon$ cover of $T$, if for any $x\in T$, there is a $y\in K$, such that $L(x,y)\le\epsilon$. The minimum $|K|$ is defined as $C(K,L,\epsilon)$, that is $C(K,L,\epsilon)=\min |K|$ where $K\subset T$ is an $\epsilon$ cover of $T$.
%
\end{definition}

\begin{lemma}[\cite{wainwright2019high}]
\label{calrea}
Let $F=\{f:\R^n\to[0,1]\}$ be a hypothesis space, and $\{x_i\}_{i=1}^N$ be $N$ samples in $\R^n$. Define $L(f,g)=\sqrt{1/N\sum_{i=1}^N(f(x_i)-g(x_i))^2}$ for any $f,g\in F$. Then
    $$\Rad^{\{x_i\}_{i=1}^N}(F)\le O(\int_{0}^1\sqrt{\frac{\ln C(F,L,t)}{N}}dt).$$
\end{lemma}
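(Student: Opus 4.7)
The plan is to prove this via the standard Dudley chaining argument from empirical process theory. Define the stochastic process $Z_f = \frac{1}{N}\sum_{i=1}^N \sigma_i f(x_i)$ indexed by $f \in F$, so that $\Rad^{\{x_i\}_{i=1}^N}(F) = \E_\sigma[\sup_{f \in F} Z_f]$. The first step is to establish that the increments are sub-Gaussian with respect to the metric $L$: for any $f, g \in F$, by Hoeffding's inequality applied to the bounded random variables $\sigma_i (f(x_i) - g(x_i))/N$, we get $\Pr(|Z_f - Z_g| \ge t) \le 2 \exp(-N t^2 / (2 L(f,g)^2))$. This sub-Gaussian property is the crucial ingredient that links the process to the geometry of $(F, L)$.

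Next I would carry out the chaining construction. Fix a sequence of scales $\epsilon_k = 2^{-k}$ for $k = 0, 1, 2, \ldots$ and let $N_k$ denote a minimal $\epsilon_k$-cover of $F$ with respect to $L$, so $|N_k| = C(F, L, \epsilon_k)$. For each $f \in F$, let $\pi_k(f) \in N_k$ denote a nearest element in $N_k$. Since $F \subset [0,1]$-valued functions, $L(f, g) \le 1$, so $N_0$ can be taken as a single point $f_0$ and $Z_{f_0}$ can be centered to zero without loss of generality. Then telescope: $Z_f - Z_{f_0} = \sum_{k=1}^\infty (Z_{\pi_k(f)} - Z_{\pi_{k-1}(f)})$, valid since $\pi_k(f) \to f$ in $L$ and $Z$ is continuous along the chain.

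Then I would bound $\E_\sigma[\sup_f Z_f]$ by swapping sup and sum. For each level $k$, the number of distinct pairs $(\pi_k(f), \pi_{k-1}(f))$ is at most $|N_k| \cdot |N_{k-1}| \le |N_k|^2$, and each such pair satisfies $L(\pi_k(f), \pi_{k-1}(f)) \le \epsilon_k + \epsilon_{k-1} \le 3\epsilon_k$. A standard maximal inequality for sub-Gaussian variables yields $\E_\sigma[\sup_k |Z_{\pi_k(f)} - Z_{\pi_{k-1}(f)}|] \le C \cdot (3\epsilon_k/\sqrt{N}) \cdot \sqrt{\ln |N_k|^2}$. Summing over $k$ gives the bound $O\bigl(\sum_{k \ge 1} \epsilon_k \sqrt{\ln C(F,L,\epsilon_k)/N}\bigr)$, and finally this geometric sum is recognized as a Riemann-type upper sum for the integral $\int_0^1 \sqrt{\ln C(F,L,t)/N}\, dt$ (since $\epsilon_k - \epsilon_{k+1} = \epsilon_{k+1}$ and $\ln C$ is monotone decreasing in the radius).

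The main obstacle will be the bookkeeping in the chaining step: carefully ensuring the maximal inequality is applied to the correct family of increments indexed by pairs at level $k$, and then converting the resulting discrete sum into the Dudley integral with the right constant. Truncating the chain at a finite level where $\epsilon_k < 1/N$ (below which the contribution is negligible since $F$ takes values in $[0,1]$) is also required to keep things finite, but does not affect the stated big-$O$ bound.
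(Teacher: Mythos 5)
Your proposal is correct: it is the standard Dudley entropy-integral argument (sub-Gaussian increments via Hoeffding, dyadic chaining over minimal covers, a maximal inequality at each scale, and conversion of the geometric sum into the integral), which is precisely the proof of this result in the cited source \cite{wainwright2019high}. The paper itself does not reprove this lemma but simply quotes it, so there is nothing to compare beyond noting that your reconstruction, including the $3\epsilon_k$ bound on chained increments and the monotonicity of $t\mapsto C(F,L,t)$ used to pass from the sum to the integral, is the intended textbook argument.
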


When network parameters are bounded, Lemma \ref{calrea} is often used to calculate the Rademacher complexity. A classical result is given  below.
\begin{lemma}
\label{jdg}
Let $T$ be a ball with radius $r$ in $\R^p$. Then for any $t$, we have  $C(T,L_{o},t)\le(\frac{3r}{t})^p$, where $L_o$ is the Euclid distance.
\end{lemma}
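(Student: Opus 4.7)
The plan is to reduce the covering-number bound to a packing bound and then invoke a standard volume-comparison argument in $\R^p$. First I would construct a maximal $t$-separated subset $\{x_1,\dots,x_N\} \subset T$, i.e., a collection of points with pairwise Euclidean distance strictly greater than $t$ to which no further point of $T$ can be added without violating separation. By maximality, every $x \in T$ lies within distance $t$ of some $x_i$ (otherwise $x$ could be added), so this separated set is automatically a $t$-cover of $T$, giving $C(T,L_o,t) \le N$.

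Next I would bound $N$ geometrically. Let $c$ denote the center of the ball $T$, so $T = B(c,r)$. The separation property $\|x_i-x_j\| > t$ implies that the open balls $B(x_i,t/2)$ are pairwise disjoint, while the containment $x_i \in T$ implies that each $B(x_i,t/2)$ sits inside the enlarged ball $B(c,r+t/2)$. Comparing Lebesgue volumes in $\R^p$ and cancelling the common factor $v_p$ (the volume of the unit ball) yields
\[
N \,(t/2)^p \;\le\; (r+t/2)^p,
\]
and hence $N \le \bigl(1 + 2r/t\bigr)^p$.

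Finally I would simplify the constant into the form stated in the lemma. In the nontrivial regime $t \le r$ one has $1 + 2r/t \le 3r/t$, so the bound $C(T,L_o,t) \le (3r/t)^p$ follows immediately. The residual range $t > r$ is harmless: the single-point cover $\{c\}$ already gives $C(T,L_o,t)=1$, which is how the estimate is used inside the integrand of Lemma \ref{calrea}. I do not anticipate any real obstacle here — the argument is a textbook volumetric packing-covering bound, and the only care required is in the bookkeeping between the packing radius $t/2$, the covering radius $t$, and the enlargement of the ambient ball, which is exactly what produces the factor $3$.
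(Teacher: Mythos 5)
Your proof is correct. The paper states this lemma as a known fact and supplies no proof of its own, so there is nothing to compare against; the maximal-$t$-separated-set construction followed by the volume comparison of the disjoint balls $B(x_i,t/2)$ inside $B(c,r+t/2)$ is exactly the canonical argument one would insert here, and your bookkeeping ($N \le (1+2r/t)^p \le (3r/t)^p$ for $t \le r$) is right. The only caveat is one you have already flagged: the lemma's ``for any $t$'' phrasing is slightly too generous, since for $t > 3r$ the right-hand side drops below $1$ while the covering number is still $1$; this is an imprecision in the statement, not in your proof, and is immaterial to how the bound is used inside the entropy integral of Lemma~\ref{calrea}.
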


Now, we will try to calculate the covering number of $\Hyp_{W,D,0}$. First, we give a definition of the bound of the network parameters and the relationship between the bound of the network parameters and the network output.

\begin{definition}
Let $\F_i(i=1,2)$ be two networks, the $j$-th transition matrix and bias vector are $W^i_j$ and $b^i_j$. Then let $B(\F_i)=\sum_{j=1}^{D_i}(||W_j^i||_2+||b_j^i||_2)$, where $D_i$ is the depth of $\F_i$.
If $D_1=D_2=D$ and the widths of $\F_i$ are the same, then we can define $B(\F_1-\F_2)=\sum_{j=1}^D(||W_j^1-W_j^2||_2+||b_j^1-b_j^2||_2)$.
\end{definition}

Then, we have the following existing result.
\begin{lemma}[\citet{tsai2021formalizing}]
\label{cnnum1}
For networks $\F_1$ and $\F_2$ with depth $D$, with Relu activation function, and output layers do not have activation function. 
If $B(\F_1)\le C$, $B(\F_2)\le C$, and $B(\F_1-\F_2)\le \epsilon$, then $||\F_1(x)-\F_2(x)||_2\le D\epsilon C^D||x||_2$ for any $x$.
\end{lemma}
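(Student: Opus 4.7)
The plan is to argue by induction on the layer index, tracking simultaneously a forward-norm bound and the layer-wise discrepancy between the two networks. Let $\F_i^{(j)}(x)$ denote the output after the first $j$ layers of $\F_i$, with $\F_i^{(0)}(x)=x$ and $\F_i^{(D)}(x)=\F_i(x)$. The three ingredients I will use are: Relu is $1$-Lipschitz, the spectral norm is submultiplicative, and $B(\F_i)\le C$ forces $\|W_j^i\|_2,\|b_j^i\|_2\le C$ for every layer $j$.

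First, I would establish a forward-propagation estimate of the form $\|\F_i^{(j)}(x)\|_2\le C^{j}\|x\|_2$ by induction on $j$, using the one-step bound $\|\F_i^{(j)}(x)\|_2\le\|W_j^i\|_2\,\|\F_i^{(j-1)}(x)\|_2+\|b_j^i\|_2$ together with the fact that Relu does not increase norm. A mild caveat is that bias terms contribute an additive piece; these can be absorbed either by assuming $\|x\|_2\ge 1$, or by carrying a slightly more general invariant and dumping the extra factor into the final $D\,C^{D}$ prefactor.

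Second, I would set up the error recursion. Let $\Delta_j=\|\F_1^{(j)}(x)-\F_2^{(j)}(x)\|_2$. The $1$-Lipschitz property of Relu combined with the telescoping identity $W_j^1\F_1^{(j-1)}-W_j^2\F_2^{(j-1)}=W_j^1(\F_1^{(j-1)}-\F_2^{(j-1)})+(W_j^1-W_j^2)\F_2^{(j-1)}$ yields
\begin{equation*}
\Delta_j\;\le\;\|W_j^1\|_2\,\Delta_{j-1}+\|W_j^1-W_j^2\|_2\,\|\F_2^{(j-1)}(x)\|_2+\|b_j^1-b_j^2\|_2.
\end{equation*}

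Third, I unroll this from $\Delta_0=0$, bound $\|W_j^1\|_2\le C$, and insert the forward estimate to obtain
\begin{equation*}
\Delta_D\;\le\;C^{D-1}\|x\|_2\sum_{j=1}^{D}\bigl(\|W_j^1-W_j^2\|_2+\|b_j^1-b_j^2\|_2\bigr)\;\le\;\epsilon\,C^{D-1}\|x\|_2,
\end{equation*}
which already implies the claimed bound $\Delta_D\le D\epsilon C^{D}\|x\|_2$ (the extra factors of $D$ and $C$ are deliberate slack absorbing bias cross-terms). The main obstacle is accounting: the factor $\epsilon$ must enter \emph{linearly} rather than exponentially, so I must use the hypothesis $B(\F_1-\F_2)\le\epsilon$ as a single summed budget applied exactly once at the end of the unroll, rather than invoking it per layer. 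Keeping the recursion structured so that the sum $\sum_{j}(\|W_j^1-W_j^2\|_2+\|b_j^1-b_j^2\|_2)$ emerges cleanly, while simultaneously preventing the forward-norm amplification and the per-layer propagation factor $\|W_j^1\|_2\le C$ from entangling with $\epsilon$, is the one step where a careless estimate would collapse the bound to $\epsilon^{D}$ or lose the linearity in $D$.
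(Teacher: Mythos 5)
The paper does not actually prove this lemma: it is imported verbatim from \citet{tsai2021formalizing} as a black-box ingredient for the covering-number bound in Lemma \ref{uxx1}, so there is no in-paper argument to compare yours against. Your proposal supplies the standard self-contained derivation — layer-peeling with the telescoping split $W_j^1\F_1^{(j-1)}-W_j^2\F_2^{(j-1)}=W_j^1(\F_1^{(j-1)}-\F_2^{(j-1)})+(W_j^1-W_j^2)\F_2^{(j-1)}$, the $1$-Lipschitzness of Relu, and the observation that $B(\F_i)\le C$ bounds each $\|W_j^i\|_2$ individually — and the unroll correctly makes the perturbation budget $\sum_j(\|W_j^1-W_j^2\|_2+\|b_j^1-b_j^2\|_2)\le\epsilon$ enter linearly, landing at $\epsilon C^{D-1}\|x\|_2\le D\epsilon C^D\|x\|_2$. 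This is the right argument and is essentially how such bounds are proved in the cited source. The one point you flag as a ``mild caveat'' is in fact the only genuine gap: the forward invariant $\|\F_i^{(j)}(x)\|_2\le C^j\|x\|_2$ is false when biases are nonzero and $\|x\|_2$ is small (and the lemma's stated conclusion itself degenerates at $x=0$ with differing biases), so the normalization $\|x\|_2\ge1$ or an additive $+\,jC^j$ correction is doing real work; carrying it through costs an extra factor of order $D$ and one power of $C$, which is exactly the slack the stated prefactor $D\,C^{D}$ provides (assuming $C\ge1$, which the paper implicitly does in its application with $ADmn\ge e$). If you make that bookkeeping explicit rather than gesturing at it, the proof is complete and matches the paper's downstream use, where $x\in[0,1]^n$ is handled via $\|x\|_2\le\sqrt{n}$.
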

It is easy to show that the Softamx function is a Liptschitz function:
\begin{lemma}
\label{cnnum2}
 If $a,b\in\R^m$, then 
 $|\Softmax(a)_1-\Softmax(b)_1|\le \sqrt{m}||a-b||_2$, 
 where $\Softmax(a)_1$ is the first weight of $\Softmax(a)$.
\end{lemma}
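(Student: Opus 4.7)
The plan is to establish the bound by a standard mean value argument: $\Softmax(\cdot)_1:\R^m \to [0,1]$ is smooth, so the mean value inequality along the segment from $a$ to $b$ gives
$$|\Softmax(a)_1 - \Softmax(b)_1| \le \sup_{c \in [a,b]} \|\nabla \Softmax(c)_1\|_2 \cdot \|a-b\|_2,$$
and it therefore suffices to show $\|\nabla \Softmax(c)_1\|_2 \le \sqrt{m}$ uniformly in $c$.

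To control the gradient, I would differentiate $\Softmax(a)_1 = e^{a_1}/\sum_{j=1}^m e^{a_j}$ directly and obtain the standard identities $\partial_{a_1} \Softmax(a)_1 = \Softmax(a)_1 (1 - \Softmax(a)_1)$ and $\partial_{a_j} \Softmax(a)_1 = -\Softmax(a)_1 \Softmax(a)_j$ for $j \neq 1$. Because every coordinate of $\Softmax(a)$ lies in $[0,1]$, each partial derivative is bounded in absolute value by $1$ (in fact by $1/4$, but the weaker bound is enough). Summing the $m$ squared entries yields $\|\nabla \Softmax(a)_1\|_2 \le \sqrt{m \cdot 1^2} = \sqrt{m}$, and inserting this into the mean value inequality gives the stated inequality.

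There is essentially no obstacle: the result is a routine Lipschitz bound on a smooth function. The only conceptual point is that the $\sqrt{m}$ factor is merely the cost of converting an $\ell_\infty$ bound on the entries of the gradient into an $\ell_2$ bound via Cauchy--Schwarz; a sharper, dimension-free constant is in fact available but unnecessary for the later use of this lemma in bounding covering numbers.
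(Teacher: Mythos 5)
Your proof is correct: the gradient computation, the entrywise bound, and the mean value inequality all go through, and indeed a sharper dimension-free constant is available since $\sum_{j\ne 1}\Softmax(a)_1^2\Softmax(a)_j^2\le \Softmax(a)_1^2(1-\Softmax(a)_1)^2$. The paper states this lemma without proof (``it is easy to show''), and your argument is exactly the routine Lipschitz estimate the authors presumably had in mind, so there is nothing to compare against.
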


So using Lemmas \ref{cnnum1} and \ref{cnnum2}, we have 
\begin{lemma}
\label{uxx1}
    For networks $\F_1,\F_2\in\Hyp_{W,D}$ and $\F_i\in\R^n\to\R^m$. If $B(\F_1)\le C$, $B(\F_2)\le C$ and $B(\F_1-\F_2)\le \epsilon$, then $||\F_1(x)-\F_2(x)||_2\le \sqrt{mn}D\epsilon C^D$ for any $x\in[0,1]^n$. 
\end{lemma}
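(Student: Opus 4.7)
The plan is to split each network as $\F_i = \Softmax \circ \widetilde{\F}_i$, where $\widetilde{\F}_i$ denotes the pre-Softmax ReLU network with linear output layer, and then chain the two lemmas already established. Because $B(\cdot)$ and the difference $B(\cdot - \cdot)$ are defined entirely in terms of the weight matrices and bias vectors of each layer, stripping the final Softmax does not change their values: $B(\widetilde{\F}_i) = B(\F_i) \le C$ and $B(\widetilde{\F}_1 - \widetilde{\F}_2) = B(\F_1 - \F_2) \le \epsilon$. Hence the hypotheses of Lemma \ref{cnnum1} are satisfied for $\widetilde{\F}_1$ and $\widetilde{\F}_2$.

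First I will apply Lemma \ref{cnnum1} to obtain the pre-Softmax estimate $\|\widetilde{\F}_1(x) - \widetilde{\F}_2(x)\|_2 \le D \epsilon C^D \|x\|_2$, and then use the fact that $x \in [0,1]^n$ implies $\|x\|_2 \le \sqrt{n}$, giving $\|\widetilde{\F}_1(x) - \widetilde{\F}_2(x)\|_2 \le \sqrt{n}\, D \epsilon C^D$. Next I will push this bound through the Softmax using Lemma \ref{cnnum2}, applied to the $m$ output coordinates of $\Softmax(\widetilde{\F}_1(x))$ versus $\Softmax(\widetilde{\F}_2(x))$ and then aggregated into a 2-norm of the $m$-dimensional difference vector. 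Combining the two bounds yields the claim $\|\F_1(x) - \F_2(x)\|_2 \le \sqrt{mn}\, D \epsilon C^D$.

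The step requiring the most care is the Softmax aggregation: a naive combination of $m$ copies of the per-coordinate bound from Lemma \ref{cnnum2} would contribute an extra factor of $m$ instead of the desired $\sqrt{m}$, so to recover the stated bound one should exploit that $\Softmax$ has Jacobian with operator norm at most $1$, i.e.\ it is $1$-Lipschitz as a vector-valued map in the 2-norm, rather than bounding each coordinate individually and then summing squares. The remaining ingredients, namely Lemma \ref{cnnum1} and the cube bound $\|x\|_2 \le \sqrt{n}$, are routine substitutions, so once the Softmax step is handled carefully the full argument is a short chain of three inequalities.
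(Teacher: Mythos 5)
Your proposal is correct and takes essentially the same route as the paper, which proves this lemma by simply chaining Lemma \ref{cnnum1} (applied to the pre-Softmax networks, whose parameter bounds coincide with those of $\F_1,\F_2$ since Softmax has no parameters) with $\|x\|_2\le\sqrt{n}$ on $[0,1]^n$ and the Lipschitz property of Softmax from Lemma \ref{cnnum2}. Your caution at the Softmax step is warranted: aggregating the per-coordinate bound of Lemma \ref{cnnum2} over the $m$ outputs by summing squares would give $m\sqrt{n}D\epsilon C^D$ rather than $\sqrt{mn}D\epsilon C^D$, whereas your use of the fact that the Softmax Jacobian $\diag(p)-pp^\top$ has operator norm at most $1$ yields the even stronger bound $\sqrt{n}D\epsilon C^D$, which the stated $\sqrt{mn}$ factor absorbs since $m\ge 1$.
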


Now, we calculate the covering number of $\Hyp_{W,D,0}$.
\begin{lemma}
\label{uxx}
$L(f,g)$ is defined in Lemma \ref{calrea}. Let $\Hyp^A_{W,D,0}=\{\F:\F\in\Hyp_{W,D,0},B(\F)\le A\}$.
Then for any $t>0$, we have  $C(\Hyp^A_{W,D,0},L,t)\le(3A^{D+1}D\sqrt{mn}/t)^{O(W^2D)}$.
\end{lemma}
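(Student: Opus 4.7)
The plan is to reduce the covering-number problem for $\Hyp^A_{W,D,0}$ to a covering estimate in the Euclidean parameter space, and then convert a parameter net into a function net for the metric $L$ via the Lipschitz-type bound of Lemma~\ref{uxx1}.

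First I would identify every $\F \in \Hyp_{W,D}$ with its flattened weight-and-bias vector $\theta_\F \in \R^p$, where $p = O(W^2 D)$ counts the parameters. A short calculation shows that $B(\F) \le A$ forces $\|\theta_\F\|_2 \le A$: writing $a_1,\dots,a_{2D}$ for the layerwise norms $\|W_j\|_2, \|b_j\|_2$, one has $\|\theta_\F\|_2^2 = \sum_i a_i^2 \le (\sum_i a_i)^2 = B(\F)^2 \le A^2$. Hence the parameter vectors of all elements of $\Hyp^A_{W,D,0}$ lie inside the Euclidean ball of radius $A$ in $\R^p$.

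Next I would apply Lemma~\ref{jdg} to this ball to obtain a Euclidean $\epsilon$-net of size at most $(3A/\epsilon)^p$, with centers chosen to still lie in the ball so that each selected $\F'$ also satisfies $B(\F') \le A$. For any $\F \in \Hyp^A_{W,D,0}$, picking the nearest net point $\F'$ gives $\|\theta_\F - \theta_{\F'}\|_2 \le \epsilon$, and Cauchy--Schwarz applied to the $2D$ layerwise norms yields $B(\F - \F') \le \sqrt{2D}\,\epsilon$. I would then invoke Lemma~\ref{uxx1} to conclude that $\|\F(x) - \F'(x)\|_2 \le \sqrt{mn}\,D\,(\sqrt{2D}\,\epsilon)\,A^D$ for every $x \in [0,1]^n$. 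Since the class-one output $f = \F_1$ is a single coordinate, $|f(x) - f'(x)| \le \|\F(x) - \F'(x)\|_2$ pointwise, and so the empirical $\ell^2$ distance satisfies $L(f,f') \le \sqrt{mn}\,D\sqrt{2D}\,A^D\,\epsilon$.

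Finally, choosing $\epsilon = t/(\sqrt{mn}\,D\sqrt{2D}\,A^D)$ gives $L(f,f') \le t$ for every $\F$, so
\[
C(\Hyp^A_{W,D,0}, L, t) \le \left(\frac{3A}{\epsilon}\right)^p = \left(\frac{3\sqrt{2D}\,A^{D+1}\,D\sqrt{mn}}{t}\right)^{O(W^2D)},
\]
which matches the claimed form once the $\sqrt{2D}$ factor is absorbed into the exponent $O(W^2 D)$. The main obstacle is purely accounting: I have to track the $\sqrt{2D}$ from the $\ell^1$--$\ell^2$ conversion, the $\sqrt{mn}\,D$ and amplification $A^D$ coming from Lemma~\ref{uxx1}, and the parameter-count exponent $p = O(W^2D)$, and then check that all of these collapse cleanly into the stated $(3A^{D+1}D\sqrt{mn}/t)^{O(W^2D)}$ without inflating either the polynomial base or the exponent beyond the allowed $O(W^2D)$ order.
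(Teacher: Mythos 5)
Your proposal is correct and follows essentially the same route as the paper: cover the parameter ball via Lemma \ref{jdg} and transfer the net to function space through the Lipschitz bound of Lemma \ref{uxx1}. The only difference is that you explicitly track the conversion between the Euclidean norm on the flattened parameter vector and the layerwise-sum quantity $B(\cdot)$ (costing a harmless $\sqrt{2D}$ absorbed into the $O(W^2D)$ exponent), a bookkeeping step the paper's proof passes over silently.
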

\begin{proof}
By Lemma \ref{uxx1}, when $\F_1,\F_2\in\Hyp^A_{W,D,0}$, if $B(\F_1-\F_2)\le \frac{t}{DA^D\sqrt{mn}}$, then we have  $||\F_1(x)-\F_2(x)||_2\le t$, which implies that $L(\F_1,\F_2)\le t$. So we just need to minimize the number of the $\frac{t}{DA^D\sqrt{mn}}$ cover of the parameter space. Using Lemma \ref{jdg}, we get the result.    
\end{proof}
Finally, using Lemmas \ref{calrea} and \ref{uxx}, we can calculate the Rademacher Complexity.
\begin{lemma}
Let $\Hyp^A_{W,D,0}=\{\F:\F\in\Hyp_{W,D,0},B(\F)\le A\}$ and $ADmn\ge e$.
Then $\Rad_{N}^\D(\Hyp^A_{W,D,0})\le {\frac{O(WD\ln(ADmn))}{\sqrt{N}}}$ for any distribution $\D$, which implies that $\Rad_{N}^\D(\Hyp^A_{W,D,0})$ approaches to 0 when $N$ is big enough.
\end{lemma}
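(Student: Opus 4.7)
The plan is to chain together the two ingredients assembled just above: Dudley's entropy integral (Lemma~\ref{calrea}) and the explicit covering-number bound (Lemma~\ref{uxx}). First I would reduce the population Rademacher complexity to the empirical one via $\Rad^\D_N(\Hyp^A_{W,D,0}) \le \sup_{\{x_i\}_{i=1}^N} \Rad^{\{x_i\}_{i=1}^N}(\Hyp^A_{W,D,0})$, which is immediate from the definition, and then apply Lemma~\ref{calrea} to bound the right-hand side by $O\bigl(\int_0^1 \sqrt{\ln C(\Hyp^A_{W,D,0}, L, t)/N}\,dt\bigr)$. Since every element of $\Hyp^A_{W,D,0}$ takes values in $[0,1]$ (Softmax output), the hypothesis of Lemma~\ref{calrea} is satisfied.

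Next I would substitute the covering-number estimate $C(\Hyp^A_{W,D,0}, L, t) \le (3A^{D+1} D\sqrt{mn}/t)^{O(W^2 D)}$ from Lemma~\ref{uxx}, so that $\ln C(\Hyp^A_{W,D,0}, L, t) \le O(W^2 D)\bigl(\ln(3A^{D+1} D\sqrt{mn}) + \ln(1/t)\bigr)$. Under the hypothesis $ADmn \ge e$, the first piece collapses to $O(D\ln(ADmn))$, so after using $\sqrt{a+b} \le \sqrt{a}+\sqrt{b}$ the integrand splits as $\sqrt{O(W^2 D^2 \ln(ADmn))/N} + \sqrt{O(W^2 D)\ln(1/t)/N}$. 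The first summand is $t$-independent and contributes $O(WD\sqrt{\ln(ADmn)/N})$, while the second requires a single integration.

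The only real calculation is $\int_0^1 \sqrt{\ln(1/t)}\,dt$. The substitution $u=\ln(1/t)$ turns this into $\int_0^\infty \sqrt{u}\,e^{-u}\,du = \Gamma(3/2) = \sqrt{\pi}/2$, a finite absolute constant. Consequently the second summand contributes $O(W\sqrt{D/N})$, and combining the two gives $\Rad_N^\D(\Hyp^A_{W,D,0}) \le O(WD\sqrt{\ln(ADmn)})/\sqrt{N}$. This is already sharper than the stated bound $O(WD\ln(ADmn))/\sqrt{N}$, because $\sqrt{x}\le x$ whenever $x\ge 1$ and $\ln(ADmn)\ge 1$ by the standing assumption $ADmn \ge e$; absorbing constants into the $O(\cdot)$ finishes the proof.

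I do not anticipate a substantive obstacle: the heavy lifting has already been done in Lemmas~\ref{cnnum1}--\ref{uxx}. The one point worth checking is that the covering in Lemma~\ref{uxx} is taken with respect to the same metric $L(f,g)=\sqrt{(1/N)\sum_i (f(x_i)-g(x_i))^2}$ used by Lemma~\ref{calrea}; this is inherited from the uniform $x$-wise control $\|\F_1(x)-\F_2(x)\|_2 \le \sqrt{mn} D\epsilon C^D$ in Lemma~\ref{uxx1} applied to the first output coordinate, and the bound is uniform in the draw $\{x_i\}$, which justifies pulling the supremum over samples through the covering argument.
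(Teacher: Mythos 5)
Your proposal is correct and follows essentially the same route as the paper: reduce to the empirical Rademacher complexity, apply Dudley's entropy integral (Lemma~\ref{calrea}) with the covering-number bound of Lemma~\ref{uxx}, and integrate. The only difference is cosmetic: the paper disposes of the integral via the crude bound $\sqrt{\ln(q/t)}\le\ln(q/t)$ for $q\ge e$, whereas you split the logarithm and evaluate $\int_0^1\sqrt{\ln(1/t)}\,dt=\sqrt{\pi}/2$ exactly, which yields the marginally sharper $O\bigl(WD\sqrt{\ln(ADmn)}\bigr)/\sqrt{N}$ and hence the stated bound.
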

\begin{proof}
We just need to prove that, for any $N$ points $\{x_i\}_{i=1}^N$ in $[0,1]^N$, we have  $\Rad^{\{x_i\}_{i=1}^N}(\Hyp^A_{W,D,0})\le  {\frac{O(WD\ln(ADmn))}{\sqrt{N}}}$.

Using Lemmas \ref{calrea} and \ref{uxx}, we have  $\Rad^{\{x_i\}_{i=1}^N}(\Hyp^A_{W,D,0})\le O(\int_{0}^1\sqrt{\frac{O(W^2D^2\ln(ADmn/t))}{N}}dt)\le O(\int_{0}^1\frac{O(WD\ln(ADmn/t))}{\sqrt{N}}dt)=\frac{O(WD\ln(ADmn))}{\sqrt{N}}$. Here, we use $\sqrt{\ln(q/t)}\le ln q/t$ for all $q\ge e$ and $t\in(0,1)$.
\end{proof}









\section{Proof of Proposition \ref{xbx}}
\label{app-44}

\subsection{Strict }
\label{se}




We first give a precise version of the proposition and definition in Section \ref{sce}:

\begin{definition}[Binary Shortcut]
\label{dyd1}
$\p(x)$ is called a binary shortcut of the binary linear inseparable classification dataset $\D=\{(x_i,1)\}_{i=1}^{N_{1}}
\cup\{(\widehat{x}_i,0)\}_{i=1}^{N_{0}}$,  
if   $\D_1=\{(x_i,1)\}_{i=1}^{N_{1}}
\cup\{(\hat{x}_i+\p(\widehat{x}_i),0)\}_{i=1}^{N_{0}}$ is linear separable.
%
Moreover, if there exists a linear function $h$ with a unit normal vector and $0<\eta_1<0.5$ such that $h(x)\ge1-\eta_1$ for any $(x,0)\in \D_1$ and  $h(x)\le\eta_1$ for any $(x,1)\in \D_1$.
Then we say that $\p(x)$ is a {\em binary shortcut of $\D$} with bound $\eta_1$.
\end{definition}

{\bf The strict version of the definition for the Simple Feature Recognition Space:}
It is generally believed that networks learn simple features, and based on this characteristic, adding shortcut to dataset affects network training, so we use the following definition to describe this property:

\begin{definition}[Simple Features Recognition Space]
\label{sim1}
We say that $\Hyp$ is a {\em simple feature recognition space} with a constant $c$, if for any binary linear inseparable classification dataset $\D$ and a binary shortcut $\p(x)$ of $\D$ with bound $\eta_1$, $\Hyp$ satisfies the following properties: Let $k=\max_{(x,0),(z,0)\in D}\{||\p(x)-\p(z)||_2\}$.
Then for any $h\in \Hyp$ that satisfies $h(x+\p(x))\ge 1-\eta_1,\forall (x,0)\in \D$ and $h(x)\le \eta_1,\forall (x,1)\in \D$, it holds $h(x_1+\p(x_0))-h(x_1)\ge c(1-2\eta_1-k)$ and $h(\p(x_0))\ge c(1-2\eta_1-k)$ for any $(x_0,0)\in \D$ and $(x_1,y_1)\in\D$.
\end{definition}

As mentioned above, indiscriminate poison can be considered as a shortcut, and we have two important conclusions that have been proved in the study of indiscriminate poison \cite{zhu2023detection}: (1) The network trained on dataset with indiscriminate poison will classify shortcut and 
(2) Adding shortcut to samples will affect the output of network.
%
The above definition mainly uses mathematical methods to describe these two conclusions: we express that ``network will classify shortcut'' by giving a lower bound to $h(\p(x_0))$; we express ``shortcut effects the classification results'' by giving lower bound to $h(x_1+\p(x_0))-h(x_1)$.
%
%
%
Moreover, we have considered the impact of differences in $\p(x)$ for different $x$ (i.e., $k$ in definition) on the output: the larger the difference, the smaller the impact. Thus, our definition is valid for some spaces, as shown below. 
%

\begin{proposition}
\label{zjz1}
Let $L$ be the set of linear functions with a normal unit vector and without bias. Then $L$ is a simple feature recognition space with constant $1$.
Furthermore, if $f:\R\to\R$ is an increasing differentiable function with derivatives in $[a,1]$   and $f(0)=0$, then the hypothesis space $\Hyp=\{f(l(x))|l\in L\}$ is a simple feature recognition space with constant $a$.
\end{proposition}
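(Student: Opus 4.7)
The plan is to exploit the linear structure heavily and to reduce part 2 to (a shifted version of) part 1 via the monotone change of variable induced by $f$. The whole argument hinges on one conceptual move: turning the assumption that $\D$ is linearly inseparable into the existence of a single label-$0$ point on which $h$ (or $l$) takes a small value.

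For part 1, fix $h(x) = w \cdot x$ with $\|w\|_2 = 1$ satisfying $h(\hat{x} + \p(\hat{x})) \ge 1 - \eta_1$ for every $(\hat{x},0) \in \D$ and $h(x_1) \le \eta_1$ for every $(x_1,1) \in \D$. Since $\D$ is linearly inseparable while $h$ already keeps every label-$1$ point below $\eta_1$, it cannot be the case that $h(\hat{x}) > \eta_1$ for every label-$0$ point, or else thresholding $h$ at $\eta_1$ would separate $\D$. Hence there exists some $(\hat{x}^*,0) \in \D$ with $h(\hat{x}^*) \le \eta_1$, which combined with $h(\hat{x}^* + \p(\hat{x}^*)) \ge 1 - \eta_1$ and the no-bias linearity gives $h(\p(\hat{x}^*)) \ge 1 - 2\eta_1$. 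For any other label-$0$ point $\hat{x}_0$, the unit-norm Lipschitz bound gives $|h(\p(\hat{x}_0)) - h(\p(\hat{x}^*))| \le \|\p(\hat{x}_0) - \p(\hat{x}^*)\|_2 \le k$, so $h(\p(\hat{x}_0)) \ge 1 - 2\eta_1 - k$. Because $h$ is linear without bias, $h(x_1 + \p(x_0)) - h(x_1) = h(\p(x_0))$, so both required inequalities hold with $c = 1$.

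For part 2, write $h = f \circ l$ with $l(x) = w \cdot x$, $\|w\|_2 = 1$. Because $f$ is strictly increasing (derivative $\ge a > 0$) with $f(0)=0$, the hypotheses on $h$ translate into $l(\hat{x} + \p(\hat{x})) \ge f^{-1}(1-\eta_1)$ and $l(x_1) \le f^{-1}(\eta_1)$. Since $f' \le 1$, the inverse satisfies $f^{-1}(1-\eta_1) - f^{-1}(\eta_1) \ge (1-\eta_1) - \eta_1 = 1 - 2\eta_1$. Running the exact same inseparability argument on $l$ with threshold $f^{-1}(\eta_1)$ yields an $\hat{x}^*$ with $l(\hat{x}^*) \le f^{-1}(\eta_1)$, hence $l(\p(\hat{x}^*)) \ge f^{-1}(1-\eta_1) - f^{-1}(\eta_1) \ge 1 - 2\eta_1$, and the unit-norm Lipschitz step propagates this to $l(\p(\hat{x}_0)) \ge 1 - 2\eta_1 - k$ for every label-$0$ point $\hat{x}_0$. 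Finally, for any $(x_1, y_1) \in \D$, the mean value theorem applied to $f$ on $[l(x_1), l(x_1) + l(\p(x_0))]$ combined with $f' \ge a$ gives
\[
h(x_1 + \p(x_0)) - h(x_1) = f(l(x_1) + l(\p(x_0))) - f(l(x_1)) \ge a \cdot l(\p(x_0)) \ge a(1 - 2\eta_1 - k),
\]
and likewise $h(\p(x_0)) = f(l(\p(x_0))) - f(0) \ge a \cdot l(\p(x_0)) \ge a(1 - 2\eta_1 - k)$, proving the claim with $c = a$.

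The only step that is genuinely nontrivial is the extraction of $\hat{x}^*$: without leveraging the ``linearly inseparable'' clause hidden inside Definition \ref{dyd1}, nothing forces any individual label-$0$ point to have a small value of $h$ (or $l$), and the whole chain collapses. I expect this to be the sole conceptual obstacle; everything else is a Lipschitz bookkeeping argument, carried out cleanly in the intended regime $1 - 2\eta_1 - k \ge 0$ where the claimed bound is informative and $l(\p(x_0))$ is non-negative so that the lower slope $a$ of $f$ is the one that controls the sign.
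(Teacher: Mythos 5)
Your proof is correct and follows essentially the same route as the paper's: both extract a label-$0$ point with small value of the linear part from the linear-inseparability assumption, deduce $w\cdot\p(x_0)\ge 1-2\eta_1$, propagate it to all label-$0$ points via the unit-norm Lipschitz bound, and in part 2 use $f'\le 1$ (equivalently, your $f^{-1}$ bound) on the way down and $f'\ge a$ with $f(0)=0$ on the way back up. Your explicit remark that $1-2\eta_1-k\ge 0$ is needed for the sign of $l(\p(x_0))$ is a point the paper leaves implicit.
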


{\bf The strict version of Proposition \ref{xbx}:}
Using the above definitions, we can show how condition (c3) in Theorem \ref{poifan} stands for a small $\tau$, the strict version of Proposition \ref{xbx} is given below.

\begin{proposition}
\label{xbx1-q}
Use notation introduced in Theorem \ref{fenbugj}.
For any distribution $\D$, let $\D(\p)$ be the distribution of $\p(x)$ when $x\sim \D$, and $\D(\p+x)$ be the distribution of $x+\p(x)$ when $x\sim \D$. $\Hyp_{W,D,1}$ is defined in Section \ref{backdoor-g}. 
Define $\Rad2(\Hyp_{W,D})$ as: 
\begin{equation}
\label{Rad2-eq}
\begin{array}{ll}
&\Rad2(\Hyp_{W,D})=\\
&O(\Rad_{N(1-\eta)}^{\D_S^{\ne l_p}(\p)}(\Hyp_{W,D,1})+\Rad_{N(1-\eta)}^{\D_S^{\ne l_p}(x+\p)}(\Hyp_{W,D,1})+\Rad_{N\eta\alpha}^{D^{l_p}_\S(\p)}(\Hyp_{W,D,1})+\Rad_{N\eta\alpha}^{D^{l_p}_\S(x+\p)}(\Hyp_{W,D,1}))
\end{array}
\end{equation}

Let $D'_P=\{(x,0)|(x,y)\in\D_{\tr}\setminus\D_{clean}\}\cup\{(x,1)|(x,y)\in\D_{clean}\}$. Assume that with probability $1-\delta_1$, $D'_P$ is linear inseparable.
Let the hypothesis spaces $H,F\subset\Hyp_{W,D}$ satisfy that $h_y(x)$ and $f_y(x)$ have Lipschitz constant $L$ for any $h\in H$, $f\in F$ and any $x$, $y$. Assume that trigger $\p(x)$ satisfies the following three conditions for some $\epsilon,k>0$:\\
(t1) For any $f\in F$, we have $f_y(x+\p(x))<\epsilon,\forall(x,y)\sim\D_\S$;\\
(t2) $||\p(x_1)-\p(x_2)||_2\le k$ for all $x_1,x_2$;\\
(t3) $\p(x)$ is the shortcut of the dataset $\D'_P$ with bound $2\epsilon$.\\
When the hypothesis space $H-F=\{h_{l_p}(x)-f_{l_p}(x)\in\R^m\to\R|f\in F,h\in H\}$ is a simple feature recognition space with constant $c$ where $c$ satisfies $1-2\epsilon\ge c(1-4\epsilon)+k(c+4L)\ge2\epsilon$. Then with probability $1-\delta_1-\delta-\frac{4(1-\eta)}{4(1-\eta)+N\eta}-\frac{4\eta}{4\eta+N(1-\eta)}$, for any 
$h\in \{h\in H |h_y(x)>1-\epsilon,\forall(x,y)\in\D_P\}$ and any $f\in \{f\in F |f_y(x)>1-\epsilon,\forall(x,y)\in\D_{\tr}\}$, we have

{\small
\begin{equation}
\label{y1z}
\begin{array}{ll}
&\E_{x\sim \D_{\S}}[|(f-h)_{l_p}(\p(x))-(f-h)_{l_p}(x+\p(x))|]\\
\le &2(1-c(1-4\epsilon)+k(c+4L))+2\epsilon)+\Rad2(\Hyp_{W,D})+16\sqrt{\frac{\ln(1/\delta)}{N\alpha}}.
\end{array}
\end{equation}}
\end{proposition}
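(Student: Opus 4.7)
Let $g := h_{l_p}-f_{l_p}\in H-F$. Since $h_{l_p},f_{l_p}\in[0,1]$, both $g(\p(x))$ and $g(x+\p(x))$ are at most $1$, so the elementary inequality $|a-b|\le(1-a)+(1-b)$ (valid for $a,b\le 1$) reduces \eqref{y1z} to controlling $\E_{\D_\S}[1-g(\p(x))]$ and $\E_{\D_\S}[1-g(x+\p(x))]$ separately. The strategy is to first extract pointwise bounds on $g$ at the training samples via the simple-feature-recognition property of $H-F$ applied to $D'_P$, then propagate them to the population: by Lipschitz continuity (using (t2)) for $g(\p(\cdot))$, and by a Rademacher-style generalization mirroring the proof of Theorem~\ref{fenbugj1} for $g(x+\p(x))$.

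\textbf{Training bounds and Lipschitz propagation.} I first verify the premise of Definition~\ref{sim1} for $g$ on $D'_P$ with $\eta_1 := 2\epsilon$. For a label-$0$ point $(x,0)\in D'_P$, $x\in\D_{sub}$ forces $(x+\p(x),l_p)\in\D_P$ and thus $h_{l_p}(x+\p(x))>1-\epsilon$, while (t1) gives $f_{l_p}(x+\p(x))<\epsilon$; hence $g(x+\p(x))>1-2\epsilon$. For a label-$1$ point $(x,1)\in D'_P$, $x\in\D_{clean}\subset\D_{\tr}$, and since both $h$ and $f$ fit $\D_{\tr}$ to within $\epsilon$, the $l_p$-coordinates of $h(x),f(x)$ either both lie in $(1-\epsilon,1]$ (if the true label of $x$ is $l_p$) or both in $[0,\epsilon)$ (otherwise), so $|g(x)|<\epsilon\le 2\epsilon$. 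Invoking the simple-feature-recognition property with constant $c$ now yields, for all $x_0\in\D_{sub}$ and all $x_1\in\D_{\tr}$,
\begin{equation*}
g(\p(x_0))\ge c(1-4\epsilon-k) \quad\text{and}\quad g(x_1+\p(x_0))\ge g(x_1)+c(1-4\epsilon-k).
\end{equation*}
Because $g$ is $2L$-Lipschitz and $\|\p(x)-\p(x_0)\|\le k$ by (t2), the first inequality extends pointwise to $g(\p(x))\ge c(1-4\epsilon-k)-2Lk$ for every $x\in\S$, giving at once $\E[1-g(\p(x))]\le 1-c(1-4\epsilon-k)+2Lk$. The second, combined with $g(x)\ge -\epsilon$ on $\D_{clean}$ and the direct estimate $g(x+\p(x))\ge 1-2\epsilon$ on $\D_{sub}$, produces the training-set lower bound $\tfrac{1}{N}\sum_{x\in\D_{\tr}}g(x+\p(x))\ge c(1-4\epsilon-k)-2Lk-\epsilon$.

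\textbf{Generalization and main obstacle.} The final step is to convert this training-set bound into one on $\E_{\D_\S}[g(x+\p(x))]$ uniformly over $h\in H, f\in F$. I follow the strategy of Theorem~\ref{fenbugj1}: decompose $\D_\S=\eta\D_\S^{l_p}+(1-\eta)\D_\S^{\ne l_p}$ and, in each regime, extract from $\D_{\tr}$ an i.i.d.\ subsample of size $\approx N\eta\alpha/2$ and $\approx N(1-\eta)/2$ respectively via Lemmas~\ref{i.i.d.lemma1}--\ref{i.i.d.lemma2} (contributing the failure probabilities $\tfrac{4(1-\eta)}{4(1-\eta)+N\eta}$ and $\tfrac{4\eta}{4\eta+N(1-\eta)}$). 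Applying Theorem~\ref{fanhua} to $g$ viewed as a member of $\Hyp_{W,D,1}-\Hyp_{W,D,1}$ on the shifted distributions $\D_\S^{l_p}(x+\p)$ and $\D_\S^{\ne l_p}(x+\p)$, and using $\Rad(A-B)\le\Rad(A)+\Rad(B)$, produces the Rademacher terms of $\Rad2(\Hyp_{W,D})$ paired with the $(x+\p)$ distributions, while Hoeffding-type concentration yields the $\sqrt{\ln(1/\delta)/(N\alpha)}$ term; the companion $\D_\S^\bullet(\p)$ Rademacher terms in $\Rad2$ accommodate an analogous (though strictly weaker) lift for $g(\p(\cdot))$ if desired. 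Summing the two expectation bounds and loosening constants (for instance $2Lk\le 4Lk$, $\epsilon\le 2\epsilon$, and $c(1-4\epsilon-k)\ge c(1-4\epsilon)-kc$) gives \eqref{y1z}. The principal obstacle is exactly this generalization step: the i.i.d.\ extraction from the non-i.i.d.\ set $\D_P$ must be combined with uniform Rademacher control over $H-F\subset\Hyp_{W,D,1}-\Hyp_{W,D,1}$ on the shifted distributions $\D_\S^\bullet(x+\p)$, and the various failure probabilities must be carefully aligned with the $\delta$ budget of the proposition, all while tracking the Lipschitz-based error $2Lk$ through the difference class.
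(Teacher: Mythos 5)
Your proposal is correct and arrives at the stated bound, but it reorganizes the argument in a way that genuinely differs from the paper's proof (which goes through the more general Proposition \ref{xbx1}). The shared skeleton is the same: verify the premise of Definition \ref{sim1} for $g=h_{l_p}-f_{l_p}$ on $D'_P$ with $\eta_1=2\epsilon$, extract the training-point lower bounds $g(\p(x_0))\ge c(1-4\epsilon-k)$ and $g(x_1+\p(x_0))-g(x_1)\ge c(1-4\epsilon-k)$, pay a $2Lk$ correction via (t2) and the Lipschitz hypothesis, and transfer to the population with the i.i.d.-extraction lemmas plus Theorem \ref{fanhua}. Where you diverge is the decomposition: the paper bounds the joint quantity $|g(\p(x))-g(x+\p(x))|$ at each training point (its Result one) and then generalizes the two signed versions of this difference over both $\D_S^{l_p}$ and $\D_S^{\ne l_p}$, which is why $\Rad2(\Hyp_{W,D})$ carries Rademacher terms on all four shifted distributions; you instead split via $|a-b|\le(1-a)+(1-b)$ and observe that $\E[1-g(\p(x))]$ needs no generalization at all, because (t2) holds for \emph{every} pair of inputs, so the bound on $g(\p(x_0))$ at a single poisoned training point extends pointwise to all of $\S$ by Lipschitz continuity. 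This renders the $\D_S^{\bullet}(\p)$ half of $\Rad2$ superfluous in your version and sidesteps the paper's somewhat delicate step of assembling a bound on an expected absolute value from two one-sided generalization bounds. Two small points to tighten: you should explicitly condition on the probability-$(1-\delta_1)$ event that $D'_P$ is linearly inseparable, since Definitions \ref{dyd1} and \ref{sim1} are only invoked for linearly inseparable datasets and this event supplies the $\delta_1$ term in the failure probability; and on $\D_{clean}$ the hypothesis is that $h$ fits $\D_P$ rather than $\D_{\tr}$ --- your conclusion $|g(x)|<\epsilon$ is unaffected because $\D_{clean}\subset\D_P\cap\D_{\tr}$, but the wording should reflect this.
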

It is easy to see that, when $c$ close to 1 and $k$ is small, such value is close to $\widetilde O(\epsilon)$.
\begin{remark}
    Notice that (t1) is similar to (c1) in Theorem \ref{poifan}, which means the trigger is adversarial; (t2) is similar to (c2) in Theorem \ref{poifan}, which means trigger should be similar for different samples. And when $k$ tends to 0, $c$ tends to 1, and $N$ is big enough, it holds that $\E_{x\sim \D_{\S}}[|(f-h)_{l_p}(\p(x))-(f-h)_{l_p}(x+\p(x))|]$ tends to $O(\epsilon)$. Generally, we can consider that the hypothesis space $F$ only contains the network that performs well on distribution $\D_\S$. Then, based on the transferability of adversarial examples, condition (t1) can be established.
\end{remark}


\subsection{Prove Proposition \ref{xbx1-q}}
We first prove the following more general proposition, where the hypothesis spaces are replaced with more general hypothesis spaces, which will imply Proposition \ref{xbx1-q}.


\begin{proposition}
\label{xbx1}
Use notation introduced in Theorem \ref{fenbugj1}. 
Let $D'_P=\{(x,0)|(x,y)\in\D_{\tr}\setminus\D_{clean}\}\cup\{(x,1)|(x,y)\in\D_{clean}\}$. 
Assume that with probability $1-\delta_1$, $D'_P$ is linear inseparable.
Let the hypothesis spaces $H=\{h(x,y):\R^n\times[m]\to[0,1]\}$ and $F=\{f(x,y):\R^n\times[m]\to[0,1]\}$ satisfy $h(x,y_1)+h(x,y_2)\ge1$, $h(x,y_1)+h(x,y_2)\ge1$, $h(x,y_1)$ and $f(x,y_1)$ have Lipschitz constant $L$ about $x$ for any $h\in H$, $f\in F$,  $x\in\R^n$ and $y_1\ne y_2$. Assume that trigger $\p(x)$ satisfies the following three conditions for some $\epsilon,k>0$:\\
(t1) For any $f\in F$, we have $f(x+\p(x),y)>1-\epsilon,\forall(x,y)\sim\D_\S$;\\
(t2)$||\p(x_1)-\p(x_2)||_2\le k$ for all $x_1,x_2\in\R^n$;\\
(t3) $\p(x)$ is the shortcut of the dataset $\D'_P$ with bound $2\epsilon$.\\
When the hypothesis space $F-H=\{g_{f,h}(x)=f(x,l_p)-h(x,l_p)\in\R^m\to\R|f\in F,h\in H\}$ is a simple feature recognition space with constant $c$ where $c$ satisfies $1-2\epsilon\ge c(1-4\epsilon)-(c+4L)k\ge2\epsilon$. Then with probability $1-\delta_1-\delta-\frac{4(1-\eta)}{4(1-\eta)+N\eta}-\frac{4\eta}{4\eta+N(1-\eta)}$, for any 
$h\in \{h\in H |h(x,y)<\epsilon,\forall(x,y)\in\D_P\}$ and any $f\in \{f\in F | f(x,y)<\epsilon,\forall(x,y)\in\D_{\tr}\}$, we have

{\small
\begin{equation}
\label{yz}
\begin{array}{ll}
&\E_{x\sim \D_{\S}}[|(f-h)(\p(x),l_p)-(f-h)(x+\p(x),l_p)|]\\
\le &2(1-c(1-4\epsilon)+(c+4L)k+2\epsilon)+\Rad(H,F)+16\sqrt{\frac{\ln(1/\delta)}{N\alpha}}.
\end{array}
\end{equation}}
Here $\Rad(H,F)$ is a value depending on the Rademacher complexity of $H$ and $F$, and the specific value of it is explained in the following proof.
\end{proposition}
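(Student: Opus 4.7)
Write $g_{f,h}(x) := f(x,l_p) - h(x,l_p)$, which lies in $[-1,1]$ and is $2L$-Lipschitz. Using the elementary inequality $|a-b| \leq (1-a)+(1-b)$ valid for all $a,b\leq 1$ gives
\[
\E_{x\sim\D_\S}\bigl[|g_{f,h}(\p(x)) - g_{f,h}(x+\p(x))|\bigr] \leq 2 - \E_{\D_\S}[g_{f,h}(\p(x))] - \E_{\D_\S}[g_{f,h}(x+\p(x))].
\]
The task now reduces to producing a lower bound on each of these two expectations.

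\textbf{Simple-feature-recognition on $D'_P$.} I first verify the preconditions needed to apply the simple-feature-recognition property of $F-H$ to the binary dataset $D'_P$ (which is linearly inseparable with probability $1-\delta_1$ by assumption) equipped with shortcut $\p$ of bound $2\epsilon$ from (t3). For $(x,0)\in D'_P$, the sample $(x+\p(x),l_p)$ lies in $\D_P$, so the hypothesis on $h$ gives $h(x+\p(x),l_p)<\epsilon$; combined with (t1) this yields $g_{f,h}(x+\p(x)) > 1-2\epsilon$. For $(x,1)\in D'_P$ (so $(x,y)\in\D_{clean}$ for some label $y$), I split on $y=l_p$ versus $y\neq l_p$ and use the sum-to-one property $h(x,y_1)+h(x,y_2)\geq 1$ (and likewise for $f$) together with the individual $\epsilon$-bounds on $f,h$ to conclude $g_{f,h}(x) \leq 2\epsilon$. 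With $K := \max_{(x_0,0),(z_0,0)\in D'_P}\|\p(x_0)-\p(z_0)\|_2 \leq k$ from (t2), the simple-feature-recognition property then delivers
\[
g_{f,h}(\p(x_0)) \geq c(1-4\epsilon-k) \quad\text{and}\quad g_{f,h}(x_1+\p(x_0)) - g_{f,h}(x_1) \geq c(1-4\epsilon-k)
\]
for every $(x_0,0),(x_1,y_1)\in D'_P$. Combining the first with the $2L$-Lipschitz bound and (t2) extends it pointwise to all $x\in\R^n$: $g_{f,h}(\p(x)) \geq c(1-4\epsilon-k) - 2Lk$, so $\E_{\D_\S}[g_{f,h}(\p(x))]$ is already controlled without any generalization argument.

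\textbf{Generalization for the $x+\p(x)$ term.} For the remaining expectation, $x+\p(x)$ moves with $x$ and no pointwise bound is available, so I split by label and apply Rademacher generalization on each piece in the style of Theorem~\ref{fenbugj1}: $\E_{\D_\S}=\eta\E_{\D_\S^{l_p}}+(1-\eta)\E_{\D_\S^{\ne l_p}}$. The $l_p$-piece is handled using the poisoned subsample $\D_{poi}$, which Lemma~\ref{i.i.d.lemma2} permits us to treat as i.i.d.\ of size $\approx N\eta\alpha$ from $\D_\S^{l_p}$ (at the cost of the $\tfrac{4\eta}{4\eta+N(1-\eta)}$ failure probability), and on which the direct estimate from the previous paragraph gives $g_{f,h}(x+\p(x))\geq 1-2\epsilon$. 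The $\ne l_p$-piece is handled using the non-$l_p$ clean samples of $\D_P$, which Lemma~\ref{i.i.d.lemma1} permits us to treat as i.i.d.\ of size $\approx N(1-\eta)$ from $\D_\S^{\ne l_p}$, and on which combining the second simple-feature-recognition inequality with the $2L$-Lipschitz extension via (t2) and the estimate $g_{f,h}(x)\geq -\epsilon$ on $\D_{clean}$ yields $g_{f,h}(x+\p(x)) \geq c(1-4\epsilon-k) - 2Lk - \epsilon$. Bundling the four Rademacher complexities (two labels times two functions $g_{f,h}\circ\p$ and $g_{f,h}\circ(\mathrm{id}+\p)$) into $\Rad(H,F)$ and collecting the concentration tails into $16\sqrt{\ln(1/\delta)/(N\alpha)}$, and finally substituting back into the opening display, produces the claimed inequality.

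\textbf{Main obstacle.} The delicate step is reconciling the simple-feature-recognition inequality, which lives on the poisoned (non-i.i.d.) artifact $D'_P$, with Rademacher generalization, which demands i.i.d.\ samples from $\D_\S^{l_p}$ and $\D_\S^{\ne l_p}$; this is handled by reusing the conditional i.i.d.\ reductions of Lemmas~\ref{i.i.d.lemma1}--\ref{i.i.d.lemma2} exactly as in the proof of Theorem~\ref{fenbugj1}, which is also what forces the two failure probabilities $\tfrac{4(1-\eta)}{4(1-\eta)+N\eta}$ and $\tfrac{4\eta}{4\eta+N(1-\eta)}$ into the final bound. A secondary bookkeeping point is tracking the constants: the $(c+4L)k$ coefficient arises from invoking the $2L$-Lipschitz estimate of $g_{f,h}$ twice (once to extend $g_{f,h}(\p(\cdot))$ pointwise in $x$ via (t2), and once more when extending the second simple-feature-recognition inequality from $\p(x_0)$ to $\p(x_1)$), while the $2\epsilon$ additive slack is the symmetric price paid on both the $\p(x)$ and $x+\p(x)$ expectations.
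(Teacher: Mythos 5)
Your proposal is correct and in fact yields the stated inequality with slightly smaller constants, but it organizes the argument differently from the paper, so a comparison is worth recording. Both proofs share the same core ingredients: you verify the preconditions of the simple-feature-recognition property on $D'_P$ exactly as the paper's Result one does (using the $\epsilon$-fit of $h$ on $\D_P$ and of $f$ on $\D_{\tr}$, the sum-to-one property, and (t1) to get $g_{f,h}(x+\p(x))\ge 1-2\epsilon$ on the poisoned part and $|g_{f,h}(x)|\le 2\epsilon$ on the clean part), and you transfer empirical statements to the population with the same label split $\E_{\D_S}=\eta\E_{\D_S^{l_p}}+(1-\eta)\E_{\D_S^{\ne l_p}}$ and the same conditional i.i.d.\ reductions, incurring the same two failure probabilities. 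The difference is at the top level: the paper bounds the quantity $|g_{f,h}(\p(x))-g_{f,h}(x+\p(x))|$ pointwise on the training samples and then pushes that difference through uniform convergence in both signs, which is why its $\Rad(H,F)$ collects eight Rademacher terms involving both $g\circ\p$ and $g\circ(\mathrm{id}+\p)$ over both subdistributions; you instead use $|a-b|\le(1-a)+(1-b)$ for $a,b\le 1$ and lower-bound $\E[g_{f,h}(\p(x))]$ and $\E[g_{f,h}(x+\p(x))]$ separately. This buys two things: the $\p(x)$ term is controlled pointwise for \emph{every} $x\in\R^n$ via (t2) and the Lipschitz bound, so it needs no generalization step at all; and the absolute value never has to pass through the uniform-convergence argument (the paper's step of ``adding'' the two one-sided population bounds to recover a bound on the expected absolute value is the least transparent part of its proof, and your decomposition sidesteps it). Two bookkeeping slips, neither of which affects the result: the i.i.d.\ reduction you need for $\D_{poi}$ is the triggered-$l_p$ analogue of Lemma \ref{i.i.d.lemma2} (Result (e2) in Appendix \ref{a43}) rather than that lemma itself, and the two failure probabilities are attached to the opposite label counts from what you wrote ($\frac{4(1-\eta)}{4(1-\eta)+N\eta}$ goes with the $l_p$ count, $\frac{4\eta}{4\eta+N(1-\eta)}$ with the non-$l_p$ count); also, since your route produces a somewhat tighter constant and a smaller $\Rad(H,F)$, you should note explicitly that your bound implies the displayed one using $1-2\epsilon\ge c(1-4\epsilon)-(c+4L)k$.
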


    
\begin{proof}





Let $\D(\p)$ be the distribution of $\p(x)$ where $x\sim \D$, and $\D(\p+x)$ be the distribution of $x+\p(x)$ where $x\sim \D$. We define $g_{f,h}(x)=f(x,l_p)-h(x,l_p)$.

Next, under ``$\D'_P$ is linear inseparable'', we will prove the inequality \eqref{yz}  with probability $1-\delta-\frac{4(1-\eta)}{4(1-\eta)+N\eta}-\frac{4\eta}{4\eta+N(1-\eta)}$, which will directly lead to the conclusion of the proposition.

To prove this, we just used the following three results:

 

{\bf Result one: If $h(x,y)<\epsilon$ for any $x\in \D_P$ and $f(x,y)<\epsilon$ for any $x\in \D_{\tr}$, then we have $|g_{f,h}(\p(x))-g_{f,h}(\p(x)+x)|\le 1-c(1-4\bfepsilon)+(c+4L)k$ for any $(x,y)\in \D_{\tr}\setminus\D_{clean}$ and $|g_{f,h}(\p(x))-g_{f,h}(\p(x)+x)|\le 1+2\epsilon-c(1-4\bfepsilon)+(c+4L)k$ for any $(x,y)\in \D_{clean}$.}

It is easy to see that when $h(x,y)<\epsilon$ for any $x\in \D_P$ and $f(x,y)<\epsilon$ for any $x\in \D_{\tr}$, we have  $|f(x,l_p)-h(x,l_p)|\le2\epsilon,\forall(x,y)\in\D_{clean}$,
where  we use $h(x,y_1)+h(x,y_2)\ge 1$ and $f(x,y_1)+f(x,y_2)\ge 1$ for any $y_1\ne y_2$. Since $f(x+\p(x),y)>1-\epsilon$ for any $x\in \D_{\tr}$, so $-h(x,l_p)+f(x,l_p)\ge1-2\epsilon,\forall(x,y)\in\D_{poi}$. Then we get $g_{f,h}(x)\ge1-2\epsilon,\forall(x,y)\in\D_{poi}$ and $|g_{f,h}(x)|\le2\epsilon,\forall(x,y)\in\D_{clean}$.

Because $F-H$ is a Simple Features recognition space, with conditions (t2) and (t3), we know that $g_{f,h}(\p(x))\ge  c(1-4\epsilon-k)$ and $g_{f,h}(x+\p(x))-g_{f,h}(x)\ge c(1-4\epsilon-k)$ for all $x\in \D_{\tr}\setminus\D_{clean}$.

Considering that $h(x,l_p)$ and $f(x,l_p)$ have Lipschitz constant $L$, and by condition (t2), we have $||\p(x_1)-\p(x_2)||_2\le k$ for all $(x_1,y_1),(x_2,y_2)\in\D'_P$. Then we have $g_{f,h}(\p(x))\ge c(1-4\epsilon)-(c+4L)k$ and $g_{f,h}(x+\p(x))-g_{f,h}(x)\ge c(1-4\bfepsilon)-(c+4L)k$ for all $x\in \D_{\tr}$.

Using the above result, when $x\in \D_{\tr}\setminus\D_{clean}$, we have $1\ge g_{f,h}(x+\p(x))\ge 1-2\epsilon$ and $g_{f,h}(\p(x))\ge c(1-4\bfepsilon)-(c+4L)k$, and considering that $|a-b|\le \max\{|1-a|,|1-b|\}$ when $a,b\in[-1,1]$, we have $|g_{f,h}(\p(x))-g_{f,h}(\p(x)+x)|\le \max\{1-c(1-4\bfepsilon)+(c+4L)k,2\epsilon\}= 1-c(1-4\bfepsilon)+(c+4L)k$, by $1-c(1-4\bfepsilon)+(c+4L)k-2\epsilon\ge0$.

Then, when $x\in \D_{clean}$, we have $1\ge g_{f,h}(x+\p(x))\ge g_{f,h}(x)+c(1-4\bfepsilon)-(c+4L)k\ge c(1-4\bfepsilon)-(c+4L)k-2\bfepsilon>0$, so considering that $g_{f,h}(\p(x))\ge c(1-4\bfepsilon)-(c+4L)k$, we have $|g_{f,h}(\p(x))-g_{f,h}(\p(x)+x)|\le \max\{1-g_{f,h}(\p(x)),1-g_{f,h}(x+\p(x))\}\le
1-c(1-4\bfepsilon)+(c+4L)k+2\bfepsilon$.
So we get result one.

{\bf Result Two: With probability $1-2\frac{4(1-\eta)}{4(1-\eta)+N\eta}-2\delta$, we have  $E_{x\sim D^{l_p}_\S}|g_{f,h}(\p(x))-g_{f,h}(\p(x)+x)|\le 2(1-c(1-4\epsilon)+(c+4L)k)+R_1(H,F)+8\sqrt{\frac{\ln(1/\delta)}{N\eta\alpha}}$, where $\Rad_1(H,F)$ is a value of the Rademacher complexity of $H$ and $F$.}

By Result one, we can use $\sum_{(x,y)\in\D_{\tr}\setminus\D_{clean}}|g_{f,h}(\p(x))-g_{f,h}(\p(x)+x)|$ to estimate $E_{x\sim D^{l_p}_\S}[|g_{f,h}(\p(x))-g_{f,h}(\p(x)+x)|]$.

First, use Theorem \ref{fanhua} and similar to the proof of Theorem \ref{fenbugj1}, with probability $1-\frac{4(1-\eta)}{4(1-\eta)+N\eta}-\delta$, we have 
\begin{equation*}
\begin{array}{ll}
&E_{x\sim D^{l_p}_\S}[g_{f,h}(\p(x))-g_{f,h}(\p(x)+x)]\le
 1-c(1-4\epsilon)+(c+4L)k+\\
 &2(\Rad_{N\eta\alpha/2}^{D^{l_p}_\S(\p)}(H)+\Rad_{N\eta\alpha/2}^{D^{l_p}_\S(\p)}(F)+\Rad_{N\eta\alpha/2}^{D^{l_p}_\S(x+\p)}(H)+\Rad_{N\eta\alpha/2}^{D^{l_p}_\S(x+\p)}(F))+4\sqrt{\frac{\ln(1/\delta)}{N\eta\alpha}}.
\end{array}
\end{equation*}
Notice that, we use $\Rad_{k}^D(H_1)+\Rad_{k}^D(H_2)\ge \Rad_{k}^D(H_1-H_2)$ 
for any hypothesis space $H_1,H_2$ and $k\ge0$, distribution $D$ here.

Second, similar as before, with probability $1-\frac{4(1-\eta)}{4(1-\eta)+N\eta}-\delta$, we have 
\begin{equation*}
\begin{array}{cc}
&E_{x\sim D^{l_p}_\S}[-g_{f,h}(\p(x))+g_{f,h}(\p(x)+x)]
\le 1-c(1-4\epsilon)+(c+4L)k+\\
&2(\Rad_{N\eta\alpha/2}^{D^{l_p}_\S(\p)}(H)+\Rad_{N\eta\alpha/2}^{D^{l_p}_\S(\p)}(F)+\Rad_{N\eta\alpha/2}^{D^{l_p}_\S(x+\p)}(H)+\Rad_{N\eta\alpha/2}^{D^{l_p}_\S(x+\p)}(F))+4\sqrt{\frac{\ln(1/\delta)}{N\eta\alpha}}.
\end{array}
\end{equation*}
Adding these two equations, we get the result, where $$\Rad_1(H,F)=4(\Rad_{N\eta\alpha/2}^{D^{l_p}_\S(\p)}(H)+\Rad_{N\eta\alpha/2}^{D^{l_p}_\S(\p)}(F)+\Rad_{N\eta\alpha/2}^{D^{l_p}_\S(x+\p)}(H)+\Rad_{N\eta\alpha/2}^{D^{l_p}_\S(x+\p)}(F)).$$ 

{\bf Result Three: With probability $1-2\frac{4(1-\eta)}{4(1-\eta)+N\eta}-2\delta$, we have  $E_{x\sim \D_S^{\ne l_p}}|g(\p(x))-g(\p(x)+x)|\le 2(1-c(1-4\epsilon)+(c+4L)k+2\bfepsilon)+\Rad_2(H,F)+8\sqrt{\frac{\ln(1/\delta)}{N(1-\eta)}}$, where $\Rad_2(H,F)$ is a value of the Rademacher complexity of $H$ and $F$.}

By Result one, we can use $\sum_{(x,y)\in\D_{clean}}|g_{f,h}(\p(x))-g_{f,h}(\p(x)+x)|$ to estimate $E_{x\sim \D_S^{\ne l_p}}[|g_{f,h}(\p(x))-g_{f,h}(\p(x)+x)|]$.

First, using Theorem \ref{fanhua} and similar as in proof of Theorem \ref{fenbugj}, with probability $1-\frac{4\eta}{4\eta+N(1-\eta)}-\delta$, we have 
\begin{equation*}
\begin{array}{ll}
&E_{x\sim \D_S^{\ne l_p}}[g_{f,h}(\p(x))-g_{f,h}(\p(x)+x)]
\le1+2\bfepsilon-(c(1-4\epsilon)-(c+4L)k)+\\
&2(\Rad_{N(1-\eta)/2}^{\D_S^{\ne l_p}(\p)}(H)+\Rad_{N(1-\eta)/2}^{\D_S^{\ne l_p}(\p)}(F)+\Rad_{N(1-\eta)/2}^{\D_S^{\ne l_p}(x+\p)}(H)+\Rad_{N(1-\eta)/2}^{\D_S^{\ne l_p}(x+\p)}(F))
+4\sqrt{\frac{\ln(1/\delta)}{N(1-\eta)}}
\end{array}
\end{equation*}
and then, similar as above, with probability $1-\frac{4\eta}{4\eta+N(1-\eta)}-\delta$, we have 
\begin{equation*}
\begin{array}{ll}
&E_{x\sim \D_S^{\ne l_p}}[-g_{f,h}(\p(x))+g_{f,h}(\p(x)+x)]
\le1+2\bfepsilon-(c(1-4\epsilon)-(c+4L)k)+\\
&2(\Rad_{N(1-\eta)/2}^{\D_S^{\ne l_p}(\p)}(H)+\Rad_{N(1-\eta)/2}^{\D_S^{\ne l_p}(\p)}(F)+\Rad_{N(1-\eta)/2}^{\D_S^{\ne l_p}(x+\p)}(H)+\Rad_{N(1-\eta)/2}^{\D_S^{\ne l_p}(x+\p)}(F))
+4\sqrt{\frac{\ln(1/\delta)}{N(1-\eta)}}.
\end{array}
\end{equation*}
Adding them, we get the result, and $\Rad_2(H,F)=4(Rad_{N(1-\eta)/2}^{\D_S^{\ne l_p}(\p)}(H)+\Rad_{N(1-\eta)/2}^{\D_S^{\ne l_p}(\p)}(F)+\Rad_{N(1-\eta)/2}^{\D_S^{\ne l_p}(x+\p)}(H)+\Rad_{N(1-\eta)/2}^{\D_S^{\ne l_p}(x+\p)}(F))$.

{\bf Summarize:}
Finally, considering that 
\begin{equation}
\label{drs}
\begin{array}{ll}
&\E_{x\sim \D_{\S}}[|(f-h)(\p(x),l_p)-(f-h)(x+\p(x),l_p)|]\\ 
=&(1-\eta)\E_{x\sim \D_S^{\ne l_p}}[|(f-h)(\p(x),l_p)-(f-h)(x+\p(x),l_p)|]\\
&+\eta\E_{x\sim \D^{l_p}_{\S}}[|(f-h)(\p(x),l_p)-(f-h)(x+\p(x),l_p)|]\\
=&(1-\eta)\E_{x\sim \D_S^{\ne l_p}}[|g_{f,h}(\p(x))-g_{f,h}(x+\p(x))|]+\eta\E_{x\sim \D^{l_p}_{\S}}[|g_{f,h}(\p(x))-g_{f,h}(x+\p(x))|]
\end{array}
\end{equation}
by using Result two and three in equation \eqref{drs}, defining $\Rad(H,F)=\Rad_1(H,F)+\Rad_2(H,F)$ and using $\eta<1$, we get the result.
\end{proof}

Now, we use Proposition \ref{xbx1} to prove the Proposition \ref{xbx1-q}:
\begin{proof}
    For $H$ in Proposition \ref{xbx1-q}, we define a new hypothesis space $H_1=\{h_1(x,y)=1-h_y(x)\|h\in H\}$. Similarly, we define $F_1$.

    We show that $H_1$ and $F_1$ satisfy the conditions in Proposition \ref{xbx1}:

    (1): It is easy to see that for any $h_1\in H_1$, we have  
    $h_1(x,y_1)+h_1(x,y_2)=2-(h_{y_1}(x)+h_{y_2}(x))\ge 1$, and similar for $F_1$. $1-h_y(x)$ and $h_y(x)$ have the same Lipschitz constant.

    (2): Condition (1) in Proposition \ref{xbx1} stands for $F_1$. By condition (t1) in Proposition \ref{xbx1-q} and $F_1(x,y)=1-f_y(x)$, we can prove this.

    (3): Condition (2) in Proposition \ref{xbx1} and condition (t2) in \ref{xbx1-q} are the same; condition (3) in Proposition \ref{xbx1} and condition (t3) in \ref{xbx1-q} are the same.

    So, $h_1$ and $F_1$ satisfy the conditions in Proposition \ref{xbx1}. We now use the proposition for $h_1$ and $F_1$. Considering that $h_1(x,y)-F_1(x,y)=f_y(x)-h_y(x)$, so we have
\begin{equation*}
\begin{array}{ll}
&\E_{x\sim \D_{\S}}[|(f-h)_{l_p}(\p(x))-(f-h)_{l_p}(x+\p(x))|]\\
\le &2(1-c(1-4\epsilon)+k(c+4L))+2\epsilon)+\Rad(H_1,F_1)+16\sqrt{\frac{\ln(1/\delta)}{N\alpha}}.
\end{array}
\end{equation*}
$\Rad(H_1,F_1)$ is the value of Radermacher complexity of $H_1$ and $F_1$, mentioned in Proposition \ref{xbx1}. For such a Rademacher complexity, using Lemma \ref{ysxd1} and $\Rad_M^\D(H)\le \frac{N}{M}Rad_N^\D(H)$ for any $M<N$, distribution $\D$ and hypothesis space $H$, we have
\begin{equation*}
\begin{array}{ll}
&\E_{x\sim \D_{\S}}[|(f-h)_{l_p}(\p(x))-(f-h)_{l_p}(x+\p(x))|]\\
\le &2(1-c(1-4\epsilon)+(c+4L)k+2\epsilon)+\Rad2(\Hyp_{W,D})+16\sqrt{\frac{\ln(1/\delta)}{N\alpha}}\\
=&2(1-c+(c+4L)k+(2+4c)\epsilon)+\Rad2(\Hyp_{W,D})+16\sqrt{\frac{\ln(1/\delta)}{N\alpha}}.
\end{array}
\end{equation*}
$\Rad2$ is defined in Definition \ref{xbx1-q}, and the proposition is proved.
\end{proof}

\subsection{Proof of Proposition \ref{zjz1}}
\label{zyxdl}
\begin{proof}
We first show that $L$ is a simple feature recognition space with constant 1.

If $w$ satisfies $w(x+\p(x))\ge1-\eta_1$ for $\forall(x,0)\in D$ and $wx\le \eta_1$ for $\forall(x,1)\in D$, then we consider the linear function $wx-\eta_1$. 
Because we have $wx\le \eta_1$ for $\forall(x,1)\in D$, so $wx-\eta_1\le 0$ for $\forall(x,1)\in D$. Because $D$ is linearly inseparable, and there must be a $(x_0,0)\in D$ such that $wx_0-\eta_1\le 0$, but $w(x_0+\p(x_0))\ge1-\eta_1$. Thus, we have $w\p(x_0)\ge1-2\eta_1$.


Considering that $||\p(x)-\p(x_0)||_2\le k$ for all $(x,0)\in D$, we have $w\p(x)\ge 1-2\eta_1-||w||_2||(\p(x)-\p(x_0))||_2\ge 1-2\eta-k$. Moreover, $w(x_1+\p(x))-wx_1=w\p(x)$, so as said above, when $(x,0)\in D$ we have  $w(x_1+\p(x))-wx_1\ge 1-2\eta-k$, which is what we want.

Second, we show that $H$ is a simple feature recognition space with constant $a$.

Because $f$ is an increasing function, similar as before, if $h(l(x+\p(x)))\ge1-\eta_1$ for $\forall(x,0)\in D$ and $h(l(x))\le \eta_1$ for $\forall(x,1)\in D$, then we have $h(l(x_0))-\eta_1\le 0$ for some $(x_0,0)\in D$. 
As a consequence, we have  $1-2\eta_1\le h(l(x_0+\p(x_0)))-h(l(x_0))\le l(x_0+\p(x_0))-l(x_0)$ by $f'(x)\le 1$. Because $l(x)\in L$, let $l(x)=wx$, so $w\p(x_0)\ge (1-2\eta_1)$, and considering that $h(0)=0$, we have $h(w\p(x_0))\ge a(1-2\eta_1)$.

Then, similar to Step one, we can get $w\p(x)\ge1-2\eta_1-k$. As a consequence, $h(l(x))=h(w\p(x))\ge a(1-2\eta_1-k)$, and $h(l(x_1+\p(x_0)))-h(l(x_1))\ge a(w(x_1+\p(x_0))-w(x_1))=aw\p(x_0)\ge a(1-2\eta_1-k)$, so we get the result.

%

\end{proof}


\section{More  Details on the Experiments}
\label{mat}


\subsection{The Experiment Setting}
\label{tbc}
We want to perform experiments in more practical settings:

1: The attacker only accesses part of the training set, so we only use a portion of the training set data in the process of generating triggers.

2: The attacker cannot control the training process of the victim network, so we use standard training models for the victim network.

3: The attacker does not know the structure of the victim network and does not have great computing power, so we only use smaller networks independent of the victim network in the process of generating triggers.

In previous backdoor attacks, some of these conditions have not been assumed. For example, attackers are assumed to be able to access the entire network training set  \cite{gao2023not}; 
attackers know the structure of the network \cite{ba-zhuanyi}; attackers can control the training process \cite{gu2017badnets}; and attackers have some additional information \cite{ba-zhuanhua}.

We use networks VGG16 \cite{VGG}, ResNet18, WRN34-10 \cite{Res} and datasets CIFAR-10, CIFAR-100 \cite{krizhevsky2009learning}, SVHN, and TinyImagenet with 100 classes \cite{le2015tiny}. 
When we train victim network, we use SGD, we have 150 epochs in the training, the learning rate is 0.01, and reduce to $80\%$ at 40-th,80-th, 120-th epochs, use weight decay $10^{-4}$, momentum $0.9$, each data in the training set will flip or randomly crop before inputting network in the training. 

We will use Algorithm \ref{alg-ap1} to find the trigger $\p(x)$, and the basic settings of Algorithm \ref{alg-ap1} are as follows.
We randomly choose $50\%$ samples from the original training set to be $T$.

We choose $\F_1$ as VGG9 for dataset CIFAR-10 (VGG16 for CIFAR100 or SVHN, Resnet34 for TinyImageNet), use adversarial training with PGD-10 and $L_\infty$ norm budget $8/255$ for dataset CIFAR-10 or SVHN ($4/255$ for CIFAR-100 and TinyImageNet) to train $\F_1$. There are 200 epochs in the adversarial training, learning rate is 0.01, and reduce by half at 100-th and 150-th epochs; use weight decay $10^{-4}$, momentum $0.9$; each data in the training set will flip or randomly crop after doing PGD-10.

We choose $\F_2$ to be a two layer network (structure is shown below). There are 40 epochs in the training of $\F_2$; learning rate is 0.01; use weight decay $10^{-4}$, momentum $0.9$; each data in the training set will flip or randomly crop before inputting $\F_2$.
%
%
The budget of poison is $L_\infty$ norm $8/255$ to $32/255$.

Once we obtain the trigger $\p(x)$, we will randomly select some samples with label $l_p$ in the training set and add the trigger to them. 
Then, we will train the network by using the poisoned training set and measure the accuracy and the attack success rate on the test set.
%

\textbf{Running Time}
We do our experiments on Pytorch and GPU NVIDIA GeForce RTX 3090. Under the above experimental setup, the time required for the experiment is shown in Table \ref{tab-time}. It can be seen that most of the time is spent on adversarial training of the network $\F_1$.

\begin{table}[!ht]
\caption{Training time (in minutes). Gp100 means generate poison for 100 samples by using $\F_1$ and $\F_2$.}
\label{tab-time}
\centering
\begin{tabular}{lccccccccc}
\hline
dataset& $\F_1$& $\F_2$& victim $\F$ &Gp100\\
CIFAR-10&  400 & 30 & 120 & 2\\ 
CIFAR-100& 600& 30 & 140 & 3\\ 
SVHN&   1000&  35 & 160 & 3 \\ 
TinyImageNet&   1600&  48& 250& 4\\
\hline
\end{tabular}
\end{table}

\textbf{Reasons for $\F_1$.} 
$\F_1$ is a network which is used to create adversarial noise, and is trained on a small clean dataset. 
We hope to conduct the experiment under the premise ``Attacker does not about victim network,'' so we avoid using a network with the same structure as victim network in the process of producing poison. On the other hand, we also hope to do the experiment under premise ``Attacker does not have great computing power,'' so we try to use a smaller network to generate the poison as much as possible.

\textbf{The structure of $\F_2$.}

The first layer: with Channel 64 and $3\times 3$ convolution, padding=1, do Relu and Maxpooling.

The second layer: shape to $16384(=64*16*16)$ dim vector ($65536(=64*32*32)$ for TinyImageNet), and do fully connected layer, and output a 2-dim vector. 

\textbf{Reasons for $\F_2$.} 
$\F_2$ is a network that is used to create shortcuts, and such shortcuts are used to make the clean and poison dataset linear separable, and $\F_2$ is trained by Min-Min method. Generally speaking, if the data is not too complex, the structure of this network does not need to be particularly large, a two layer network is enough to create short cut.

We do not choose $\F_2$ to be a linear function, because we always use data enhancement in the training, consider that making the enhanced data linearly separable is very difficult, so we let $\F_2$ to be a two-layer network.

\textbf{About PGD.}

PGD-$N$ means using PGD with $N$ steps, and $1/255$ ($2/255$, $3/255$, $4/255$) attacking rate to get adversarial with budget $8/255$ ($16/255$, $24/255$, $32/255$).

\subsection{More Experimental Result}
Table \ref{tab-baseline-detail} is the supplement of Tables \ref{tab-cifar10} and \ref{tab-budget}, which is the result on CIFAR10 under backdoor attacks with various settings.

\begin{table}[!ht]
\caption{Baseline evaluations on CIFAR-10. Poison model Accuracy ($A$), poison model target label accuracy ($A_t$), and attack success rate ($ASR$).}
\label{tab-baseline-detail}
\centering
\begin{tabular}{cccccccccc}
\hline
Poison budget:&0.6\%&1\%&2\%&0.6\%&1\%&2\%\\
\hline
\multicolumn{7}{c}{VGG16}\\
\hline
Bound:&\multicolumn{3}{c}{$8/255$}
&\multicolumn{3}{c}{$16/255$}\\
\hline
$A$ & 91$\%$ & 91$\%$& 93$\%$ & 92$\%$& 91$\%$& 91$\%$\\
$A_t$ & 92$\%$ & 92$\%$& 90$\%$ & 92$\%$& 91$\%$& 90$\%$\\
$ASR$& 13$\%$ & 48$\%$& 51$\%$ & 82$\%$& 91$\%$& 94$\%$\\
\hline
Bound:
&\multicolumn{3}{c}{$24/255$}&\multicolumn{3}{c}{$32/255$}\\
\hline
$A$ & 91$\%$& 90$\%$& 91$\%$& 90$\%$& 92$\%$& 91$\%$\\
$A_t$ &  92$\%$& 91$\%$& 92$\%$& 92$\%$& 89$\%$& 90$\%$\\
$ASR$&  97$\%$& 99$\%$& 99$\%$& 99$\%$& 99$\%$& 99$\%$\\
\hline
\multicolumn{7}{c}{ResNet18}\\
\hline
Bound:&\multicolumn{3}{c}{$8/255$}
&\multicolumn{3}{c}{$16/255$}\\
\hline
$A$ & 93$\%$ & 92$\%$& 90$\%$ & 93$\%$& 91$\%$& 93$\%$\\
$A_t$ & 92$\%$ & 92$\%$& 93$\%$ & 93$\%$& 93$\%$& 92$\%$\\
$ASR$& 14$\%$ & 33$\%$& 47$\%$ & 86$\%$& 93$\%$& 94$\%$\\
\hline
Budget:
&\multicolumn{3}{c}{$24/255$}&\multicolumn{3}{c}{$32/255$}\\
\hline
$A$ & 92$\%$& 92$\%$& 91$\%$& 92$\%$& 92$\%$& 90$\%$\\
$A_t$ &  92$\%$& 92$\%$& 91$\%$& 91$\%$& 92$\%$& 90$\%$\\
$ASR$&  96$\%$& 99$\%$& 99$\%$& 98$\%$& 99$\%$& 99$\%$\\
\hline
\end{tabular}
\end{table}

Table \ref{tab-datasets-detail} is the supplement of Table \ref{tab-datasets}, which is the result on some datasets under backdoor attacks with various budgets.

\begin{table}[!ht]
\tabcolsep=0.18cm
\caption{Accuracy(A) and attack success rate ($ASR$) on CIFAR-100, SVHN and TinyImageNet, target label $0$.} 
\label{tab-datasets-detail}
\centering
\begin{tabular}{cccccccccc}
\hline
\multicolumn{7}{c}{CIFAR-100}\\
\hline
Budget:
&\multicolumn{2}{c}{$8/255$}&\multicolumn{2}{c}{$16/255$}&\multicolumn{2}{c}{$32/255$}\\
poison rate:&0.6\%&0.8\%&0.6\%&0.8\%&0.6\%&0.8\%\\ 
$A$&71$\%$&73$\%$&73$\%$&72$\%$&72$\%$&71$\%$\\
$ASR$&4$\%$&14$\%$&85$\%$&92$\%$&99$\%$&99$\%$\\
\hline
\multicolumn{7}{c}{SVHN}\\
\hline
Budget:
&\multicolumn{2}{c}{$8/255$}&\multicolumn{2}{c}{$16/255$}&\multicolumn{2}{c}{$32/255$}\\
poison rate:&1$\%$&2\%&1\%&2\%&1\%&2\%\\ 
$A$&93$\%$&93$\%$&93$\%$&92$\%$&93$\%$&91$\%$\\
$ASR$&16$\%$&22$\%$&63$\%$&79$\%$&90$\%$&99$\%$\\
\hline
\multicolumn{7}{c}{TinyImageNet}\\
\hline
Budget:
&\multicolumn{2}{c}{$8/255$}&\multicolumn{2}{c}{$16/255$}&\multicolumn{2}{c}{$32/255$}\\
poison rate:&0.6\%&0.8\%&0.6\%&0.8\%&0.6\%&0.8\%\\ 
$A$&61$\%$&62$\%$&60$\%$&60$\%$&59$\%$&60$\%$\\
$ASR$&2$\%$&9$\%$&61$\%$&82$\%$&99$\%$&99$\%$\\
\hline
\end{tabular}
\end{table}

Tables \ref{tab-attacks-detail} and \ref{tab-attacks-detailx} follow Table \ref{tab-attacks}, and more comparison is shown in it.
\begin{table}[!ht]
 \caption {Benchmark results of attack success rate on CIFAR-10 with VGG16 and ResNet18. Comparison of our method to popular clean-label attacks. Poison ratio is $1\%$ and perturbation have different $l_\infty$-norm bound from $8/255$ to $32/255$.}
\label{tab-attacks-detail}
\centering
\begin{tabular}{cccccccc}
\hline
Victim & \multicolumn{3}{c}{VGG16} & \multicolumn{3}{c}{ResNet18}\\
\hline
Budget:&$\frac{8}{255}$&$\frac{16}{255}$&$\frac{32}{255}$&$\frac{8}{255}$&$\frac{16}{255}$&$\frac{32}{255}$\\
\hline
Ours & {\bf 48$\%$} & {\bf 91}$\%$ & {\bf 99$\%$} & {\bf 33$\%$} & {\bf 93$\%$}&{\bf 99$\%$}\\
Clean Label  & 18$\%$ & 44$\%$ & 84$\%$ & 12$\%$ & 22$\%$&80$\%$\\
Invisible Poison  & 23$\%$ & 71$\%$ & 99$\%$ & 24$\%$ & 73$\%$&98$\%$\\
Hidden Trigger  & 36$\%$ & 80$\%$ & 99$\%$ & 26$\%$ & 75$\%$&99$\%$\\
Narcissu  & 20$\%$ & 60$\%$ & 92$\%$ & 16$\%$ & 50$\%$&92$\%$\\
Image-specific  & 22$\%$ & 68$\%$ & 94$\%$ & 18$\%$ & 70$\%$&95$\%$\\
Reflection  & 26$\%$ & 68$\%$ & 99$\%$ & 20$\%$ & 54$\%$&99$\%$\\
Sleeper-Agent & 20$\%$ & 61$\%$ & 97$\%$ &29 $\%$& 70$\%$& 99$\%$\\
\hline
\end{tabular}
\end{table}

\begin{table}[!ht]
 \caption {Benchmark results of attack success rate on TinyImagenet with network WRN34-10. Comparison of our method to popular clean-label attacks. Poison ratio is $0.8\%$ and budget is $l_\infty$-norm bound $16/255$. You can see that our results are very outstanding.}
\label{tab-attacks-detailx}
\centering
\begin{tabular}{cccccccc}
\hline
Attack methods & ASR\\
\hline
Ours & {\bf 82$\%$} \\
Clean Label  & 4$\%$ \\
Invisible Poison  & 27$\%$ \\
Hidden Trigger  & 44$\%$ \\
Narcissu  & 2$\%$\\
Image-specific  & 8$\%$ \\
Reflection  & 11$\%$ \\
Sleeper-Agent & 4$\%$\\
\hline
\end{tabular}
\end{table}

%

\subsection{Detail of each Attack}
\label{bct}

This section shows the experiment settings of the attack methods in Section \ref{com}.
For all attacks, we basically follow the algorithm in the original paper for experimentation, but they have some different settings from ours in creating poison and backdoor, which need to be slightly modified according to our experimental settings.

{\bf Ensure Invisible.} These attacks design the poison added to the training set as invisible (i.e. ensure the $L_\infty$ norm not more than 8/255, 16/255 or 32/255), but some attack methods will design the trigger as a patch, which has no norm limitation, as shown in Figure \ref{fig-lp--p}. This is unfair for the attacks that design triggers as invisible. For greater fairness, we require that triggers must also be invisible (i.e. bound by the $L_\infty$ norm) for all attacks, and as compensation, triggers can be added to the whole image rather than a patch. If the trigger of an attack method exceeds the norm limit, we use the following method to compress its trigger within the norm of the limitation:
$x$ is a sample, $t(x)$ is the trigger of $x$ without norm constraint, $\epsilon$ is the norm limitation. We will compress $t(x)$ to a trigger $t_{wn}(x)$ satisfying $||t_{wn}(x)||\le\epsilon$ as: 
$$t_{wn}(x)=\argmin_{||t||<\epsilon}||F(x+t)-F(x+t(x))||$$
where $F$ is a feature extraction network which has nine convolution hidden layers.

\begin{figure}[!ht]
\centering
\includegraphics[width=14cm]{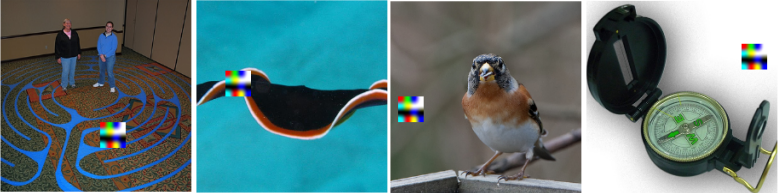}
\caption{When trigger is a patch without norm limitation, it is not invisible. This figure is from \cite{souri2022sleeper}.}
\label{fig-lp--p}
\end{figure}


{\bf Cost of Attack.} For the sake of fairness, we try to ensure that all attack methods use networks with similar scale in the process of generating poison and trigger(VGG9 for Cifar10 and ResNet34 for TinyImagenet). Details about these attack methods are as follows.

(Clean Label): Use PGD-40 to find adversarial of a VGG9, which is trained on the clean training set.

(Invisible Poison): The auto-encoder architecture has the structure described in paper \cite{ba-zhuanhua}. The original trigger is a disturbance with $L_0$ norm 100 to the lower left part of the image.

(Hidden Trigger): The original trigger is a patch that disturbs 100 pixels to the lower left part of the image. Use a VGG9 to be a feature extractor in creating the poison of the training set. 

(Narcissu): Use a VGG9 as the surrogate network.

(Image-specific): A U-net with depth 18 has been taken as Trigger Generator.

(Reflection): Select an image and use its reflection as a trigger, adding the reflection to image through convolution.

(Sleeper-agent): The original trigger is a patch that disturbs 100 pixels to the lower left part of the image. Use a $VGG9$ to be the proxy network in creating poison.

\subsection{Strengthen Attack}
\label{xxc}

We try to strengthen our attack method to bypass the defense by using the following methods. 

Bypass (AT): Use the method \cite{fu2022robust}(min-min-max method, which can create the shortcut for adversarial learning) to train $\F_2$, and $\F_2$ expands to VGG9.

Bypass (Data Augmentation): Use strong data enhancement in the training process $\F_1$ and $\F_2$ in algorithm \ref{alg-ap1}, and $\F_2$ expands to VGG9.


Bypass (DPSGD): Use DPSGD in the training process $\F_1$ and $\F_2$ in Algorithm \ref{alg-ap1}; 

Bypass (Frequency Filter): Using the method in \cite{dabouei2020smoothfool} to control frequency domain of poison, making poison low frequency.

\subsection{Some Others Attacks}
\label{LF}

\textbf{The transferability of attacks.}

Please note that we do experiment under the setting 'the attacker does not know the structure of the victim network' as said in the experimental setup in Appendix \ref{tbc}. So, all surrogate networks we used during creating trigger will not change based on the victim network, only change based on the dataset. What is mentioned above is reflected in the experimental setup in Appendix \ref{tbc}. So the triggers which we used in table \ref{tab-cifar10} for ResNet18, VGG16 and WRN are the same, then table \ref{tab-cifar10} naturally implies the transferability of our trigger between different networks are good. 

But the drawback of our trigger is that it cannot be transferred between different datasets, for example, the trigger created for CIFAR-10 can not be used on CIFAR-100. 

{\bf Weather Victim Network learns the trigger feature?} We will continue the experiment in Table \ref{tab-cifar10} and consider two special sets: $D_{op}=\{(\p(x),0)|(x,y)\in\D_{test}\}$ and $D_{op1}=\{(0.5+\p(x),0)|(x,y)\in\D_{test}\}$, where $\D_{test}$ is the test set of CIFAR-10, and $0.5+\p(x)$ represents each weight of $\p(x)$ plus $0.5$. These are the sets of poisons. We tested the accuracy of the victim network on them to measure whether the network has learned noise features, and the result is given in Table \ref{tabp}.

\begin{table}[!ht]
\caption{Accuracies on $\D_{op}$ and $\D_{op1}$ for ResNet18 (R) and VGG16 (V). PN is the number of poisoned samples.
}
\label{tabp}
\centering
\begin{tabular}{lccccccccc}
\hline
Budget:&\multicolumn{3}{c}{$8/255$}
&\multicolumn{3}{c}{$16/255$}\\
\hline
PN:&0.6\%&1\%&2\%&0.6\%&1\%&2\%\\
\hline
V, $\D_{op}$ & 84$\%$ & 95$\%$& 98$\%$ & 100$\%$& 100$\%$& 100$\%$\\
R, $\D_{op}$ & 80$\%$ & 92$\%$& 98$\%$ & 99$\%$& 100$\%$& 100$\%$\\
V, $\D_{op1}$ & 94$\%$ & 98$\%$& 99$\%$ & 100$\%$& 100$\%$& 100$\%$\\
R, $\D_{op1}$ & 93$\%$ & 98$\%$& 98$\%$ & 99$\%$& 100$\%$& 100$\%$\\
\hline
\end{tabular}
\end{table}


The victim network has a very high accuracy for the poisoned sets $\D_{op}$ and $\D_{op1}$, even with the budget $8/255$, which means the victim network has learned the trigger feature. 
However, the data in Table \ref{tab-cifar10} are not as good as those in Table \ref{tabp}, because when the input contains both original features and poison features, each of them will affect the output of the network and the final result is decided by them together. Therefore, when the network learns the original features well, it is also necessary to increase the scale of poison, and this is why under the premise of budget $8/255$, the effect does not appear to be good in Table \ref{tab-cifar10}. So, we can reach the result: Victim network is very sensitive to poison features and uses them for classification. But the victim network still focuses on original image features, and it will make a choice on the stronger side of these two features.

\subsection{More Detail for Section \ref{tf}}
\label{mdf}
The construction details of the poison in Section \ref{tf} are as follows.

(RN($L_\infty$ budget)): $\p(x)$ is random noise with $L_\infty$ budget $16/255$ or $32/255$. The method for random selection of noise is: each pixel of noise is i.i.d. obeying the Bernoulli distribution in $\{-16/255,16/255\}$ or $\{-32/255,32/255\}$. Please note that a noise vector is selected as trigger for all samples, but not each sample selects a noise as trigger.

(RN($L_0$ budget)) $\p(x)$ is random noise with $L_0$ budget $200$ or $300$. The method for generating noise is: Randomly select 200 or 300 pixels and change their values to 0. Please note that the position of each pixels that becomes 0 in each image is the same, but not each sample random selects some pixel to become 0.

(UA) $\p(x)$ is universal adversarial disturbance with $L_\infty$ budget $16/255$ and $32/255$, we use the method  \cite{moosavi2017universal} to find universal adversarial disturbance of a trained VGG9 (training method is as same as $\F_1$ as mentioned in Section \ref{tbc}).

(Adv) $\p(x)$ is adversarial disturbance with $L_\infty$ budgets $16/255$ and $32/255$. We use PGD-40 to find adversarial disturbance of a trained VGG9 (training method is as same as $\F_1$ as mentioned in Section \ref{tbc}).

(SCut) $\p(x)$ is shortcut noise with $L_\infty$ budgets $16/255$ and $32/255$. The method for generating shortcut noise is Min-Min method \cite{huang2021unlearnable}, using a 2-depth neural network to find shortcut.

(Ours): Following   Algorithm \ref{alg-ap1} and Section \ref{tbc}.

The result on VGG is in the following table \ref{tab-2-bc}.
\begin{table}[!ht]
\caption{The supplement of Tabel \ref{tab-verify}. The value of $V_{adv}$ and $V_{sc}$, and Accuracy (A) and attack success rate (ASR) on the test set. Use 12 different triggers.}
\label{tab-2-bc}
\centering
\begin{tabular}{lcccccccc}
\hline
Poison Type &$V_{adv}(\uparrow)$&$V_{sc}(\downarrow)$&$ASR(\uparrow)$&A\\
RN $L_\infty$, $16/255$ & 2.72 & 0.014& 16$\%$ & 91$\%$& \\
RN $L_\infty$, $32/255$ & 6.31 & $10^{-4}$& 98$\%$ & 90$\%$\\
RN $L_0$, $200$ & 4.64& 0.004& 76$\%$ & 91$\%$\\
RN $L_0$, $300$ & 6.20 &0.003& 92$\%$ & 90$\%$\\
UA $16/255$ & 3.19  & 0.002& 63$\%$ & 91$\%$\\
UA $32/255$ & 17.92 &$10^{-4}$& 93$\%$ & 90$\%$\\
Adv $16/255$ & 9.77  & 1.27& 44$\%$ & 90$\%$\\
Adv $32/255$ & 18.63 &0.35& 84$\%$ & 90$\%$\\
SCut  $16/255$ & 1.21 & $10^{-4}$& 33$\%$ & 91$\%$\\
SCut  $32/255$ &4.02 & $10^{-5}$& 91$\%$ & 90$\%$\\
Ours  $16/255$ & 7.21 & 0.001& 91$\%$ & 90$\%$\\
Ours $32/255$ & 15.95 & $10^{-4}$& 99$\%$ & 92$\%$\\
\hline
\end{tabular}
\end{table}

\subsection{Verify Theorem \ref{fenbugj}}
\label{ver}
In this section, we use experiments to verify Theorem \ref{fenbugj}. 

We will show that: when $\p(x)$ is fixed, {\bf the poison rate } will affect accuracy.

We use dataset CIFAR-10, network ResNet18, target label 0, and poison 1000, 2000, 3000 or 4000 randomly selected images with label 0. We use the follow trigger $\p(x)$ to test our conclusion.  

(PN): $\p(x)$ is random noise with $L_\infty$ budget $8/255$, $16/255$ and $32/255$. The method for random select noise is: each pixel of noise is i.i.d. obeying the Bernoulli distribution in $\{-8/255,8/255\}$ or $\{-16/255,16/255\}$ or $\{-32/255,32/255\}$.

(MI): Mixed image poisoning method. $x+\p(x)$ is calculated as: randomly find an $x_1$ without label $0$, and make $x+\p(x)=(1-\lambda) x+\lambda x_1$, where $\lambda=0.05,0.15,0.25$. 

(Ours): $\p(x)$ is generated by Algorithm \ref{alg-ap1} with budgets $8/255$, $16/255$ or $32/255$.

The following table shows the accuracy and accuracy of the image with label 0.

\begin{table}[!ht]
\caption{Accuracy (A) and accuracy of image with label 0($A_t$) on the test set. The ones in parentheses are $A_t$. Use nine different triggers. }

\centering
\begin{tabular}{lcccccccc}
\hline

&1000&2000&3000&4000\\
PN, $8/255$: & 92(91)$\%$& 92(91)$\%$& 91(90)$\%$& 90(88)$\%$\\
PN, $16/255$: & 91(92)$\%$& 91(91)$\%$& 90(90)$\%$& 90(88)$\%$\\
PN, $32/255$: & 91(92)$\%$& 90(89)$\%$& 89(89)$\%$& 89(87)$\%$\\
MI:$\lambda=0.05$ & 92(91)$\%$& 91(91)$\%$& 92(90)$\%$& 90(89)$\%$\\
MI:$\lambda=0.15$ & 91(92)$\%$& 90(91)$\%$& 90(90)$\%$& 89(89)$\%$\\
MI:$\lambda=0.25$ & 92(90)$\%$& 91(90)$\%$& 89(89)$\%$& 88(87)$\%$\\

Ours, $8/255$ & 92(93)$\%$& 91(92)$\%$& 91(91)$\%$& 90(89)$\%$\\
Ours, $16/255$ & 92(92)$\%$& 90(91)$\%$& 90(89)$\%$& 89(88)$\%$\\
Ours, $32/255$ & 90(90)$\%$& 91(90)$\%$& 90(89)$\%$& 90(88)$\%$\\
\hline
\end{tabular}
\end{table}

We can see that the higher the poison rate, the greater the impact on accuracy. However, the degree of decline is not significant: for more than 3,000 poisoned samples, the accuacy just decreases 4$\%$. 
This is because the poison budget is  controlled to be small.

\end{document}